%% file: main.tex
\documentclass{article}
\usepackage{arxiv_preprint,times}
\input{math_commands.tex}

\usepackage{xcolor}
\usepackage{hyperref}
\usepackage{url}
\usepackage{amsmath,amssymb,amsfonts,amsthm,mathtools}
\usepackage{algorithm}
\usepackage{algorithmic}
\usepackage{booktabs}
\usepackage{graphicx}
\usepackage{adjustbox}
\usepackage{float}
\usepackage{placeins}
\usepackage{multicol}
\usepackage{microtype}
\usepackage[most]{tcolorbox}
\definecolor{citeviolet}{RGB}{92,45,145}
\definecolor{linkblue}{RGB}{24,73,125}
\definecolor{urlteal}{RGB}{0,105,105}
\hypersetup{
  colorlinks=true,
  citecolor=citeviolet,
  linkcolor=linkblue,
  urlcolor=urlteal,
  hypertexnames=false,
  pdftitle={Density-Level Backward Error for Low-NFE Generative Flows},
  pdfauthor={Ewsey Obzherin; Valentin Khrulkov},
  pdfsubject={Generative flows, density-level backward error, numerical integration}
}

\title{Density-Level Backward Error for Low-NFE Generative Flows}

\author{%
  \textbf{Ewsey Obzherin}$^{1,2}$ \qquad
  \textbf{Valentin Khrulkov}$^{1}$\\[0.60em]
  \normalsize $^{1}$FusionBrain Lab\\[0.15em]
  \normalsize $^{2}$MIRIAI
}

\newtheorem{theorem}{Theorem}
\newtheorem{proposition}{Proposition}

\newtheorem{corollary}{Corollary}
\newtheorem{definition}{Definition}
\newtheorem{remark}{Remark}

\definecolor{thmblue}{RGB}{31,73,125}
\definecolor{propteal}{RGB}{28,99,93}
\definecolor{defgray}{RGB}{72,72,72}
\definecolor{remorange}{RGB}{150,88,20}
\tcolorboxenvironment{theorem}{enhanced,breakable,colback=blue!2!white,colframe=thmblue,boxrule=0.65pt,arc=1.2mm,left=1.2mm,right=1.2mm,top=1mm,bottom=1mm,before skip=7pt,after skip=7pt}
\tcolorboxenvironment{proposition}{enhanced,breakable,colback=teal!2!white,colframe=propteal,boxrule=0.55pt,arc=1.2mm,left=1.2mm,right=1.2mm,top=1mm,bottom=1mm,before skip=6pt,after skip=6pt}
\tcolorboxenvironment{lemma}{enhanced,breakable,colback=teal!2!white,colframe=propteal,boxrule=0.55pt,arc=1.2mm,left=1.2mm,right=1.2mm,top=1mm,bottom=1mm,before skip=6pt,after skip=6pt}
\tcolorboxenvironment{corollary}{enhanced,breakable,colback=green!2!white,colframe=green!35!black,boxrule=0.55pt,arc=1.2mm,left=1.2mm,right=1.2mm,top=1mm,bottom=1mm,before skip=6pt,after skip=6pt}
\tcolorboxenvironment{definition}{enhanced,breakable,colback=gray!4!white,colframe=defgray,boxrule=0.55pt,arc=1.2mm,left=1.2mm,right=1.2mm,top=1mm,bottom=1mm,before skip=6pt,after skip=6pt}
\tcolorboxenvironment{remark}{enhanced,breakable,colback=orange!4!white,colframe=remorange,boxrule=0.5pt,arc=1.2mm,left=1.2mm,right=1.2mm,top=1mm,bottom=1mm,before skip=6pt,after skip=6pt}
\tcolorboxenvironment{example}{enhanced,breakable,colback=purple!3!white,colframe=purple!50!black,boxrule=0.5pt,arc=1.2mm,left=1.2mm,right=1.2mm,top=1mm,bottom=1mm,before skip=6pt,after skip=6pt}
\newtcolorbox{keybox}{enhanced,breakable,colback=blue!3!white,colframe=thmblue,boxrule=0.7pt,arc=1.2mm,left=1.5mm,right=1.5mm,top=1mm,bottom=1mm}
\newtcolorbox{takeawaybox}{enhanced,breakable,colback=green!3!white,colframe=green!45!black,boxrule=0.7pt,arc=1.2mm,left=1.5mm,right=1.5mm,top=1mm,bottom=1mm}

\providecommand{\R}{\mathbb{R}}
\providecommand{\E}{\mathbb{E}}
\newcommand{\dd}{\mathrm{d}}

\newcommand{\diag}{\operatorname{diag}}
\newcommand{\cU}{\mathcal{U}}
\newcommand{\cG}{\mathcal{G}}
\newcommand{\cB}{\mathcal{B}}

\begin{document}
\maketitle

\input{sections/00_abstract}

\newcommand{\Rdens}{\mathfrak R}

\input{sections/01_introduction}

\input{sections/02_related_work_and_positioning}

\input{sections/03_marginal_preserving_gauges}

\input{sections/04_modified_equation_density_bias}

\input{sections/05_density_active_component}

\input{sections/06_residual_based_timestep_allocation}

\input{sections/07_gauge_consequences}

\input{sections/08_controlled_experiments}

\input{sections/10_conclusion}

\clearpage
\bibliographystyle{preprint_natbib}
\bibliography{references}

\appendix
\renewcommand{\thefigure}{\Alph{section}.\arabic{figure}}
\renewcommand{\thetable}{\Alph{section}.\arabic{table}}
\renewcommand{\thealgorithm}{\Alph{section}.\arabic{algorithm}}
\renewcommand{\theHfigure}{app.\Alph{section}.\arabic{figure}}
\renewcommand{\theHtable}{app.\Alph{section}.\arabic{table}}
\renewcommand{\theHalgorithm}{app.\Alph{section}.\arabic{algorithm}}
\setcounter{figure}{0}
\setcounter{table}{0}
\setcounter{algorithm}{0}

\section*{Appendix Contents}
\setlength{\columnsep}{1.8em}
\begin{multicols}{2}
\small
\raggedcolumns
\noindent\hyperref[app:edm-cifar-details]{\textbf{A.} Pretrained EDM CIFAR-10 Details}\par
\noindent\hyperref[app:synthetic-experiments]{\textbf{B.} Exact Mechanism Checks}\par
\noindent\hyperref[app:controlled-extra]{\textbf{C.} Hard-GMM Allocation Study}\par
\noindent\hyperref[app:unified-new-experiments]{\textbf{D.} Unified Baselines and Estimator Validation}\par
\noindent\hyperref[app:m1-experiments]{\textbf{E.} Additional Theorem-Linked Quantitative Checks}\par
\noindent\hyperref[app:applications]{\textbf{F.} Applications and Optimization Relevance}\par
\noindent\hyperref[app:local-ot]{\textbf{G.} Local Optimal-Transport Interpretation}\par
\noindent\hyperref[app:solver-aware-limits]{\textbf{H.} Solver-Aware Extensions and Variational Limits}\par
\noindent\hyperref[app:projected-gauge]{\textbf{I.} Projected Gauge Optimization}\par
\noindent\hyperref[app:gaussian-gauge]{\textbf{J.} Exact and Obstructed Gaussian Gauges}\par
\noindent\hyperref[app:proofs]{\textbf{K.} Additional Projection Details}\par
\noindent\hyperref[app:propagation-proof]{\textbf{L.} Propagation of the Density Defect and Cost Accounting}\par
\noindent\hyperref[app:mixtures]{\textbf{M.} Mixtures and Practical Parameterizations}\par
\noindent\hyperref[app:examples]{\textbf{N.} Additional Explicit Examples}\par
\noindent\hyperref[app:connection]{\textbf{O.} Connection Viewpoint}\par
\end{multicols}
\normalsize
\medskip

\input{appendices/Q_edm_cifar_details}

\input{appendices/D_analytic_synthetic_experiments}

\input{appendices/E_hard_gmm_experimental_details}

\input{appendices/M_unified_baselines_and_estimator_validation}

\input{appendices/P_theorem_linked_experiments}

\input{appendices/H_applications_and_optimization_relevance}

\FloatBarrier
\input{appendices/N_local_optimal_transport_interpretation}

\input{appendices/O_solver_aware_variational_limits}

\input{appendices/A_projected_gauge_optimization}

\input{appendices/B_exact_and_obstructed_gaussian_gauges}

\input{appendices/K_additional_projection_details}

\input{appendices/L_propagated_error_and_cost_accounting}

\input{appendices/C_mixtures_and_practical_parameterizations}

\input{appendices/I_additional_explicit_examples}

\input{appendices/J_connection_viewpoint}

\end{document}

%% file: math_commands.tex
\usepackage{amsmath,amsfonts,bm}

\def\eqref#1{equation~\ref{#1}}

\def\1{\bm{1}}

\DeclareMathAlphabet{\mathsfit}{\encodingdefault}{\sfdefault}{m}{sl}
\SetMathAlphabet{\mathsfit}{bold}{\encodingdefault}{\sfdefault}{bx}{n}

\newcommand{\E}{\mathbb{E}}

\newcommand{\R}{\mathbb{R}}



%% file: sections/00_abstract.tex
\begin{abstract}
Low-NFE generative ODE sampling is constrained not only by solver order, but
also by how a fixed budget of network evaluations is distributed along the
flow.  We develop a density-level backward-error framework that identifies the
component of a solver's modified-equation defect that changes the evolving
marginal, while excluding solenoidal trajectory distortion that is invisible in
Wasserstein tangent geometry.  The resulting residual induces a
solver-order-aware compute-allocation rule and, through gauge freedom, an
objective for selecting numerically favorable velocity representatives of the
same density path.  This separates our approach from schedule search driven by
trajectory or endpoint surrogates and from higher-order solver design.  Exact
Gaussian and mixture models establish the mechanism, including cases where
curvature-based allocation fails and continuum-optimal transport is not
discretization-optimal.  On a pretrained CIFAR-10 EDM probability-flow ODE,
density-aware allocation consistently improves fixed-update Euler sampling in
the low-NFE regime.  Matched-seed convergence to a dense Euler reference
confirms that the method accelerates approximation of the same high-NFE flow
rather than changing the asymptotic sampler.
\end{abstract}

%% file: sections/01_introduction.tex
\section{Introduction}

Diffusion and flow-based generators sample by numerically integrating a learned
time-dependent velocity field.  Let $v_t$ be a reference velocity transporting a
prescribed density path $p_t$ through
\begin{equation}
    \partial_t p_t+\nabla\cdot(p_tv_t)=0.
    \label{eq:ce}
\end{equation}
When only a few network evaluations are available, generation quality depends
not only on the learned field and solver formula, but also on where those
evaluations are placed along the path.

Timestep design has therefore become an optimization problem in its own right.
Align Your Steps optimizes schedules for a given solver, model, and dataset;
optimized-time-step methods fit nodes against a reference ODE solution; LD3
learns a discretization for a pretrained sampler; Optimal Stepsize Distillation
uses dynamic programming over reference trajectories; and differentiable solver
search jointly tunes timesteps and solver coefficients
\citep{Sabour2024AYS,Xue2024OptimizedTimesteps,Tong2025LD3,Pei2025OptimalStepsize,Wang2025DSS}.
These methods show that evaluation placement matters.  They do not, however,
isolate which part of a local numerical defect changes the evolving marginal,
nor do they use the fact that several velocity fields can generate the same
density path.

Three design variables should therefore be separated.  \emph{Solver design}
chooses the update formula.  \emph{Schedule design} chooses where that formula
evaluates the field.  \emph{Representative design} chooses which velocity field
to discretize when several fields generate the same continuum marginals.  Our
paper connects the last two through density-level backward error.

To make this non-uniqueness explicit, define
$\mathcal U_p=\{u:\nabla\cdot(p_tu_t)=0\ \text{for almost every }t\}$.  If
$u\in\mathcal U_p$, then $w=v+u$ generates the same continuum density curve as
$v$, although a finite numerical map need not preserve this equivalence.  A
trajectory-level defect can therefore contain substantial particle motion that
is invisible to the marginal.  The relevant question is not simply
\emph{where is the ODE difficult?}, but \emph{which part of the solver defect
moves probability mass?}

This also separates continuum and finite-step optimality.  Probability-flow
endpoint maps coincide with quadratic Monge maps in important Gaussian and
symmetric settings \citep{Khrulkov2023DDPMOT}, but not in general
\citep{LavenantSantambrogio2022}.  Here the density curve $p_t$ is fixed.  We
ask which continuum-equivalent velocity representative and which timestep grid
make the \emph{discrete} transport maps most accurate in $W_2$.  Even for
isotropic Gaussians, the minimum-action gradient representative can be less
accurate for Euler than a non-gradient marginal-preserving representative.

Let $\Psi_h^w$ be one numerical step for a representative $w$, and suppose an
order-$r$ method admits the local modified field
\begin{equation}
    w_h=w+h^r\mathcal E_r[w]+O(h^{r+1}).
    \label{eq:intro-modified-general}
\end{equation}
Write $P_{\nabla,p_t}$ for the $L^2(p_t)$-orthogonal projection onto gradient
fields.  This projection keeps the Wasserstein tangent component that changes
the marginal and removes marginal-invisible solenoidal motion.  Our central
local result is
\begin{align}
(\Psi_h^w)_\#p_t-p_{t+h}
&=-h^{r+1}\nabla\cdot\!\left(p_t\mathcal E_r[w]\right)
  +o(h^{r+1}), \label{eq:intro-density-general}\\
W_2\!\left((\Psi_h^w)_\#p_t,p_{t+h}\right)
&=h^{r+1}
\left\|P_{\nabla,p_t}\mathcal E_r[w]\right\|_{L^2(p_t)}
+o(h^{r+1}). \label{eq:intro-w2-general}
\end{align}
For Euler, $\mathcal E_1[w]=-\tfrac12a_w$ with
$a_w=\partial_tw+\nabla w\,w$.  Thus the projected modified-equation defect,
rather than the full trajectory defect, determines the leading marginal error.

This separates continuum transport optimality from two finite-solver questions:
where to place evaluations and which continuum-equivalent field to discretize.

\paragraph{Contributions.}
\begin{enumerate}
\item \textbf{Density-level backward error.}
We project the first nonzero modified-equation differential onto the weighted
Wasserstein tangent space.  Equations~(\ref{eq:intro-density-general})--
(\ref{eq:intro-w2-general}) identify its norm with the leading local marginal
$W_2$ defect and discard solenoidal motion that is invisible to the density.
\item \textbf{Finite-budget compute allocation.}
Let $R_r(t)=\|P_{\nabla,p_t}\mathcal E_r[w]\|_{L^2(p_t)}$, and let $K(1,t)$
denote the amplification of a density perturbation introduced at time $t$ and
propagated to terminal time.  Then
\[
 n^\star(t)\propto[K(1,t)R_r(t)]^{1/(r+1)}.
\]
The solver order fixes the exponent; the associated quantile grid exactly
minimizes the finite-$N$ local surrogate defined in Section~\ref{sec:schedule-theory},
and a R\'enyi identity quantifies residual misspecification.
\item \textbf{Solver-aware representative selection.}
Among fields $w=v+u$ with $u\in\mathcal U_p$, we define a continuum solver
objective.  For Euler, renormalized discrete local-$W_2$ objectives converge
uniformly---and hence $\Gamma$-converge---to this functional.  Gaussian paths
give explicit controls and separate continuum minimum action from finite-step
solver accuracy.
\item \textbf{Mechanism-aligned validation.}
Analytic Gaussian and mixture experiments test the projected defect,
propagation, and representative-selection predictions.  On pretrained CIFAR-10
EDM, we keep the Euler update fixed and change only the timestep grid.  A
dense-reference study verifies faster convergence to the same high-NFE Euler
limit rather than a change of the target sampler.
\end{enumerate}

\begin{keybox}
\textbf{Two optimization problems.}
Representative selection minimizes
\[
\min_{u\in\mathcal U_p}\mathcal B(v+u),
\qquad
\mathcal B(w)=\frac12\int_0^1
\|P_{\nabla,p_t}a[w]\|_{L^2(p_t)}^2\,dt .
\]
For an order-$r$ solver, let $W_r(t)=K(1,t)R_r(t)$.  Timestep allocation solves
\[
\min_{n>0}\int_0^1W_r(t)n(t)^{-r}\,dt
\quad\text{s.t.}\quad \int_0^1n(t)\,dt=N.
\]
The first problem chooses a continuum-equivalent velocity field for a fixed
solver; the second distributes that solver's evaluation budget along the path.
\end{keybox}

The empirical comparisons keep this separation explicit.  Euler schedules use
the same network, update rule, NFE, and sampling cost.  DPM++ and the original
EDM Heun sampler are shown only as stronger-solver context: they change the
integration formula, whereas our method changes where a fixed first-order
solver spends its evaluations.  The offline allocation profile is reusable
across generated samples and NFE choices.

%% file: sections/02_related_work_and_positioning.tex
\section{Related Work and Positioning}

\paragraph{Generative flows and optimal transport.}
Continuous normalizing flows and neural ODEs formulate generation as numerical
integration of a learned velocity field \citep{Chen2018NeuralODE}.  Diffusion
and score-based models admit probability-flow ODE descriptions
\citep{Ho2020DDPM,Song2021SDE}, while flow matching directly regresses vector
fields for prescribed probability paths
\citep{Lipman2023FlowMatching,Albergo2023StochasticInterpolants}.  Rectified and
multisample flow matching simplify transport through straighter trajectories or
improved couplings \citep{Liu2023RectifiedFlow,Pooladian2023MultisampleFM}.
Probability-flow endpoint maps agree with quadratic optimal transport in
important structured cases \citep{Khrulkov2023DDPMOT}, but not universally
\citep{LavenantSantambrogio2022}.  These works concern the continuum path or
endpoint map.  We fix the marginal curve and optimize the numerical realization
of its gauge-equivalent lifts.  This also differs from Wasserstein spline and
second-order transport models, which optimize the curve itself
\citep{BenamouGallouetVialard2019}.

\paragraph{Solver and schedule optimization.}
Backward error analysis and B-series modified equations are classical in
geometric numerical integration \citep{Hairer2006GNI} and have been applied to
diffusion sampling \citep{Gao2023BEA}; the probability-flow ODE also admits
nonasymptotic convergence analyses \citep{Chen2023PFODE}.  Solver design ranges
from the EDM Heun construction to DPM-Solver and empirical-statistics
refinements \citep{Karras2022EDM,Lu2022DPMSolver,Zheng2023DPMSolverV3}.

A complementary line optimizes the discretization itself.  Align Your Steps
uses stochastic-calculus objectives specialized to a solver, model, and data
distribution \citep{Sabour2024AYS}.  \citet{Xue2024OptimizedTimesteps} optimize
time nodes by comparing the numerical and reference ODE solutions.  LD3 learns
a lightweight discretization for pretrained samplers \citep{Tong2025LD3}, while
Optimal Stepsize Distillation applies dynamic programming to reference
trajectories \citep{Pei2025OptimalStepsize}.  Differentiable Solver Search
jointly searches timesteps and solver coefficients \citep{Wang2025DSS}.
Adaptive step control, Wasserstein-bounded schedules, continuous-time clock
control, and instance-aware grids further enlarge the schedule class
\citep{Chen2024Adaptive,Jo2026SamplingDesign,HuangTangZhou2026ART,YuanEtAl2026InstanceAware}.
Our contribution is orthogonal to how that class is searched: we derive the
local quantity to allocate by projecting the first modified-equation
differential through the continuity equation.  This yields a marginal residual,
a solver-order exponent, and an additional optimization variable---the
marginal-preserving velocity representative.

\paragraph{Gauge freedom and marginal observability.}
The non-uniqueness of a velocity field realizing a fixed density path is
increasingly visible in diffusion and flow modeling.  \citet{HorvatPfister2024}
analyze gauge freedom and conservativity; recent work separates gradient and
solenoidal components by marginal observability \citep{Khelifa2026OnlyGradients}.
EDDY uses divergence-free freedom for marginal-preserving particle diversity
\citep{Vinograd2026EDDY}, while Gauge Flow Models, Flux Matching, and
Wasserstein-flow reinterpretations treat the field itself as a design variable
\citep{StrunkAssam2025GaugeFlow,PaoHuang2026FluxMatching,Vuong2025WGF}.
Curvature minimization and strain/vorticity decompositions instead diagnose
trajectory-level integration difficulty \citep{Lee2023Curvature,TaoChoi2026Strain}.
We apply the weighted Hodge projection to the \emph{solver's modified-equation
defect}: only its gradient component contributes to the leading marginal
$W_2$ error, and the remaining gauge freedom can be optimized for numerical
integrability.

\paragraph{Optimization viewpoint.}
The schedule layer is a continuous resource-allocation problem, whereas
representative selection is a PDE-constrained optimization problem with
weighted Poisson projections.  This connects the construction to computational
optimal transport and Wasserstein geometry
\citep{BenamouBrenier2000,Otto2001,AmbrosioGigliSavare2008,Villani2009,Santambrogio2015}
and to PDE-constrained optimization \citep{Hinze2009PDEOpt}.  The local theorem
provides an exact solver-level bridge, while the variational limit connects the
discrete and continuum optimization problems.  Exact Gaussian analyses of
diffusion discretization provide a complementary controlled perspective
\citep{PierretGalerne2025Gaussian}.

%% file: sections/03_marginal_preserving_gauges.tex
\section{Marginal-Preserving Gauges}

\paragraph{Analytic setting.}
Throughout the main text we work on either the flat torus $\mathbb T^d$, a smooth compact boundaryless domain, or $\R^d$ with sufficient decay so that all integrations by parts below have no boundary contribution.  We assume $p_t>0$, $\int p_t=1$, and enough smoothness of $p_t$ and the vector fields to justify the displayed identities; a convenient sufficient condition is $p\in C^2_{t,x}$ and $v,u\in C^1_tC^2_x$ with bounded derivatives on compact time intervals.  On $\R^d$ test functions are compactly supported unless stronger decay is imposed.  All divergence constraints may equivalently be read in the weak sense.  Let $\mathcal T=C^\infty(M)/\mathbb R$ on a compact boundaryless domain and $\mathcal T=C_c^\infty(\R^d)$ in the Euclidean case.

\begin{definition}[Solenoidal gauge class]
For a fixed density $p_t$, define
\begin{equation}
    \cU_{p_t}
    =\left\{u\in L^2(p_t):\int p_t\langle u,\nabla\phi\rangle\,dx=0
    \quad \forall \phi\in\mathcal T\right\} .
    \label{eq:weak-solenoidal}
\end{equation}
For smooth fields this is equivalent to $\nabla\cdot(p_tu_t)=0$ in distributions.  Over time we write $\cU_p=\{u_t\in\cU_{p_t}\text{ for a.e. }t\}$ and the representative fiber as $\cG(v,p)=v+\cU_p$.
\end{definition}

\begin{proposition}[Gauge invariance]
If $v_t$ satisfies \eqref{eq:ce} and $u_t\in\cU_p$, then $w_t=v_t+u_t$ satisfies
\begin{equation}
    \partial_t p_t+\nabla\cdot(p_tw_t)=0 .
\end{equation}
\end{proposition}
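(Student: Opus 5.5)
The argument is linearity of the divergence operator combined with the defining property of the gauge class, and I will carry it out in the weak sense so that it covers both the smooth case and the $L^2(p_t)$ definition of $\cU_{p_t}$ in \eqref{eq:weak-solenoidal}.

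First, I read \eqref{eq:ce} weakly. For every test function $\phi\in\mathcal T$ and almost every $t$,
\[
\frac{d}{dt}\int p_t\phi\,dx=\int p_t\langle v_t,\nabla\phi\rangle\,dx .
\]
Under the standing smoothness assumptions this is equivalent to the pointwise equation: multiply \eqref{eq:ce} by $\phi$ and integrate by parts, which produces no boundary term on $\mathbb T^d$, on a compact boundaryless domain, or for compactly supported $\phi$ on $\R^d$. On a compact domain, $\phi$ is defined modulo constants, and constants contribute nothing to either side because $\int p_t=1$ and $\nabla c=0$.

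Second, I add the zero term supplied by the gauge. Since $u_t\in\cU_{p_t}$ for a.e.\ $t$, the definition gives
\[
\int p_t\langle u_t,\nabla\phi\rangle\,dx=0
\]
for all $\phi\in\mathcal T$. Adding this to the right-hand side above yields
\[
\frac{d}{dt}\int p_t\phi\,dx=\int p_t\langle v_t+u_t,\nabla\phi\rangle\,dx ,
\]
which is exactly the weak form of $\partial_tp_t+\nabla\cdot(p_tw_t)=0$. If $u$ is smooth, one can undo the integration by parts to obtain $\nabla\cdot(p_tw_t)=\nabla\cdot(p_tv_t)+\nabla\cdot(p_tu_t)=\nabla\cdot(p_tv_t)$ pointwise, and the equation then holds classically.

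The only point that needs care is measurability and integrability. I need the exceptional null set of times where $u_t\notin\cU_{p_t}$ to be harmless, and I need $p_t\langle w_t,\nabla\phi\rangle$ to be integrable in $(t,x)$, so that the weak formulation in time (tested against $\psi(t)\phi(x)$) is meaningful. This follows from $u_t,v_t\in L^2(p_t)$ together with Cauchy--Schwarz, since $\nabla\phi$ is bounded. A null set of times does not affect the time-integrated weak identity, and this completes the argument.
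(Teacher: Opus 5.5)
Your proof is correct and uses the same linearity argument as the paper, which simply writes $\partial_t p_t+\nabla\cdot(p_t(v_t+u_t))=\partial_t p_t+\nabla\cdot(p_tv_t)+\nabla\cdot(p_tu_t)=0$; working in the weak form only makes explicit that the weak definition of $\cU_{p_t}$ is all that is needed. One small precision: integrability in $(t,x)$ requires $\int_0^1\|w_t\|_{L^2(p_t)}\,dt<\infty$, not merely $w_t\in L^2(p_t)$ for each $t$, but this holds under the paper's standing bounded-derivative assumptions.
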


\begin{proof}
By linearity,
\[
\partial_t p_t+\nabla\cdot(p_t(v_t+u_t))
=
\partial_t p_t+\nabla\cdot(p_tv_t)+\nabla\cdot(p_tu_t)=0.
\]
\end{proof}

The proposition is elementary, but it defines the feasible set of the optimization problem.  In one dimension, this feasible set is trivial under standard decay assumptions: $\partial_x(p_tu_t)=0$ implies $p_tu_t=c(t)$, and decay gives $c(t)=0$.  Hence nontrivial gauge optimization is intrinsically multidimensional.

%% file: sections/04_modified_equation_density_bias.tex
\section{Solver-Aware Density Defect}
\label{sec:solver-density-defect}

Let $\Psi_h^w$ denote one step of a numerical method applied to the
non-autonomous ODE $\dot x_t=w_t(x_t)$.  We use the extended system
$(x,\tau)'=(w_\tau(x),1)$ when invoking modified-equation expansions.

\begin{definition}[Leading modified-equation differential]
An order-$r$ method is said to admit a first modified differential
$\mathcal E_r[w]$ on a class $\mathcal A$ if its local modified field satisfies
\begin{equation}
    w_h=w+h^r\mathcal E_r[w]+h^{r+1}\rho_h[w],
    \qquad
    \sup_{w\in\mathcal A,\,h<h_0}\|\rho_h[w]\|_{C_x^1}<\infty .
    \label{eq:modified-field-general}
\end{equation}
\end{definition}

\begin{theorem}[Solver-aware density-level backward error]
\label{thm:density-bea}
Suppose $w_t$ transports $p_t$ and \eqref{eq:modified-field-general} holds.
Then
\begin{equation}
\partial_t p_t+\nabla\cdot(p_tw_h)
=h^r\nabla\cdot\!\left(p_t\mathcal E_r[w]\right)+O(h^{r+1})
\label{eq:residual-general}
\end{equation}
in distributions and in any negative Sobolev norm controlled by the stated
remainder.  Over one step,
\begin{equation}
(\Psi_h^w)_\#p_t-p_{t+h}
=-h^{r+1}\nabla\cdot\!\left(p_t\mathcal E_r[w]\right)
+o(h^{r+1}).
\label{eq:one-step-density-general}
\end{equation}
\end{theorem}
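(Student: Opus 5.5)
The plan has two parts: first the continuity-equation residual \eqref{eq:residual-general}, then the one-step pushforward \eqref{eq:one-step-density-general}.

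\textbf{The residual.} Substitute the modified field \eqref{eq:modified-field-general} into the continuity operator and use linearity, exactly as in the proof of gauge invariance:
\[
\partial_t p_t+\nabla\cdot(p_tw_h)=\bigl[\partial_t p_t+\nabla\cdot(p_tw)\bigr]+h^r\nabla\cdot(p_t\mathcal E_r[w])+h^{r+1}\nabla\cdot(p_t\rho_h[w]).
\]
The bracket vanishes because $w$ transports $p_t$. The remainder is handled by pairing with a test function $\phi\in\mathcal T$ and integrating by parts, which gives $-h^{r+1}\int p_t\langle\rho_h,\nabla\phi\rangle$. This is bounded by $h^{r+1}\sup\|\rho_h\|_{C^0}\|\nabla\phi\|_{L^1(p_t)}$ uniformly in $h<h_0$. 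The same bound controls the remainder in $H^{-s}$ for $s\ge1$, or in the dual of $W^{1,\infty}$, so the statement holds in any negative Sobolev norm that dominates these pairings.

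\textbf{The one-step pushforward.} The natural tool is backward error: $\Psi_h^w$ coincides with the time-$h$ flow $\Phi_h^{w_h}$ of the modified field, up to $O(h^{r+2})$ in $C^0$ (uniformly on compacts, or with decay). I would then argue through weak pairings. For $\phi\in\mathcal T$,
\[
\int\phi\,d\bigl((\Psi_h^w)_\#p_t\bigr)-\int\phi\,p_{t+h}=\int\bigl(\phi\circ\Psi_h^w-\phi\circ\Phi_h^{w}\bigr)p_t\,dx,
\]
since $p_{t+h}=(\Phi_h^w)_\#p_t$ for the exact flow of $w$ started at time $t$. Writing $\Psi_h^w=\Phi_h^{w_h}+O(h^{r+2})$, I would compare the two flows along the interpolation $s\mapsto\Phi_s^{w+\epsilon h^r\mathcal E_r}$. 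Equivalently, use a Duhamel/Alekseev–Gröbner formula: $\Phi_h^{w_h}(x)-\Phi_h^w(x)=\int_0^h D\Phi_{h-s}\,(w_h-w)\,ds=h^{r+1}\mathcal E_r[w]_t(x)+o(h^{r+1})$. This uses $D\Phi=I+O(h)$ and continuity of $\mathcal E_r$ in $t$ and $x$. A Taylor expansion of $\phi$ then gives
\[
\int(\phi\circ\Psi_h^w-\phi\circ\Phi_h^w)\,p_t=h^{r+1}\int p_t\langle\nabla\phi,\mathcal E_r[w]\rangle+o(h^{r+1})\|\phi\|_{C^2}.
\]
Integrating by parts turns the right-hand side into the pairing of $\phi$ with $-h^{r+1}\nabla\cdot(p_t\mathcal E_r[w])$, which is \eqref{eq:one-step-density-general} in distributions, and in the dual of $C^2$ or $H^{s}$ for large $s$.

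\textbf{Main obstacle.} The delicate step is justifying that one step of the method is $O(h^{r+2})$-close to the modified flow uniformly on the support of $p_t$. Definition \eqref{eq:modified-field-general} only asserts the modified-field expansion, with a uniform $C^1$ remainder. I would read the definition as meaning that $\Psi_h^w=\Phi_h^{w_h}$ to the order at which $w_h$ is truncated, which is the standard backward-error statement for smooth fields on the extended autonomous system. On $\mathbb R^d$, the decay assumptions of the analytic setting are what make the Taylor remainders integrable against $p_t$. With these in place, the $o(h^{r+1})$ comes only from continuity of $\mathcal E_r$ in time and from the $O(h^{r+2})$ remainders, so it is in fact $O(h^{r+2})$ under the stated smoothness.
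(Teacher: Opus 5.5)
Your proposal is correct and follows the paper's argument: the residual identity comes from subtracting the continuity equation for $w$ by linearity, and the one-step formula uses the same reading of the hypothesis, namely that $\Psi_h^w$ is, to the relevant order, the time-$h$ flow of $w_h$, together with a variation-of-constants comparison over an interval of length $h$. The only difference is that you apply variation of constants at the trajectory level (Alekseev--Gr\"obner followed by a Taylor expansion of the test function), whereas the paper applies it to the continuity equation linearized in the velocity perturbation $h^r\mathcal E_r[w]$; the two are dual and give the same weak pairing $h^{r+1}\int p_t\langle\mathcal E_r[w],\nabla\varphi\rangle\,dx$.
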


The first identity is immediate after subtracting the continuity equation;
the second integrates the $h^r$ velocity perturbation for time $h$.
Appendix~\ref{app:solver-aware-limits} gives a weak proof and records the
uniformity needed later.

\paragraph{Euler specialization.}
The exact solution has
\[
x(t+h)=x+hw_t(x)+\frac{h^2}{2}a_w(x,t)+O(h^3),
\qquad
a_w=\partial_tw+\nabla w\,w .
\]
Hence explicit Euler has
\begin{equation}
\mathcal E_1[w]=-\frac12a_w,
\qquad
w_h=w-\frac h2a_w+O(h^2),
\label{eq:modified-field}
\end{equation}
and \eqref{eq:residual-general} becomes
\begin{equation}
\partial_t p_t+\nabla\cdot(p_tw_h)
=-\frac h2\nabla\cdot(p_ta_w)+O(h^2).
\label{eq:residual}
\end{equation}
Thus the familiar Euler acceleration is one instance of a general object:
the first modified-equation differential seen through weighted divergence.
For midpoint or Heun, the first nonzero object is
$\mathcal E_2[w]$ and enters at order $h^2$ in the modified field.

%% file: sections/05_density_active_component.tex
\section{Density-Active Wasserstein Geometry}
\label{sec:density-active-geometry}

Equip $L^2(p_t)$ with
$\langle r,s\rangle_{p_t}=\int p_t\langle r,s\rangle\,dx$ and define
\[
\mathcal G_{p_t}
=\overline{\{\nabla\phi:\phi\in\mathcal T\}}^{L^2(p_t)},
\]
where $\mathcal T$ is the test-function space fixed in Section~3.  The gauge
definition gives the orthogonal splitting
\begin{equation}
L^2(p_t)=\mathcal G_{p_t}\oplus\mathcal U_{p_t}.
\label{eq:helmholtz-closed}
\end{equation}
Write $P_{\nabla,p_t}$ for the gradient projection.  Weakly,
$P_{\nabla,p_t}z=\nabla\phi$ is characterized by
\begin{equation}
\int p_t\langle\nabla\phi,\nabla\psi\rangle\,dx
=\int p_t\langle z,\nabla\psi\rangle\,dx ,
\label{eq:poisson-weak}
\end{equation}
or, in strong form,
\begin{equation}
\nabla\cdot(p_t\nabla\phi)=\nabla\cdot(p_tz).
\label{eq:poisson-grad}
\end{equation}

\begin{definition}[Weighted negative norm]
For a mean-zero distribution $f$, let
\begin{equation}
\|f\|_{\dot H^{-1}(p_t)}
=\sup_{\|\nabla\psi\|_{L^2(p_t)}\le1}\langle f,\psi\rangle .
\label{eq:hminus-def}
\end{equation}
\end{definition}

\begin{proposition}[Density-active projection]
\label{prop:hminus-main}
For every sufficiently regular field $z$,
\begin{equation}
\nabla\cdot(p_tz)
=\nabla\cdot\!\left(p_tP_{\nabla,p_t}z\right),
\qquad
\|\nabla\cdot(p_tz)\|_{\dot H^{-1}(p_t)}
=\|P_{\nabla,p_t}z\|_{L^2(p_t)} .
\label{eq:hminus}
\end{equation}
\end{proposition}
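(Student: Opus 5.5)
The plan is to work entirely in the weak formulation fixed by \eqref{eq:weak-solenoidal} and \eqref{eq:poisson-weak}, so that both identities in \eqref{eq:hminus} reduce to statements about the orthogonal splitting \eqref{eq:helmholtz-closed}. First decompose $z=P_{\nabla,p_t}z+(z-P_{\nabla,p_t}z)$ in $L^2(p_t)$. By construction $z-P_{\nabla,p_t}z$ is orthogonal to $\mathcal G_{p_t}$, hence to every $\nabla\psi$ with $\psi\in\mathcal T$. By the definition of $\mathcal U_{p_t}$ this means $z-P_{\nabla,p_t}z\in\mathcal U_{p_t}$. Equivalently, $\nabla\cdot(p_t(z-P_{\nabla,p_t}z))=0$ in distributions. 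Linearity of the weighted divergence then gives the first identity $\nabla\cdot(p_tz)=\nabla\cdot(p_tP_{\nabla,p_t}z)$; this is also exactly the content of the strong form \eqref{eq:poisson-grad}.

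For the norm identity, use the distributional pairing $\langle\nabla\cdot(p_tz),\psi\rangle=-\int p_t\langle z,\nabla\psi\rangle\,dx$ for $\psi\in\mathcal T$. Mean-zero-ness of $f=\nabla\cdot(p_tz)$ is automatic because constants pair to zero, which is consistent with quotienting $\mathcal T$ by $\mathbb R$ on compact domains. By \eqref{eq:poisson-weak}, this pairing equals $-\langle P_{\nabla,p_t}z,\nabla\psi\rangle_{p_t}$. Taking the supremum over $\|\nabla\psi\|_{L^2(p_t)}\le1$ in \eqref{eq:hminus-def} (the sign is absorbed by $\psi\mapsto-\psi$) gives
\[
\|\nabla\cdot(p_tz)\|_{\dot H^{-1}(p_t)}=\sup\{\langle P_{\nabla,p_t}z,g\rangle_{p_t}: g\in\{\nabla\psi:\psi\in\mathcal T\},\ \|g\|_{L^2(p_t)}\le1\}.
\]
Cauchy--Schwarz yields the upper bound $\|P_{\nabla,p_t}z\|_{L^2(p_t)}$. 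For the matching lower bound, note that $P_{\nabla,p_t}z\in\mathcal G_{p_t}$, which is the $L^2(p_t)$-closure of $\{\nabla\psi:\psi\in\mathcal T\}$. Pick $\nabla\psi_n\to P_{\nabla,p_t}z$ in $L^2(p_t)$ and test with $\nabla\psi_n/\|\nabla\psi_n\|$. The pairing then converges to $\|P_{\nabla,p_t}z\|$, and the supremum over a set is unchanged when the set is replaced by a dense subset of the unit ball of $\mathcal G_{p_t}$. If $P_{\nabla,p_t}z=0$, both sides vanish trivially.

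The main obstacle is the density/closure step rather than the algebra. The supremum in \eqref{eq:hminus-def} runs over test functions, while the maximizer $\phi$ solving \eqref{eq:poisson-weak} generally lies only in the weighted energy space, not in $\mathcal T$. I would handle this by working with $\mathcal G_{p_t}$ defined as a closure, which the paper has already done. Under the standing regularity assumptions ($p_t>0$, smooth, decay on $\R^d$), $z\in L^2(p_t)$ makes the pairing a bounded linear functional on $\mathcal G_{p_t}$. The Riesz representation then gives existence of $P_{\nabla,p_t}z$, and continuity of the functional justifies the approximation argument. On $\R^d$ I would remark that no boundary terms arise because the test functions are compactly supported.
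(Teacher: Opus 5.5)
Your proposal is correct and follows the paper's argument: the remainder $z-P_{\nabla,p_t}z$ lies in $\mathcal U_{p_t}$, so the weighted divergence and every test-gradient pairing annihilate it, and the dual supremum then reduces to the Hilbert norm of $P_{\nabla,p_t}z$. Your approximation argument through the closure $\mathcal G_{p_t}$ (the maximizer is not itself a test gradient) spells out the step the paper compresses into ``the remaining supremum is the Hilbert norm of the gradient projection.''
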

\begin{proof}
The solenoidal component is orthogonal to every test gradient, so it vanishes
both under weighted divergence and in the dual norm.  The remaining supremum
is the Hilbert norm of the gradient projection.
\end{proof}

Apply the proposition to $z=\mathcal E_r[w]$ and define the solver-aware local
residual
\begin{equation}
R_r[w](t)
:=\|P_{\nabla,p_t}\mathcal E_r[w]\|_{L^2(p_t)}
=\|\nabla\cdot(p_t\mathcal E_r[w])\|_{\dot H^{-1}(p_t)}.
\label{eq:solver-residual}
\end{equation}

\begin{theorem}[Local Wasserstein defect for an order-$r$ method]
\label{thm:local-w2-defect}
Under the regular Wasserstein-chart and uniformity assumptions of
Appendix~\ref{app:solver-aware-limits},
\begin{equation}
W_2\!\left((\Psi_h^w)_\#p_t,p_{t+h}\right)
=h^{r+1}R_r[w](t)+o(h^{r+1}).
\label{eq:local-w2-defect}
\end{equation}
For Euler this is
\begin{equation}
W_2\!\left((\operatorname{Id}+hw_t)_\#p_t,p_{t+h}\right)
=\frac{h^2}{2}
\|P_{\nabla,p_t}a_w\|_{L^2(p_t)}+o(h^2).
\label{eq:local-w2-euler}
\end{equation}
\end{theorem}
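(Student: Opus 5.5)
The plan is to combine Theorem~\ref{thm:density-bea} with Proposition~\ref{prop:hminus-main}, using the fact that $W_2$ is infinitesimally the weighted $\dot H^{-1}(p_t)$ norm near a regular density. The one-step density defect is of size $h^{r+1}$, so the point is to show that $W_2$ between the two nearby measures agrees to leading order with that norm.

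\textbf{Step 1: a transport map that realises the defect.} Instead of working only with the density difference, I will build an explicit transport map. Write $(\Psi_h^w)_\#p_t=(\Phi_h^{w_h})_\#p_t+o(h^{r+1})$, where $\Phi_h^{w_h}$ is the flow of the modified field over $[t,t+h]$. Comparing with the exact flow $\Phi_h^w$, which pushes $p_t$ to $p_{t+h}$, gives
\[
\Psi_h^w=\Phi_h^w+h^{r+1}\mathcal E_r[w]+h^{r+1}\eta_h ,
\]
with $\|\eta_h\|_{C^1}\to0$. This uses the uniform $C^1_x$ bound on $\rho_h$ in \eqref{eq:modified-field-general} and a Gr\"onwall estimate on the time-$h$ flows.

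\textbf{Step 2: upper bound.} Take the coupling $(\Phi_h^w(x),\Psi_h^w(x))$ with $x\sim p_t$. It gives
\[
W_2\le h^{r+1}\|\mathcal E_r[w]\|_{L^2(p_t)}+o(h^{r+1}).
\]
This contains the full field, not only its gradient part, so it is too crude. To remove the solenoidal part $s=(I-P_{\nabla,p_t})\mathcal E_r[w]$, I will precompose with the flow $\chi_{h}$ of $-h^{r+1}s/p$-type correction. Concretely, let $\chi_\epsilon$ be the flow for time $\epsilon=h^{r+1}$ of the field $s$, which preserves $p_t$ because $\nabla\cdot(p_ts)=0$. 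Then the map
\[
\Phi_h^w\circ(\mathrm{Id}+h^{r+1}P_{\nabla,p_t}\mathcal E_r[w])
\]
pushes $p_t$ to the same measure as $\Psi_h^w$ up to $o(h^{r+1})$ in $W_2$. For the upper bound, the cleaner route is the Benamou--Brenier formula: the curve $\mu_s=p_{t+h}-s\,h^{r+1}\nabla\cdot(p_t\mathcal E_r[w])+\dots$ for $s\in[0,1]$ can be driven by the velocity $h^{r+1}\nabla\phi$ with $\nabla\phi=P_{\nabla,p_t}\mathcal E_r[w]$. Here Proposition~\ref{prop:hminus-main} guarantees that this gradient field generates exactly the needed density increment. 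The action is $h^{r+1}\|P_{\nabla,p_t}\mathcal E_r[w]\|_{L^2(p_t)}+o(h^{r+1})$, after replacing the base density $p_{t+h}$ by $p_t$ at cost $O(h)$ relative error, which needs the regular-chart assumption: $p$ bounded below and $C^1$ in time.

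\textbf{Step 3: lower bound (the main obstacle).} I will use the dual characterisation in \eqref{eq:hminus-def}. Pick a test function $\psi$ with $\nabla\psi=P_{\nabla,p_t}\mathcal E_r[w]/R_r$ (or a smooth approximant). For any coupling $\gamma$ of $\mu=(\Psi_h^w)_\#p_t$ and $\nu=p_{t+h}$, Taylor expansion gives
\[
\langle \mu-\nu,\psi\rangle=\int(\psi(y)-\psi(x))\,d\gamma\le\Big(\int|\nabla\psi|^2d\gamma_{\theta}\Big)^{1/2}W_2+C\|D^2\psi\|_\infty W_2^2 ,
\]
where $\gamma_\theta$ denotes the interpolated measures. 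These interpolants lie within $O(h)$ of $p_t$, so $\int|\nabla\psi|^2\,d\gamma_\theta=1+o(1)$. By \eqref{eq:one-step-density-general} and Proposition~\ref{prop:hminus-main}, the left side equals $h^{r+1}R_r+o(h^{r+1})$. Combined with the a priori bound $W_2=O(h^{r+1})$ from Step 2, this yields $W_2\ge h^{r+1}R_r-o(h^{r+1})$.

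The delicate points are the uniformity of the $o(h^{r+1})$ in \eqref{eq:one-step-density-general} when tested against $\psi$, and the smooth approximation of $\phi$ when $R_r$ is small. If $R_r=0$, the claim reduces to the upper bound.

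\textbf{Step 4: Euler case.} This follows by substituting $\mathcal E_1[w]=-\tfrac12a_w$ from \eqref{eq:modified-field} with $r=1$.
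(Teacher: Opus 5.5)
Your proof is correct, but it takes a genuinely different route from the paper's. The paper (Appendix~\ref{app:solver-aware-limits}, with the Euler case checked by a direct test-function Taylor expansion in Appendix~\ref{app:local-ot}) proves only the weak one-step expansion $(\Psi_h^w)_\#p_t-p_{t+h}=-h^{r+1}\nabla\cdot(p_t\mathcal E_r[w])+o(h^{r+1})$. It then \emph{assumes} the regular-chart identity $W_2(p_t-\varepsilon\nabla\cdot(p_tz)+o(\varepsilon),p_t)=|\varepsilon|\,\|P_{\nabla,p_t}z\|_{L^2(p_t)}+o(|\varepsilon|)$ and substitutes $\varepsilon=h^{r+1}$, $z=\mathcal E_r[w]$.

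Your Steps~2--3 actually prove that identity in the case at hand. The upper bound comes from the explicit coupling $x\mapsto(\Phi_h^w(x),\Phi_h^w(x+h^{r+1}\nabla\phi(x)))$, once you note that $\chi_\epsilon$ preserves $p_t$ and that $\Phi_h^w\circ(\mathrm{Id}+\epsilon\nabla\phi)\circ\chi_\epsilon=\Psi_h^w+O(h^{r+2})$. The lower bound is a duality estimate obtained by testing against $\psi\approx\phi/R_r$ under an optimal coupling.

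This buys three things:
\begin{itemize}
\item The argument is self-contained.
\item It isolates where regularity is really needed. The lower bound uses only the weak expansion and the density of compactly supported test gradients in $\mathcal G_{p_t}$, so smooth approximants with bounded Hessian always exist. The upper bound needs Lipschitz $\nabla\phi$ and $s$, i.e.\ elliptic regularity for the weighted Poisson problem.
\item It handles explicitly the base-point shift that the paper's chart assumption glosses over. That identity is stated around $p_t$, whereas the theorem compares two measures that are both $O(h)$ away from $p_t$. You absorb this through $\mathrm{Lip}(\Phi_h^w)=1+O(h)$ and $\|\nabla\psi\|_{L^2(p_{t+h})}^2=\|\nabla\psi\|_{L^2(p_t)}^2+O(h)$.
\end{itemize}
The paper's route buys brevity and modularity, packing all the analysis into one standard Otto-calculus fact.

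One caution: write Step~2 with the composite-map coupling rather than the Benamou--Brenier curve. The velocity $h^{r+1}\nabla\phi$ over a base $\mu_s\neq p_t$ does not generate the stated curve exactly. Closing that endpoint mismatch would itself require the $W_2$--$\dot H^{-1}$ comparison you are trying to prove.
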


The theorem turns the modified-equation residual into a Wasserstein tangent
norm rather than a heuristic indicator.  It also defines a solver-dependent
gauge objective
\begin{equation}
\mathcal B_r(w)=\frac12\int_0^1R_r[w](t)^2\,dt.
\label{eq:generic-bias-objective}
\end{equation}
For Euler, $\mathcal B_1=\mathcal B/4$, where
\begin{equation}
\mathcal B(w)
=\frac12\int_0^1
\|P_{\nabla,p_t}a_w\|_{L^2(p_t)}^2\,dt;
\label{eq:bias-objective}
\end{equation}
the constant does not affect representative selection.

\begin{keybox}
\textbf{Solver-aware transport chain.}
\[
\mathcal E_r[w]
\xrightarrow{\;P_{\nabla,p_t}\;} R_r(t)
\xrightarrow{\;K(1,t)\;} K(1,t)R_r(t)
\xrightarrow{\;\int n=N\;} n^\star(t).
\]
Only the horizontal component contributes to the leading marginal defect.
Large $p_t$-solenoidal modified differentials may bend particle trajectories
without producing leading density error.
\end{keybox}

%% file: sections/06_residual_based_timestep_allocation.tex
\section{Solver-Order Timestep Allocation}
\label{sec:schedule-theory}

Let $R_r(t)$ be the geometric residual in \eqref{eq:solver-residual}.  A local
defect may be amplified before terminal time.  For the transport evolution
$U_w(1,t)$, assume
\begin{equation}
\|U_w(1,t)f\|_X\le K(1,t)\|f\|_Y .
\label{eq:transport-stability}
\end{equation}

\begin{theorem}[Propagated order-$r$ density defect]
\label{thm:propagated-defect}
Under the differentiability and modified-equation assumptions in
Appendices~\ref{app:propagation-proof} and~\ref{app:solver-aware-limits}, a
smooth local step profile $h(t)$ satisfies
\begin{equation}
\|p_1^{\rm num}-p_1\|_X
\le
\int_0^1K(1,t)h(t)^r
\|\nabla\cdot(p_t\mathcal E_r[w])\|_Y\,dt
+O(\|h\|_\infty^{r+1}).
\label{eq:order-r-final-bound}
\end{equation}
\end{theorem}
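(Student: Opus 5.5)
The bound follows from a discrete Duhamel (telescoping) argument for the numerical density iteration, with the one-step defect supplied by Theorem~\ref{thm:density-bea}. Fix the grid $0=t_0<t_1<\dots<t_N=1$ generated by the smooth profile, with $h_k=t_{k+1}-t_k=h(t_k)$. Write $p_k^{\rm num}=(\Psi_{h_{k-1}}^w)_\#\cdots(\Psi_{h_0}^w)_\#p_0$ and let $U_w(t,s)$ be the exact transport evolution, so that $U_w(1,t_k)p_{t_k}=p_1$. The telescoping identity is
\[
p_1^{\rm num}-p_1=\sum_{k=0}^{N-1}\Big(U_w(1,t_{k+1})\,p_{k+1}^{\rm num}-U_w(1,t_k)\,p_k^{\rm num}\Big)+\big(p_N^{\rm num}\text{ vs. } U_w(1,t_N)p_N^{\rm num}\big),
\]
where the last term vanishes because $t_N=1$. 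For a linear transport evolution, $U_w(1,t_k)=U_w(1,t_{k+1})U_w(t_{k+1},t_k)$, so each summand equals
\[
U_w(1,t_{k+1})\big[(\Psi_{h_k}^w)_\#p_k^{\rm num}-U_w(t_{k+1},t_k)p_k^{\rm num}\big],
\]
that is, the exact propagation to time $1$ of a local one-step defect.

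Next I replace $p_k^{\rm num}$ by $p_{t_k}$ inside the local defect. The defect operator $\delta_k:=(\Psi_{h_k}^w)_\#-U_w(t_{k+1},t_k)$ is linear in the density, since pushforwards are linear, and it is $O(h_k^{r+1})$ as a map between the relevant spaces. Its action on the accumulated error $p_k^{\rm num}-p_{t_k}=O(\|h\|_\infty^{r})$ therefore contributes $O(h_k^{r+1}\|h\|_\infty^r)$ per step. Summed over $N\sim\|h\|^{-1}$ steps this is higher order, and in particular is absorbed in $O(\|h\|_\infty^{r+1})$. On the exact density I apply \eqref{eq:one-step-density-general}, giving $\delta_k p_{t_k}=-h_k^{r+1}\nabla\cdot(p_{t_k}\mathcal E_r[w])+o(h_k^{r+1})$. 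To get an $O(h_k^{r+2})$ remainder rather than $o$, I would use the uniform $C^1_x$ bound on $\rho_h$ in \eqref{eq:modified-field-general}, as recorded in Appendix~\ref{app:solver-aware-limits}. Applying \eqref{eq:transport-stability} to each propagated defect gives
\[
\|p_1^{\rm num}-p_1\|_X\le\sum_k K(1,t_{k+1})\,h_k^{r+1}\,\|\nabla\cdot(p_{t_k}\mathcal E_r[w])\|_Y+O(\|h\|_\infty^{r+1}).
\]

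Finally I recognize the sum as a Riemann sum: $\sum_k h_k\,[h(t_k)^r F(t_k)]$ with $F(t)=K(1,t)\|\nabla\cdot(p_t\mathcal E_r[w])\|_Y$. Replacing $t_{k+1}$ by $t_k$ in $K$ and passing to the integral costs $O(\|h\|_\infty)$ times $h^r$, provided $F$ is Lipschitz (differentiability of $K$ in $t$ and of $p_t$, $\mathcal E_r[w]$ as assumed in Appendix~\ref{app:propagation-proof}). This produces \eqref{eq:order-r-final-bound}.

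The main obstacle is controlling the cross term, the defect operator acting on the accumulated error $p_k^{\rm num}-p_{t_k}$, in the norm pair $(X,Y)$. The one-step expansion is a negative-Sobolev statement, so I need the defect map to send $X$-small perturbations to $Y$-small ones with a factor $h_k^{r+1}$ times a derivative loss. I would try to handle this by choosing $Y$ one derivative weaker than $X$ and using the $C^1$ uniformity of $\rho_h$. If that fails, the fallback is a weaker bound that absorbs the loss into the constant in the $O(\|h\|_\infty^{r+1})$ term, assuming $p^{\rm num}$ stays bounded in a stronger norm.
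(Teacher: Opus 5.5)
Your proof works and follows a genuinely different route from the paper's, though one of your proposed fixes fails and your fallback is the one you need.

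\textbf{How the two routes differ.} The paper does not telescope over steps. It assumes the whole nonuniform scheme is the exact flow of one interpolated modified field $w+h(t)^r\mathcal E_r[w]+O(\|h\|_\infty^{r+1})$. It then linearizes the continuity equation in $b_t=h(t)^r\mathcal E_r[w]$ and writes the first variation by continuous Duhamel, $\eta_1=-\int_0^1U_w(1,t)\nabla\cdot(p_tb_t)\,dt$. Finally it applies \eqref{eq:transport-stability} under the integral and absorbs the rest into twice Fr\'echet differentiability of $w\mapsto p_1[w]$. That produces $\int_0^1K(1,t)\cdots dt$ directly and needs only integrability of $K$. Its content, however, rests on the global modified-field and second-derivative assumptions.

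Your Lady Windermere-type telescoping with the exact propagator uses only the per-step defect of Theorem~\ref{thm:density-bea}, linearity of pushforward, and the evolution property. It therefore stays closer to the actual discrete sampler. It also exposes, as your cross term $\delta_k(p_k^{\rm num}-p_{t_k})$, the nonlinear interaction that the paper hides inside second-order differentiability. You are right that the $o(h^{r+1})$ remainder of Theorem~\ref{thm:density-bea} must be upgraded to $O(h_k^{r+2})$ in $Y$, and the $C^1_x$ bound on $\rho_h$ does this.

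\textbf{Repair 1: the Riemann-sum step.} This step uses Lipschitz regularity of $t\mapsto K(1,t)$, which is not among the hypotheses of Appendix~\ref{app:propagation-proof}. You can avoid it as follows. Using the per-step modified field of Appendix~\ref{app:solver-aware-limits}, write each one-step defect as a Duhamel integral over $[t_k,t_{k+1}]$. Then use $U_w(1,t_{k+1})U_w(t_{k+1},s)=U_w(1,s)$. This recovers the paper's integrand interval by interval, up to $O(\|h\|_\infty^{r+1})$.

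\textbf{Repair 2: the cross term.} Your primary fix fails. Choosing $Y$ one derivative weaker than $X$ contradicts \eqref{eq:transport-stability}, because transport does not smooth, so $U_w(1,t)$ cannot map a weaker space boundedly into a stronger one. The correct repair is your fallback: control $e_k=p_k^{\rm num}-p_{t_k}$ separately in a strong norm.

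For instance, pushforward contracts total variation, so $\|e_{k+1}\|_{\rm TV}\le\|e_k\|_{\rm TV}+\|(\Psi_{h_k}^w)_\#p_{t_k}-p_{t_{k+1}}\|_{L^1}$. For smooth data this gives $\|e_k\|_{\rm TV}=O(\|h\|_\infty^r)$. Meanwhile $\|\delta_k\mu\|_{{\rm Lip}^*}\le\|\Psi_{h_k}^w-\Phi_{t_{k+1},t_k}\|_\infty\|\mu\|_{\rm TV}=O(h_k^{r+1})\|\mu\|_{\rm TV}$. Here $\Phi_{t_{k+1},t_k}$ is the exact flow map of $w$ and ${\rm Lip}^*$ is the dual-Lipschitz norm. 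For $Y=\dot H^{-1}(p_t)$, a weighted $L^2$ bound on $e_k$ plays the role of total variation. The cross terms then sum to $O(\|h\|_\infty^{2r})$, which is $O(\|h\|_\infty^{r+1})$ since $r\ge1$.
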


For the canonical source norm $Y=\dot H^{-1}(p_t)$, Proposition~\ref{prop:hminus-main} identifies the integrand with $R_r(t)$.  The terminally weighted residual is therefore
$W_r(t)=K(1,t)R_r(t)$.  If $n(t)\,dt$ steps are placed in $dt$, then
$h(t)\simeq n(t)^{-1}$ and the accumulated leading defect is
\begin{equation}
\mathcal J_r[n]=\int_0^1W_r(t)n(t)^{-r}\,dt,
\qquad
\int_0^1n(t)\,dt=N.
\label{eq:schedule-opt}
\end{equation}
For a generic exponent $q>0$ we use
$\mathcal J_q^W[n]=\int_0^1W(t)n(t)^{-q}\,dt$.

\begin{theorem}[Solver-order allocation and finite-$N$ surrogate]
\label{thm:schedule-opt}
Let $W>0$, $q>0$, and $\alpha=(q+1)^{-1}$.  The unique continuous minimizer
of $\int Wn^{-q}$ under $\int n=N$ is
\begin{equation}
n^\star(t)=
\frac{NW(t)^\alpha}{Z},
\qquad
Z=\int_0^1W(s)^\alpha\,ds .
\label{eq:nstar}
\end{equation}
For any partition $\pi=\{0=t_0<\cdots<t_N=1\}$ define
\begin{equation}
\mathcal S_q(\pi)
=\sum_{k=0}^{N-1}
\left(\int_{t_k}^{t_{k+1}}W(t)^\alpha\,dt\right)^{q+1}.
\label{eq:discrete-surrogate}
\end{equation}
Then
\begin{equation}
\mathcal S_q(\pi)\ge\frac{Z^{q+1}}{N^q},
\label{eq:discrete-quantile-bound}
\end{equation}
with equality exactly when
\begin{equation}
\int_0^{t_k}W(t)^\alpha\,dt=\frac{k}{N}Z .
\label{eq:exact-quantile-grid}
\end{equation}
\end{theorem}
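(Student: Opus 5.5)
The plan is to prove both parts with Hölder's inequality, using the change of variables $s=\Lambda(t):=\int_0^tW^\alpha$ to turn the discrete statement into one about cell masses. Throughout, $q+1=1/\alpha$.

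\emph{Continuous part.} For any admissible $n>0$ with $\int_0^1 n=N$, write
\[
W^\alpha=(Wn^{-q})^{\alpha}\,n^{q\alpha},
\qquad \alpha+q\alpha=1 .
\]
Apply Hölder with exponents $1/\alpha$ and $1/(q\alpha)$, which are conjugate:
\[
Z=\int_0^1W^\alpha
\le\Big(\int_0^1Wn^{-q}\Big)^{\alpha}\Big(\int_0^1 n\Big)^{q\alpha}
=\Big(\int_0^1Wn^{-q}\Big)^{\alpha}N^{q\alpha}.
\]
Raising to the power $q+1$ gives $\int Wn^{-q}\ge Z^{q+1}/N^q$.

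Equality in Hölder holds iff $Wn^{-q}$ and $n$ are proportional a.e. This means $n^{q+1}\propto W$, i.e.\ $n\propto W^\alpha$. The constraint $\int n=N$ then fixes the constant, giving exactly $n^\star=NW^\alpha/Z$ in \eqref{eq:nstar}. Direct substitution confirms that this $n^\star$ attains the bound. Uniqueness follows from the Hölder equality case; alternatively, it follows from strict convexity of $n\mapsto n^{-q}$ on $(0,\infty)$ together with $W>0$. Either way, uniqueness holds a.e., and continuity upgrades it to pointwise.

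\emph{Discrete part.} Set the cell masses $m_k=\int_{t_k}^{t_{k+1}}W^\alpha$. Then $m_k\ge0$, $\sum_k m_k=Z$, and $\mathcal S_q(\pi)=\sum_k m_k^{q+1}$. By convexity of $x\mapsto x^{q+1}$ (Jensen, or power-mean/Hölder on $N$ terms),
\[
\sum_{k=0}^{N-1}m_k^{q+1}\ge N\Big(\frac ZN\Big)^{q+1}=\frac{Z^{q+1}}{N^q},
\]
which is \eqref{eq:discrete-quantile-bound}. Since $q+1>1$, the map $x\mapsto x^{q+1}$ is strictly convex, so equality holds iff all $m_k=Z/N$. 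Summing the first $k$ cells, this is equivalent to $\Lambda(t_k)=kZ/N$, which is \eqref{eq:exact-quantile-grid}.

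Existence and uniqueness of such a partition come from properties of $\Lambda$. Because $W>0$ is continuous, $\Lambda$ is continuous and strictly increasing from $0$ to $Z$. Hence the quantile nodes $t_k=\Lambda^{-1}(kZ/N)$ exist, are unique, and are strictly ordered.

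The main point to handle carefully is the equality characterization in the continuous case. I would state the admissible class explicitly (positive measurable $n$ with $Wn^{-q}$ integrable) so that Hölder's equality case applies cleanly. Under those assumptions the rest is routine.
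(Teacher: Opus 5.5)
Your proof is correct. The discrete half is the paper's argument: the paper also sets $a_k=\int_{t_k}^{t_{k+1}}W^\alpha$, applies Jensen to the strictly convex map $x\mapsto x^{q+1}$, and reads equality off as $a_k\equiv Z/N$.

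The continuous half takes a different route. The paper writes the Lagrangian derivative $-qWn^{-q-1}+\lambda$ and solves the stationarity equation to get $n\propto W^{1/(q+1)}$. It then invokes strict convexity for uniqueness and global optimality. Implicitly, this uses the pointwise tangent-line inequality $Wn^{-q}\ge W(n^\star)^{-q}-qW(n^\star)^{-q-1}(n-n^\star)$, integrated against the constraint $\int(n-n^\star)=0$.

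Your H\"older factorization $W^\alpha=(Wn^{-q})^\alpha n^{q\alpha}$ instead gives the global bound $\int Wn^{-q}\ge Z^{q+1}/N^q$ directly. Uniqueness then follows from H\"older's equality case, so you need no multiplier or sufficiency argument. Your bound also makes explicit that the continuum optimum equals the discrete bound $Z^{q+1}/N^q$: the quantile grid attains exactly the continuum value. This ties the two halves of the theorem together; the paper only records $\mathcal J_q^W[N\rho]=N^{-q}Z^{q+1}$ later, in the R\'enyi computation.

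In exchange, the paper's KKT route is shorter and carries over unchanged to any strictly convex per-step cost, whereas H\"older is tied to the pure power law.

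Your extra step, showing that the quantile nodes exist and are unique because $\Lambda$ is strictly increasing, fills a detail the paper leaves implicit. That step needs only $W>0$ a.e. and $W^\alpha\in L^1$, not continuity of $W$.
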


The continuous law follows from the KKT equation
$-qWn^{-q-1}+\lambda=0$.  For the finite-$N$ surrogate $\mathcal S_q$, strict convexity applied to the
interval masses $a_k=\int_{t_k}^{t_{k+1}}W^\alpha$ gives the exact quantile
minimizer.  This is exact finite-$N$ \emph{surrogate} optimality; it is not a
claim that the same grid exactly minimizes the full nonlinear terminal error.
For an order-$r$ solver, $q=r$, so
\begin{equation}
n^\star(t)\propto[K(1,t)R_r(t)]^{1/(r+1)}.
\label{eq:order-r-allocation}
\end{equation}
Euler, second-order, and third-order methods therefore use powers
$1/2$, $1/3$, and $1/4$, respectively.

The quantile rule in \eqref{eq:exact-quantile-grid} directly converts the
continuous optimum into a realizable mesh; its local step asymptotics and
$O(N^{-2})$ endpoint approximation are stated in
Appendix~\ref{app:solver-aware-limits}.

\subsection{Allocation robustness and propagation}
Suppose a schedule is built from $\widehat W$ instead of $W$.  Let
\begin{equation}
\rho=\frac{W^\alpha}{Z},
\qquad
\widehat\rho=\frac{\widehat W^\alpha}{\widehat Z}.
\label{eq:schedule-densities}
\end{equation}

\begin{theorem}[Exact schedule degradation]
\label{thm:schedule-robustness}
For $\widehat n=N\widehat\rho$,
\begin{equation}
\frac{\mathcal J_q^W[\widehat n]}{\mathcal J_q^W[n^\star]}
=\int_0^1\rho^{q+1}\widehat\rho^{-q}\,dt
=\exp\!\left(qD_{q+1}(\rho\Vert\widehat\rho)\right).
\label{eq:renyi-robustness}
\end{equation}
For Euler,
\begin{equation}
\frac{\mathcal J_1^W[\widehat n]}{\mathcal J_1^W[n^\star]}
=1+\chi^2(\rho\Vert\widehat\rho).
\label{eq:chi2-robustness}
\end{equation}
If $|\log(\widehat W/W)|\le\varepsilon$, the ratio is at most
$\exp(2q\varepsilon/(q+1))$.
\end{theorem}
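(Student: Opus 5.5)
The plan is a direct computation that rewrites both costs in terms of the normalized densities $\rho$ and $\widehat\rho$. We then read off the R\'enyi and $\chi^2$ forms and bound the density ratio pointwise.

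\textbf{Step 1: the optimal cost.} By Theorem~\ref{thm:schedule-opt}, $n^\star=N\rho$ with $\rho=W^\alpha/Z$. Since $\alpha(q+1)=1$, we have $W=(Z\rho)^{q+1}$. Substituting,
\[
\mathcal J_q^W[n^\star]=\int_0^1 Z^{q+1}\rho^{q+1}N^{-q}\rho^{-q}\,dt=\frac{Z^{q+1}}{N^q}\int_0^1\rho\,dt=\frac{Z^{q+1}}{N^q}.
\]

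\textbf{Step 2: the misspecified cost.} With $\widehat n=N\widehat\rho$, the same substitution gives
\[
\mathcal J_q^W[\widehat n]=\frac{Z^{q+1}}{N^q}\int_0^1\rho^{q+1}\widehat\rho^{-q}\,dt.
\]
Dividing by Step 1 yields the first equality in \eqref{eq:renyi-robustness}. The exponential form follows from the definition $D_\beta(\rho\Vert\widehat\rho)=\frac{1}{\beta-1}\log\int\rho^\beta\widehat\rho^{1-\beta}$ with $\beta=q+1$, since then $\beta-1=q$. For $q=1$ the ratio is $\int\rho^2/\widehat\rho=1+\int(\rho-\widehat\rho)^2/\widehat\rho=1+\chi^2(\rho\Vert\widehat\rho)$. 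This uses $\int\rho=\int\widehat\rho=1$, which gives \eqref{eq:chi2-robustness}.

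\textbf{Step 3: the perturbation bound.} This is the only step requiring care. Write $\rho/\widehat\rho=(W/\widehat W)^\alpha\,\widehat Z/Z$. The hypothesis gives $(W/\widehat W)^\alpha\in[e^{-\alpha\varepsilon},e^{\alpha\varepsilon}]$ pointwise, and integrating gives $\widehat Z/Z\in[e^{-\alpha\varepsilon},e^{\alpha\varepsilon}]$. Taken separately, these yield only $\rho/\widehat\rho\le e^{2\alpha\varepsilon}$, and the crude consequence $\int\rho(\rho/\widehat\rho)^q\le e^{2q\alpha\varepsilon}$ is exactly the claimed $\exp(2q\varepsilon/(q+1))$.

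The plan is therefore to take this crude bound as the proof, stated as follows. For all $t$,
\[
\rho(t)/\widehat\rho(t)\le e^{2\alpha\varepsilon},
\]
so
\[
\int_0^1\rho^{q+1}\widehat\rho^{-q}\,dt=\int_0^1\rho\,(\rho/\widehat\rho)^q\,dt\le e^{2q\alpha\varepsilon}\int_0^1\rho\,dt=e^{2q\varepsilon/(q+1)}.
\]
One could sharpen this using the normalization of both densities, for instance through $D_{q+1}\le D_\infty=\log\sup\rho/\widehat\rho$ and the fact that the supremum and infimum of $\rho/\widehat\rho$ must straddle $1$. That refinement is not needed for the stated bound.

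Regularity is required only to make the integrals finite. This holds because $W,\widehat W>0$ are continuous on $[0,1]$, so $\rho$ and $\widehat\rho$ are bounded above and away from zero.
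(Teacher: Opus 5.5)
Your proof is correct and follows the paper's own argument essentially line by line. It uses the same substitution $W=(Z\rho)^{q+1}$ to get $\mathcal J_q^W[n^\star]=Z^{q+1}/N^q$ and $\mathcal J_q^W[\widehat n]=(Z^{q+1}/N^q)\int\rho^{q+1}\widehat\rho^{-q}$, the R\'enyi definition with $\beta=q+1$, the $\chi^2$ expansion using $\int\rho=\int\widehat\rho=1$, and the same pointwise bound $\rho/\widehat\rho\le e^{2\alpha\varepsilon}$ obtained by combining the bounds on $(W/\widehat W)^\alpha$ and $\widehat Z/Z$.

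One small caveat concerns your (unneeded) aside. The route via $D_{q+1}\le D_\infty$ plus the straddling of $1$ does not by itself improve the constant, because $\sup\rho/\widehat\rho$ can approach $e^{2\alpha\varepsilon}$. To see this, take $\widehat W=e^{-\varepsilon}W$ on a set of small $\rho$-mass and $\widehat W=e^{\varepsilon}W$ elsewhere. A genuine sharpening has to exploit the fact that such a large ratio can only live on small mass.
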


Thus schedule quality depends on the normalized allocation density, not on
the absolute scale of the residual.  The regularization
$W_\varepsilon=\varepsilon+R$ ensures positivity and controls sensitivity near
zeros.

Propagation weighting is structurally consequential: even for a constant local
residual, Appendix~\ref{app:solver-aware-limits} constructs smooth scalar
flows for which the residual-only objective ratio grows as
$\sqrt{M}/4\to\infty$ relative to the propagation-aware optimum.

The proofs of Theorems~\ref{thm:schedule-opt} and
\ref{thm:schedule-robustness}, together with the mesh asymptotics and the
propagation-separation construction, are collected in
Appendix~\ref{app:solver-aware-limits}.

\paragraph{Practical residual estimation.}
The theory uses $R_r(t)=\|P_{\nabla,p_t}\mathcal E_r[w]\|_{L^2(p_t)}$.
Controlled Euler experiments also use the score--divergence proxy
\begin{equation}
\widehat R_{\rm dens}(t)=\|p_t^{-1}\nabla\cdot(p_ta_w)\|_{L^2(p_t)},
\label{eq:score-div-proxy}
\end{equation}
which upper-controls, but is not identical to, the geometric norm under a
weighted Poincar\'e inequality.  CIFAR-10 instead uses a Galerkin projection
and empirical propagation; Appendix~\ref{app:unified-new-experiments} validates
a particle-only estimator against the exact controlled residual.

%% file: sections/07_gauge_consequences.tex
\section{Gauge Optimization and Its Discrete Limit}
\label{sec:gauge-consequences-main}

For Euler, the representative-selection problem is
\begin{equation}
\min_{u\in\mathcal U_p}\mathcal B(v+u),
\qquad
\mathcal B(w)=\frac12\int_0^1
\|P_{\nabla,p_t}a[w]\|_{L^2(p_t)}^2\,dt .
\label{eq:main-gauge-problem}
\end{equation}
The first variation, projected descent, KKT system, and weighted-Poisson
projection are given in Appendix~\ref{app:projected-gauge}.  The feasible
direction is the solenoidal projection of the ambient gradient, so the
optimization remains inside the fiber of fields that realize the same density
path.

Why optimize a continuum functional when the target is a finite-step solver?
For a partition $\pi$, define the discrete local-$W_2$ objective
\begin{equation}
\mathcal B_\pi(w)
=\sum_{k=0}^{N-1}\frac{2}{h_k^3}
W_2^2\!\left(
(\operatorname{Id}+h_kw_{t_k})_\#p_{t_k},p_{t_{k+1}}
\right),
\qquad h_k=t_{k+1}-t_k .
\label{eq:discrete-gauge-objective}
\end{equation}

\begin{theorem}[$\Gamma$-limit of discrete Euler gauge optimization]
\label{thm:gauge-gamma}
Let $\mathcal A\subset v+\mathcal U_p$ be compact in a topology $\tau$.
Assume the Euler local-$W_2$ expansion is uniform on $\mathcal A$, its
$O(h^3)$ remainder is uniformly bounded, and
$t\mapsto\|P_{\nabla,p_t}a[w]\|^2$ is uniformly Lipschitz over
$w\in\mathcal A$.  Then
\begin{equation}
\sup_{w\in\mathcal A}
|\mathcal B_\pi(w)-\mathcal B(w)|
\le C|\pi|\longrightarrow0.
\label{eq:uniform-gauge-convergence}
\end{equation}
Consequently $\mathcal B_\pi\xrightarrow{\Gamma}\mathcal B$ in $\tau$.
Every convergent sequence of discrete minimizers has a continuum-minimizing
limit.  The same conclusion holds on noncompact admissible sets when an added
regularizer makes the objectives equicoercive.
\end{theorem}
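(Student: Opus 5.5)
The proof has three parts: a per-step comparison, a Riemann-sum comparison, and a standard $\Gamma$-convergence argument. We first use the Euler specialization of Theorem~\ref{thm:local-w2-defect}, assumed uniform on $\mathcal A$. Write $R(t;w)=\|P_{\nabla,p_t}a[w]\|_{L^2(p_t)}$. Then for each interval of $\pi$,
\[
W_2\!\left((\operatorname{Id}+h_kw_{t_k})_\#p_{t_k},p_{t_{k+1}}\right)
=\tfrac{h_k^2}{2}R(t_k;w)+\varepsilon_k(w),
\qquad |\varepsilon_k(w)|\le C_0h_k^3 ,
\]
with $C_0$ independent of $w\in\mathcal A$ and of $k$; this is exactly the hypothesis of a uniformly bounded $O(h^3)$ remainder. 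Squaring gives
\[
W_2^2=\tfrac{h_k^4}{4}R(t_k;w)^2+h_k^2R(t_k;w)\varepsilon_k+\varepsilon_k^2 .
\]
We also need $\sup_{\mathcal A,t}R(t;w)\le M<\infty$. This follows from the uniform Lipschitz assumption on $R^2$ together with boundedness at a single time, which in turn comes from compactness of $\mathcal A$ and continuity of $w\mapsto R(0;w)$. If that continuity is unavailable, I would add boundedness as a mild extra assumption. Multiplying by $2/h_k^3$ yields
\[
\tfrac{2}{h_k^3}W_2^2=\tfrac{h_k}{2}R(t_k;w)^2+O(h_k^2),
\]
uniformly on $\mathcal A$.

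Next we sum over $k$. This gives $\mathcal B_\pi(w)=\tfrac12\sum_k h_kR(t_k;w)^2+O(\sum_kh_k^2)$, and $\sum_k h_k^2\le|\pi|$. The left Riemann sum differs from $\tfrac12\int_0^1R(t;w)^2\,dt=\mathcal B(w)$ by at most $\tfrac12 L\sum_k h_k^2\le \tfrac L2|\pi|$, where $L$ is the uniform Lipschitz constant of $t\mapsto R(t;w)^2$. Together these give \eqref{eq:uniform-gauge-convergence} with $C=2(MC_0+C_0^2)+L/2$, independent of $w$.

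For the $\Gamma$-convergence statement, we need $\mathcal B$ to be $\tau$-continuous on $\mathcal A$. Uniform convergence plus lower semicontinuity of the limit suffices for the liminf inequality, and the recovery sequence can be taken constant. Lower semicontinuity of $\mathcal B$ is the main obstacle, since the hypotheses do not state it directly. My first attempt would be to note that each $\mathcal B_\pi$ is $\tau$-continuous, being a finite sum of $W_2^2$ of pushforwards under maps that depend continuously on $w$, provided $\tau$ controls $w_{t_k}$ in $L^2(p_{t_k})$. Then $\mathcal B$ is a uniform limit of continuous functions and hence continuous. Failing that, I would take $\Gamma$-convergence relative to the lower-semicontinuous envelope. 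Given continuity, the liminf follows from
\[
\mathcal B_{\pi_n}(w_n)\ge\mathcal B(w_n)-C|\pi_n|
\]
and lower semicontinuity of $\mathcal B$; the limsup uses $w_n=w$.

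The convergence of minimizers is then the fundamental theorem of $\Gamma$-convergence. If $w_n$ minimizes $\mathcal B_{\pi_n}$ and $w_n\to w$ in $\tau$, then for every $\tilde w$,
\[
\mathcal B(w)\le\liminf\mathcal B_{\pi_n}(w_n)\le\lim\mathcal B_{\pi_n}(\tilde w)=\mathcal B(\tilde w).
\]
Existence of minimizers and of convergent subsequences comes from compactness of $\mathcal A$. In the noncompact case, an added regularizer that is $\tau$-lower-semicontinuous keeps the $\Gamma$-limit (a continuous perturbation is stable under $\Gamma$-convergence), and its equicoercivity confines minimizers to a compact sublevel set. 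On that set the argument above applies verbatim.
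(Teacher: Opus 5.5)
Your proposal is correct and follows the paper's proof: a uniform per-step expansion, squaring with a uniform bound on $R$, a Riemann-sum comparison using the Lipschitz hypothesis and $\sum_k h_k^2\le|\pi|$, and then uniform convergence plus continuity of $\mathcal B$ with the constant recovery sequence and the fundamental minimizer argument. The paper uses continuity of $\mathcal B$ without justifying it, and your derivation of it as the uniform limit of the $\tau$-continuous $\mathcal B_\pi$ (under your stated proviso on $\tau$) is a small but useful clarification.
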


Thus the continuum representative-selection problem is the variational limit
of the discrete solver objective, not merely an analogy suggested by backward
error.  Appendix~\ref{app:solver-aware-limits} gives the proof and the
regularized equicoercive form.

Classical optimal transport selects a horizontal lift by minimizing continuum
kinetic energy.  Here the prescribed marginal curve is held fixed, while the
chosen numerical method induces a second horizontal criterion: minimize the
local transport defect of its discrete maps.  This complements endpoint-map
interpretations of probability-flow ODEs in Gaussian settings
\citep{Khrulkov2023DDPMOT} and the classical Otto--Benamou--Brenier geometry
\citep{BenamouBrenier2000,Otto2001}.

\paragraph{Gaussian exact control.}
For $p_t=\mathcal N(0,\Sigma_t)$ and $w_t(x)=B_tx$, continuum transport and
zero leading Euler bias reduce to
\begin{equation}
B_t\Sigma_t+\Sigma_tB_t^\top=\dot\Sigma_t,
\qquad
2B_t\Sigma_tB_t^\top=\ddot\Sigma_t .
\label{eq:main-gaussian-summary}
\end{equation}
For $\Sigma_t=s(t)^2I_d$, feasible lifts are
$B_t=(\dot s/s)I_d+A_t$, $A_t^\top=-A_t$, and cancellation requires
\begin{equation}
A_t^2=-\frac{\ddot s(t)}{s(t)}I_d.
\label{eq:main-parity-summary}
\end{equation}
When $\ddot s/s>0$, exact skew gauges exist in even dimension, whereas one
scalar direction remains unavoidable in odd dimension; the full
classification is in Appendix~\ref{app:gaussian-gauge}.

\begin{takeawaybox}
\textbf{Continuum OT optimality is not finite-step solver optimality.}
The radial Gaussian lift $v_t=(\dot s/s)x$ is the gradient,
minimum-kinetic-energy representative, yet
\[
W_2((\operatorname{Id}+hv_t)_\#p_t,p_{t+h})
=\frac{\sqrt d}{2}|\ddot s(t)|h^2+O(h^3).
\]
In even dimension the admissible non-gradient lift
$[(\dot s/s)I+\sqrt{\ddot s/s}\,J]x$, with $J^\top=-J$ and $J^2=-I$,
cancels the leading covariance defect and achieves $O(h^3)$.  The same
continuum density path therefore admits a Brenier-favored lift and a distinct
Euler-favored lift.  The gauge fiber is a genuine finite-solver control space.
\end{takeawaybox}

%% file: sections/08_controlled_experiments.tex
\section{Controlled and Pretrained Validation}
\label{sec:main-experiments}

\paragraph{Controlled checks.}
Representative selection is tested in exact and controlled models; the
pretrained experiment below isolates timestep allocation.  Exact Gaussian
paths recover the predicted local-$W_2$ orders and the even/odd gauge obstruction.  Analytic mixture paths verify the finite-$N$ quantile
surrogate, propagation weighting, and robustness of the square-root allocation.
A unified hard-GMM protocol compares density-active scheduling with embedded
Euler--Heun refinement, Jo--Choi velocity-variation resampling, a particle-only
dual residual, propagation-aware weighting, a trajectory-coupling oracle, and
fair-NFE Heun.  These comparisons support competitiveness rather than uniform
dominance; the same appendix validates the particle-only residual against the
exact controlled profile.  Full protocols and quantitative checks are in
Appendices~\ref{app:synthetic-experiments}, \ref{app:controlled-extra},
\ref{app:unified-new-experiments}, and~\ref{app:m1-experiments}.

\paragraph{Pretrained EDM CIFAR-10.}
We integrate the deterministic probability-flow ODE of the pretrained
CIFAR-10 EDM network \citep{Karras2022EDM} in the noise variable $\sigma$,
\[
  \frac{\dd x}{\dd\sigma}=f_\sigma(x)
  =\frac{x-D_\theta(x,\sigma)}{\sigma}.
\]
A single offline profile estimates the Euler modified-equation residual
$\mathcal E_1=(\partial_\sigma f_\sigma+Jf_\sigma f_\sigma)/2$, projects it onto
a random-feature Wasserstein-horizontal space, and weights it by empirical
terminal amplification.  Euler timesteps are quantiles of
$[K_{\rm emp}(\sigma)R_{\rm gal}(\sigma)]^{1/2}$ in the actual solver variable.

\begin{table}[t]
\caption{\textbf{Pretrained CIFAR-10 EDM: FID-50k / generation time.}
The first block keeps the Euler update fixed and changes only timestep placement.
The second block changes the solver class.  FID evaluation time is excluded.}
\label{tab:edm-cifar-equal-nfe-runtime}
\centering
\begingroup
\footnotesize
\setlength{\tabcolsep}{2.8pt}
\renewcommand{\arraystretch}{1.02}
\begin{adjustbox}{max width=\linewidth,center}
\begin{tabular}{lcccc}
\toprule
Method & 8 NFE & 16 NFE & 20 NFE & 35 NFE \\
\midrule
\multicolumn{5}{l}{\emph{Fixed Euler update: schedule comparison}}\\[-1mm]
Euler/Ours & \textbf{16.4420} / 441.8s & \textbf{6.9354} / 867.7s & \textbf{5.5274} / 1084.0s & 3.5285 / 1884.6s \\
Euler/log-$\sigma$ & 21.6984 / 442.2s & 7.6153 / 867.4s & 5.8316 / 1084.3s & \textbf{3.5154} / 1884.7s \\
Euler/EDM-$\rho$ & 22.2181 / 441.8s & 8.1022 / 867.2s & 6.2264 / 1082.5s & 3.7318 / 1885.4s \\
\midrule
\multicolumn{5}{l}{\emph{Stronger solver context}}\\[-1mm]
DPM++ & 8.0429 / 456.8s & 2.6684 / 895.6s & 2.3117 / 1114.4s & 2.0205 / 1939.6s \\
Original EDM Heun & -- & -- & -- & \textbf{1.8486} / 1901.2s \\
\bottomrule
\end{tabular}
\end{adjustbox}
\endgroup
\end{table}

Within Euler, the proposed grid reduces FID relative to log-$\sigma$ at 8, 16,
and 20 NFE and improves over EDM-$\rho$ at every displayed budget.  The gain
nearly vanishes by 35 NFE, where the first-order mesh is already dense.  These
rows are the direct test of the paper: network, update rule, NFE, seeds, and
runtime are matched, so the only design variable is timestep placement.

DPM++ and EDM Heun address a different axis: they improve the update formula
through multistep or second-order information.  Their lower FID therefore
measures solver design, not Euler-grid quality.  Our method instead optimizes
where a fixed stateless first-order solver spends its evaluations and produces
an offline profile reusable across seeds and budgets.  A matched-seed dense-grid
study in Appendix~\ref{app:edm-cifar-details} finds our grid closest to a
$1000$-step Euler reference at every tested budget from $8$ to $512$ NFE, while
all three Euler schedules become indistinguishable at $1000$ NFE.  Thus the gain
is faster convergence to the same Euler limit, complementary to higher-order
solver engineering.

\begin{figure}[t]
\centering
\begingroup
\small
\setlength{\tabcolsep}{5pt}
\renewcommand{\arraystretch}{1.08}
\begin{tabular}{@{}p{0.21\linewidth}p{0.75\linewidth}@{}}
\shortstack[l]{\textbf{Euler/EDM-$\rho$}\\[-1mm]{\footnotesize FID-50k 22.22}} & \includegraphics[width=\linewidth]{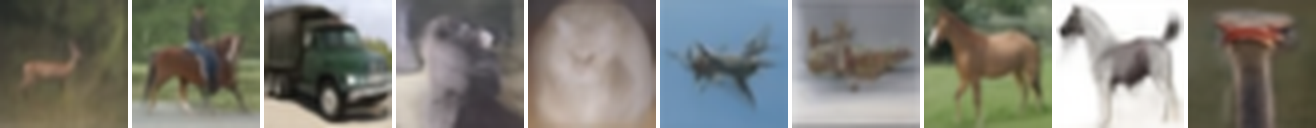}\\
\shortstack[l]{\textbf{Euler/log-$\sigma$}\\[-1mm]{\footnotesize FID-50k 21.70}} & \includegraphics[width=\linewidth]{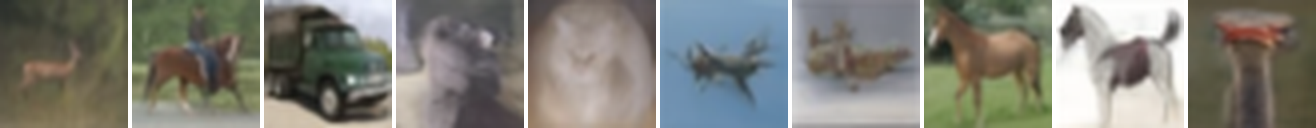}\\
\shortstack[l]{\textbf{Euler/Ours}\\[-1mm]{\footnotesize FID-50k 16.44}} & \includegraphics[width=\linewidth]{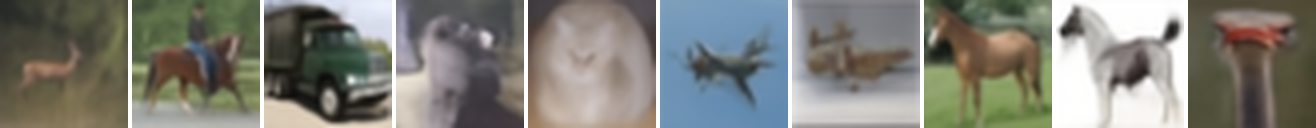}
\end{tabular}
\endgroup
\caption{Matched-seed CIFAR-10 samples at 8 NFE for the three Euler grids.
Every row uses the same network and update rule; only timestep placement changes.
The visual comparison therefore matches the fixed-solver claim tested by the
first block of Table~\ref{tab:edm-cifar-equal-nfe-runtime}.}
\label{fig:main-edm-cifar-samples}
\end{figure}

Appendix~\ref{app:edm-cifar-details} reports the profile construction,
dense-limit convergence, FID--time tradeoff, and the matched Euler comparison
at 16 NFE before the controlled and proof supplements.

%% file: sections/10_conclusion.tex
\section{Conclusion}
The density-active component of a solver's modified equation determines its
leading marginal transport error.  This yields a solver-order allocation law
and a representative-selection objective over marginal-preserving gauges, with
a discrete-to-continuum justification for Euler.  Controlled and pretrained
experiments validate the mechanism, while matched-seed dense-grid convergence
shows that the gain is faster approximation of the same high-NFE Euler limit.
DPM++ and EDM Heun delimit the scope: they change the integration formula;
ours allocates the budget of a fixed low-order solver.

%% file: appendices/Q_edm_cifar_details.tex
\section{Pretrained EDM CIFAR-10 Details}
\label{app:edm-cifar-details}

This appendix expands the image-model validation from
Section~\ref{sec:main-experiments}.  All methods use the same class-conditional
pretrained EDM network, seeds $0$--$49999$, batch size $32$, and CIFAR-10 FID
reference.  The Euler rows in Table~\ref{tab:edm-cifar-equal-nfe-runtime} differ
only in timestep placement.  DPM++ uses a second-order EDM multistep update,
whereas the original EDM configuration uses a Heun-style correction with $18$
steps, i.e. $35$ network evaluations.

\begin{keybox}
\textbf{Two distinct comparisons.}
The direct test of the proposed method is Euler against Euler at fixed NFE:
the learned field and numerical update are unchanged, and only the nodes of the
grid move.  DPM++ and EDM Heun instead modify the integration formula.  They
calibrate the accuracy of stronger solvers but do not test the timestep
allocation claim.  A density-active profile can also be used as a diagnostic or
combined with a different solver; this paper isolates it with Euler so that the
allocation effect is identifiable.
\end{keybox}

\begin{figure}[H]
\centering
\includegraphics[width=0.74\linewidth]{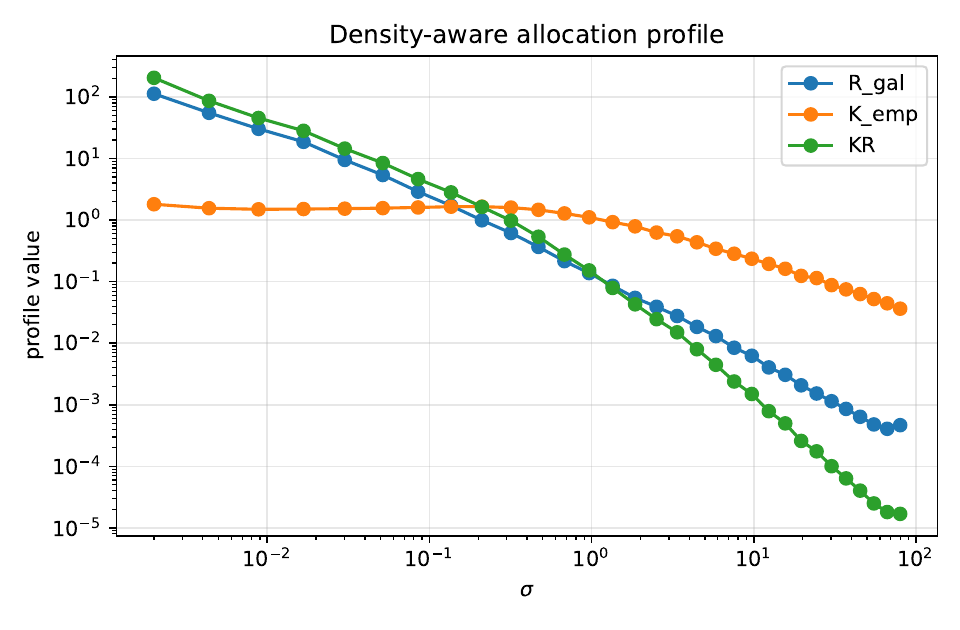}
\caption{Offline profile used by the Euler allocation: the Galerkin estimate of
the density-active residual, the empirical terminal-propagation factor, and
their product.  The schedule is obtained from quantiles of the square root of
the propagated profile in the solver variable $\sigma$.}
\label{fig:app-edm-cifar-profile}
\end{figure}

The profile assigns more of the Euler budget to the low-noise region without
collapsing all evaluations at the endpoint.  It is estimated once and then
reused for every image seed and every displayed Euler budget.  Consequently,
the three Euler rows have essentially identical generation time at fixed NFE.

\paragraph{Dense-grid convergence and asymptotic target.}
To distinguish faster convergence from a changed high-NFE target, we compare
matched-seed endpoints against a dense numerical reference.  For each of $1024$
shared noise seeds and class labels, the reference is a $1000$-step Euler
trajectory on the EDM-$\rho$ grid.  We then integrate the same initial condition
with each Euler schedule at $8,16,20,35,64,128,256,$ and $512$ NFE.  Choosing an
EDM-$\rho$ reference is conservative for our method, yet the density-aware grid
has the smallest endpoint RMSE at every tested budget.

\begin{figure}[H]
\centering
\includegraphics[width=0.78\linewidth]{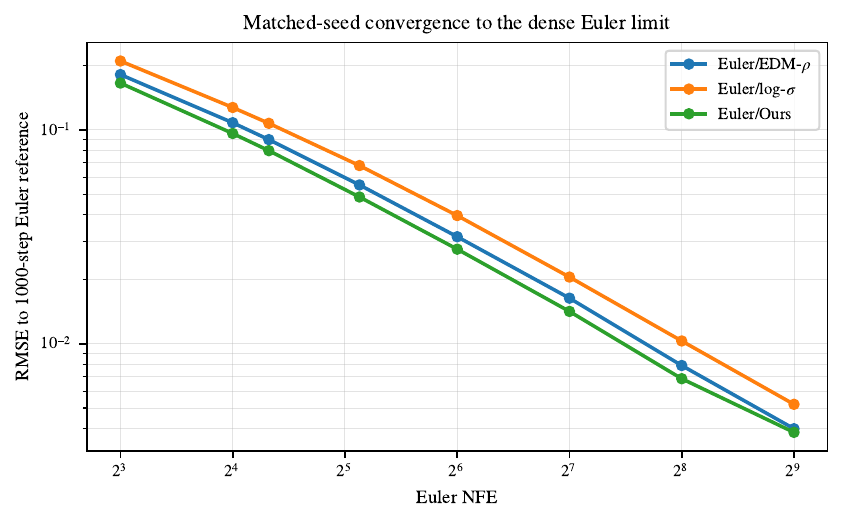}
\caption{Matched-seed convergence to a $1000$-step Euler reference.  The proposed
schedule is closest to the dense reference at every tested budget, while the
three curves approach one another as NFE grows.  The reference is numerical,
not an exact ODE solution.}
\label{fig:app-edm-dense-euler}
\end{figure}

The endpoint result is complemented by an independent FID-$5$k run at $1000$
NFE.  The three values differ by less than $0.002$, while generation times stay
within a few percent.  Thus the low-NFE improvement does not create a different
asymptotic sampler; it accelerates a fixed Euler update toward the same dense-grid
regime.

\begin{table}[H]
\caption{High-NFE Euler check on $5000$ CIFAR-10 samples.  Every row uses
$1000$ Euler evaluations; FID computation time is excluded.}
\label{tab:app-edm-1000nfe}
\centering
\begingroup
\small
\setlength{\tabcolsep}{6pt}
\begin{tabular}{lcc}
\toprule
Schedule & FID-$5$k & Generation time (s) \\
\midrule
Euler/Ours & 6.64496 & 5187.9 \\
Euler/log-$\sigma$ & 6.64442 & 5112.5 \\
Euler/EDM-$\rho$ & 6.64618 & 5289.6 \\
\bottomrule
\end{tabular}
\endgroup
\end{table}

\begin{figure}[H]
\centering
\includegraphics[width=0.74\linewidth]{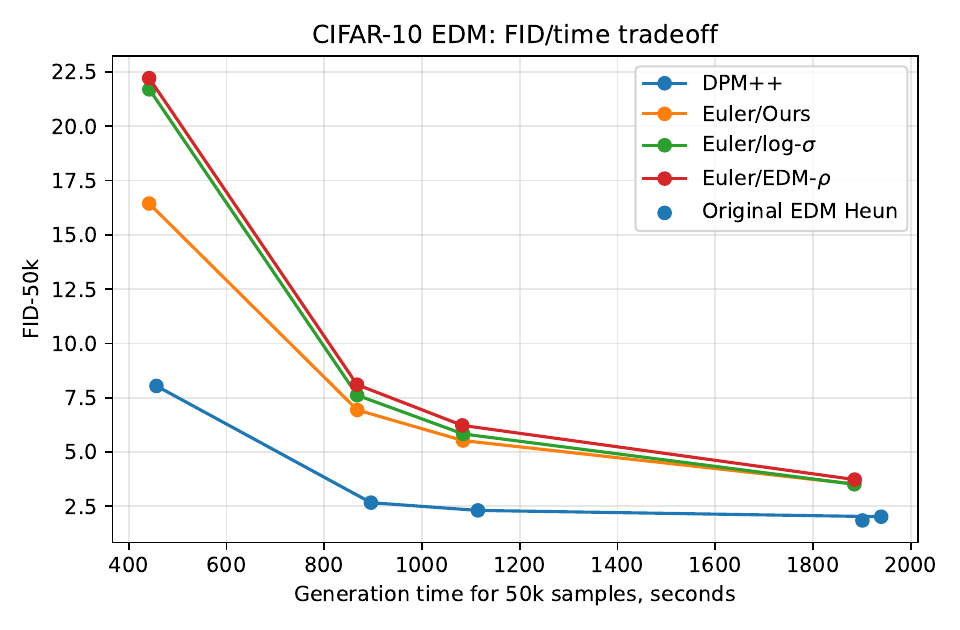}
\caption{FID-50k versus wall-clock generation time.  The Euler schedules lie at
nearly identical cost for each NFE.  DPM++ and EDM Heun reach lower FID by using
stronger update formulas, with a small implementation-dependent overhead for
DPM++.}
\label{fig:app-edm-cifar-tradeoff}
\end{figure}

\begin{figure}[H]
\centering
\begingroup
\small
\setlength{\tabcolsep}{5pt}
\renewcommand{\arraystretch}{1.08}
\begin{tabular}{@{}p{0.21\linewidth}p{0.75\linewidth}@{}}
\shortstack[l]{\textbf{Euler/EDM-$\rho$}\\[-1mm]{\footnotesize FID-50k 8.10}} &
\includegraphics[width=\linewidth]{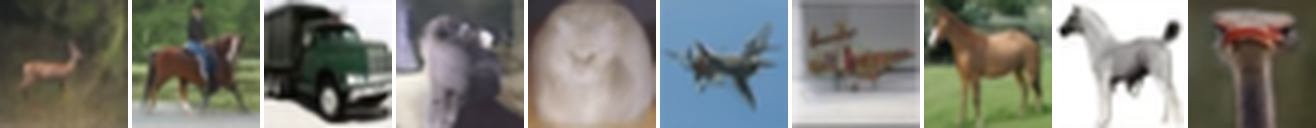}\\
\shortstack[l]{\textbf{Euler/log-$\sigma$}\\[-1mm]{\footnotesize FID-50k 7.62}} &
\includegraphics[width=\linewidth]{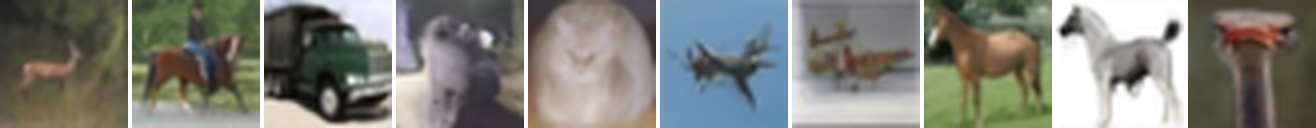}\\
\shortstack[l]{\textbf{Euler/Ours}\\[-1mm]{\footnotesize FID-50k 6.94}} &
\includegraphics[width=\linewidth]{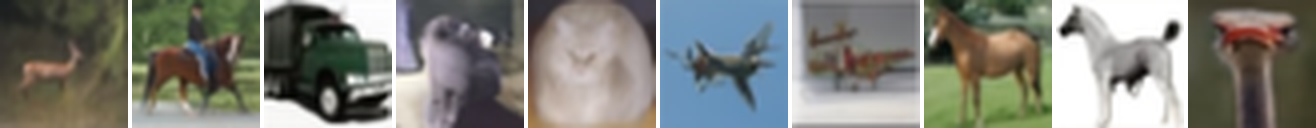}
\end{tabular}
\endgroup
\caption{Matched-seed CIFAR-10 samples at 16 NFE for the three Euler schedules.
As in the main-text figure, the comparison changes only timestep placement.}
\label{fig:app-edm-cifar-samples}
\end{figure}

\paragraph{Numerical protocol.}
The residual profile is estimated from $256$ trajectories on $31$ noise levels.
A $128$-step Heun trajectory supplies the fine reference, and a random-feature
Galerkin solve approximates the weighted-gradient projection.  The terminal
factor is estimated by propagating small residual-aligned perturbations with the
same fine solver.  All main-table FID values use $50{,}000$ generated images; generation
times in Table~\ref{tab:edm-cifar-equal-nfe-runtime} exclude FID computation.
The dense-limit endpoint study uses $1024$ matched samples, and
Table~\ref{tab:app-edm-1000nfe} uses an independent $5000$-sample FID run.

%% file: appendices/D_analytic_synthetic_experiments.tex
\section{Exact Mechanism Checks}
\label{app:synthetic-experiments}
\label{sec:synthetic-experiments}

The following analytic examples verify the mechanisms that motivate the
allocation rule.  They are deliberately small: the relevant marginals,
velocities, and density-active defects are available in closed form, so the
numerical observation can be compared directly with the theorem rather than
with a learned proxy.

\subsection{Gaussian cancellation and parity}
Consider
\begin{equation}
    p_t=\mathcal N(0,s(t)^2I_2),
    \qquad s(t)=1+t^2,
\end{equation}
and the feasible linear gauges
\begin{equation}
    w_t^\alpha(x)=\beta(t)x+\alpha\omega(t)Jx,
    \qquad
    \beta(t)=\frac{\dot s(t)}{s(t)},
    \qquad
    \omega(t)=\sqrt{\frac{\ddot s(t)}{s(t)}}.
    \label{eq:alpha-gauge}
\end{equation}
Here $J^\top=-J$ and $J^2=-I_2$.  The density-active acceleration coefficient is
$\frac{\ddot s(t)}{s(t)}(1-\alpha^2)$, hence the continuum bias functional
vanishes exactly at $\alpha=\pm1$.

\begin{figure}[!htbp]
\centering
\begin{minipage}[t]{0.48\linewidth}
\centering
\includegraphics[width=0.98\linewidth]{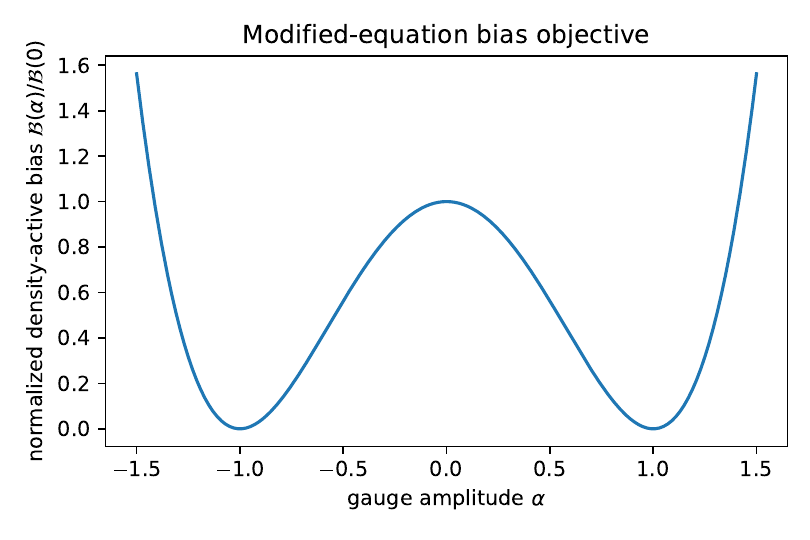}
\end{minipage}\hfill
\begin{minipage}[t]{0.48\linewidth}
\centering
\includegraphics[width=0.98\linewidth]{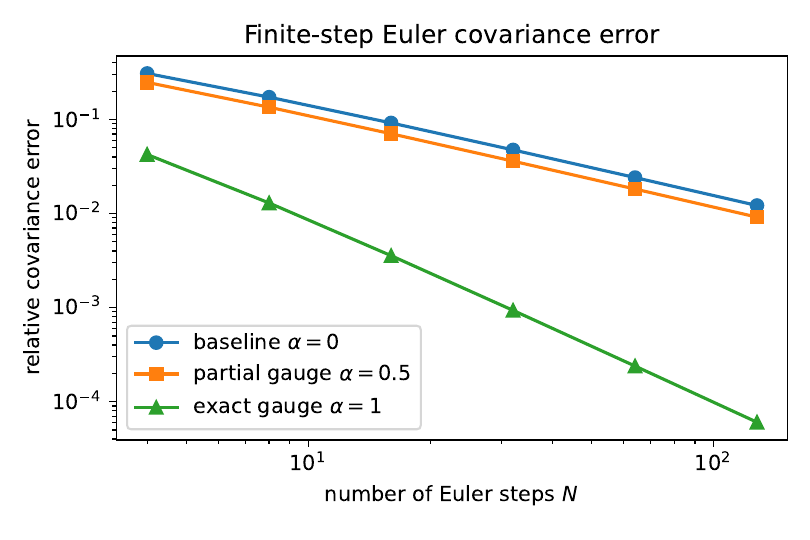}
\end{minipage}
\caption{Exact Gaussian gauge check.  Left: normalized density-active
modified-equation bias, with zeros at the predicted gauges $\alpha=\pm1$.
Right: finite-step Euler covariance error for the same continuum density path;
the cancelling skew gauge reduces the error at every displayed step count.}
\label{fig:synthetic}
\end{figure}

\begin{table}[!htbp]
\caption{Relative covariance error for the Gaussian path $s(t)=1+t^2$ under
Euler integration.}
\label{tab:gaussian-errors}
\centering
  \begingroup
  \small
  \setlength{\tabcolsep}{4.2pt}%
  \renewcommand{\arraystretch}{0.96}%
  \begin{adjustbox}{max width=0.58\linewidth,center}
    \input{figures/gaussian_errors_table.tex}  \end{adjustbox}%
  \endgroup

\end{table}

The same construction yields the parity obstruction: an orthogonal complex
structure can cancel the isotropic residual in even dimension, whereas one
scalar direction remains in odd dimension.  Direct minimization of the linear
residual gives zero to numerical precision in dimensions $2$ and $4$ and a
strictly positive residual in dimensions $3$ and $5$, matching
Theorem~\ref{thm:parity}.

\subsection{Why full material acceleration is not the allocation signal}
A time-dependent skew gauge can rotate rapidly while leaving the leading
marginal covariance defect equal to zero.  In the controlled four-dimensional
construction, the maximum density-active residual is approximately
$5\times10^{-15}$, whereas the full material-acceleration proxy reaches about
$136$.  Thus a schedule based on the full acceleration would concentrate nodes
around a large but marginal-invisible spike.  The example is an exact
counterexample to the rule ``more trajectory curvature always means more
marginal error''; no performance claim relies on the visual appearance of a
particular tick plot.

\begin{takeawaybox}
\textbf{Mechanism check.}
The Gaussian examples establish two points used later: a marginal-preserving
gauge can cancel the leading density error of a finite solver, and a large
trajectory-level acceleration can remain invisible to the marginal.  The
hard-GMM and CIFAR-10 studies then evaluate timestep allocation in nontrivial
distributional settings.
\end{takeawaybox}

%% file: figures/gaussian_errors_table.tex
\begin{tabular}{rrrr}
\toprule
$N$ & $\alpha=0$ & $\alpha=0.5$ & $\alpha=1$\\
\midrule
4 & 3.0856e-01 & 2.4783e-01 & 4.2401e-02\\
8 & 1.7267e-01 & 1.3500e-01 & 1.2940e-02\\
16 & 9.1856e-02 & 7.0496e-02 & 3.5626e-03\\
32 & 4.7453e-02 & 3.6023e-02 & 9.3324e-04\\
64 & 2.4128e-02 & 1.8209e-02 & 2.3871e-04\\
128 & 1.2167e-02 & 9.1540e-03 & 6.0355e-05\\
\bottomrule
\end{tabular}

%% file: appendices/E_hard_gmm_experimental_details.tex
\section{Hard-GMM Allocation Study}
\label{app:controlled-extra}

This appendix tests the scheduling consequence on a controlled eight-mode
Gaussian-mixture path with unequal weights, rotating and separating centers,
and decreasing component variance.  Unlike the exact Gaussian checks, the
proxies are correlated rather than artificially separated; the experiment asks
whether the theory-linked density-active schedule remains competitive under a
fixed low-NFE budget.

\subsection{Primary protocol}
\label{app:experimental-protocol}
The profile is evaluated at $31$ times using $1536$ particles per time.
Schedules are constructed from the density-active residual, the full material-
acceleration norm, or the velocity norm.  The allocation exponent is either
fixed at the theoretical Euler value $\gamma=1/2$ or selected on disjoint
validation seeds.  Final metrics use independent test seeds.

\begin{table}[!htbp]
\caption{Primary hard-GMM evaluation protocol.}
\label{tab:protocol-proxy-ablation}
\centering
\small
\setlength{\tabcolsep}{5pt}
\begin{tabular}{p{0.30\linewidth}p{0.54\linewidth}}
\toprule
Quantity & Value \\
\midrule
Profile grid & $31$ points in $[0,1]$, $1536$ particles per time \\
Sampling budget & $2048$ particles per evaluation seed \\
Validation/test split & validation selects $\gamma$; test seeds are disjoint \\
MMD & RBF MMD with deterministic truncation to $512$ points \\
Sliced Wasserstein & $128$ random directions \\
Schedule family & uniform or proxy quantiles with $\gamma\in\{0.25,0.5,0.75,1\}$ \\
Fair NFE convention & one Euler interval costs one field evaluation; one Heun interval costs two \\
\bottomrule
\end{tabular}
\normalsize
\end{table}

\subsection{Theory-linked fixed-exponent comparison}
The three profiles are
\[
\begin{aligned}
R_{\rm dens}(t)
&=\left(\mathbb E_{p_t}\left[(\nabla\cdot a_v+
\langle\nabla\log p_t,a_v\rangle)^2\right]\right)^{1/2},\\
R_{\rm acc}(t)&=\left(\mathbb E_{p_t}\|a_v\|^2\right)^{1/2},
\qquad
R_{\rm speed}(t)=\left(\mathbb E_{p_t}\|v_t\|^2\right)^{1/2}.
\end{aligned}
\]
Legacy filenames call $R_{\rm acc}$ ``curvature,'' but the measured quantity is
the full material-acceleration norm.  On this path all three proxies increase
near the difficult late-time region, so their schedules are expected to be
similar; the exact Gaussian example in Appendix~\ref{app:synthetic-experiments}
is the selective counterexample where they separate.

\begin{table}[!htbp]
\caption{Theory-linked hard-GMM comparison with the fixed Euler exponent
$\gamma=1/2$.}
\label{tab:hard-gmm-fixed-gamma}
\centering
  \begingroup
  \small
  \setlength{\tabcolsep}{4.2pt}%
  \renewcommand{\arraystretch}{0.96}%
  \begin{adjustbox}{max width=0.88\linewidth,center}
    \input{figures/hard_gmm_fixed_gamma_table.tex}  \end{adjustbox}%
  \endgroup

\end{table}

\begin{figure}[!htbp]
\centering
\includegraphics[width=0.72\linewidth]{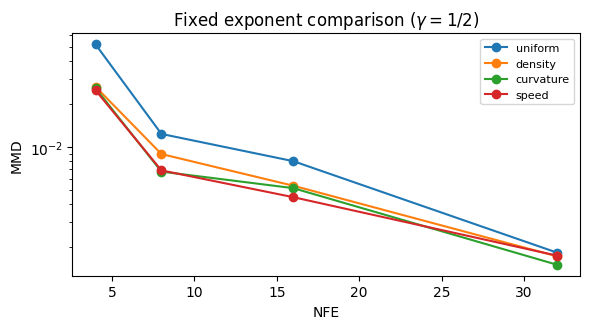}
\caption{Terminal MMD at fixed $\gamma=1/2$.  Every nonuniform schedule improves
on uniform Euler.  The close values of the three proxies reflect their shared
late-time concentration on this benchmark, rather than equivalence of the
underlying error signals.}
\label{fig:hard-gmm-fixed-gamma}
\end{figure}

\subsection{Validation-selected and mixed-order baselines}
Appendix~\ref{app:unified-new-experiments} reports the disjoint
validation/test comparison against embedded Euler--Heun refinement, an
Jo--Choi velocity-variation schedule, a particle-only dual residual,
directional propagation weighting, a trajectory oracle, and fair-NFE Heun.
A separate hybrid experiment spends Heun corrections only on intervals selected
by a proxy; the density-active signal can therefore be used either to move
Euler nodes or to allocate a mixed-order budget.

\begin{table}[!htbp]
\caption{Hybrid Euler/Heun allocation under a fixed NFE budget.  On this
correlated benchmark the three nonuniform selectors choose nearly the same
late-time intervals and all improve over uniform selection.}
\label{tab:hard-gmm-hybrid}
\centering
  \begingroup
  \small
  \setlength{\tabcolsep}{4.2pt}%
  \renewcommand{\arraystretch}{0.96}%
  \begin{adjustbox}{max width=0.76\linewidth,center}
    \input{figures/hard_gmm_hybrid_table.tex}  \end{adjustbox}%
  \endgroup

\end{table}

\begin{takeawaybox}
\textbf{Hard-GMM takeaway.}
The density-active schedule improves over uniform Euler at the theory-predicted
exponent and remains competitive with simpler geometric proxies when the
proxies are strongly correlated.  The purpose of the exact distractor is to
show why the density projection is the principled signal; the purpose of this
mixture study is to test that the resulting schedule remains useful in a hard,
non-Gaussian path.
\end{takeawaybox}

%% file: figures/hard_gmm_fixed_gamma_table.tex
\begin{tabular}{@{}clcrrrr@{}}
\toprule
NFE & Proxy & $\gamma$ & MMD $\downarrow$ & SW $\downarrow$ & Wt. err. $\downarrow$ & Leak. $\downarrow$ \\
\midrule
8  & Uniform   & 0.00 & 0.01235 & 0.7819 & 0.1196 & 1.0000 \\
8  & Density   & 0.50 & 0.008942 & 0.5546 & 0.2264 & 0.8704 \\
8  & Mat. acc. & 0.50 & 0.006744 & 0.5673 & 0.1221 & 0.9993 \\
8  & Speed     & 0.50 & 0.006880 & 0.5749 & 0.1517 & 0.9966 \\
16 & Uniform   & 0.00 & 0.007978 & 0.4685 & 0.0857 & 0.9764 \\
16 & Density   & 0.50 & 0.005379 & 0.3247 & 0.1178 & 0.2305 \\
16 & Mat. acc. & 0.50 & 0.005172 & 0.3463 & 0.0891 & 0.5260 \\
16 & Speed     & 0.50 & 0.004475 & 0.3376 & 0.0929 & 0.4727 \\
\bottomrule
\end{tabular}

%% file: figures/hard_gmm_hybrid_table.tex
\begin{tabular}{@{}clcccc@{}}
\toprule
NFE & Selector & $K$ & $q$ & MMD $\downarrow$ & SW $\downarrow$ \\
\midrule
8  & Uniform   & 5  & 3  & 0.01636 & 0.6496 \\
8  & Density   & 5  & 3  & 0.003201 & 0.3333 \\
8  & Mat. acc. & 5  & 3  & 0.003201 & 0.3333 \\
8  & Speed     & 5  & 3  & 0.003201 & 0.3333 \\
16 & Uniform   & 10 & 6  & 0.001887 & 0.2866 \\
16 & Density   & 10 & 6  & 0.001766 & 0.1459 \\
16 & Mat. acc. & 10 & 6  & 0.001766 & 0.1459 \\
16 & Speed     & 10 & 6  & 0.001766 & 0.1459 \\
32 & Uniform   & 21 & 11 & 0.000455 & 0.2053 \\
32 & Density   & 21 & 11 & 0.000386 & 0.1311 \\
32 & Mat. acc. & 21 & 11 & 0.000386 & 0.1311 \\
32 & Speed     & 21 & 11 & 0.000386 & 0.1311 \\
\bottomrule
\end{tabular}

%% file: appendices/M_unified_baselines_and_estimator_validation.tex
\section{Unified Additional Baselines and Estimator Validation}
\label{app:unified-new-experiments}

Every method below is evaluated on the same hard-GMM path, with the same metric implementation and independent validation/test seeds.  The Jo--Choi baseline implements the fixed-budget velocity-variation resampling principle of \citet{Jo2026SamplingDesign} on the same analytic path and evaluation protocol.

\subsection{Protocol}
The analytic eight-mode path and evaluation metrics are unchanged.  Residual profiles use $31$ time points and $1536$ particles per point.  Four validation seeds choose $\gamma$ from
$\{0.15,0.25,0.35,0.5,0.75,1\}$; the selected schedule is then evaluated on eight disjoint test seeds.  Each test seed uses $1536$ generated and target samples, an RBF MMD computed from $384$ points, and $64$ sliced-Wasserstein directions.  Tables report mean $\pm$ standard error over the eight test seeds.

\begin{keybox}
\textbf{Compared schedules.}
\begin{enumerate}
\item \textbf{Density, acceleration, and speed with validation selection.}  The exponent is selected only on validation seeds and frozen before test evaluation.
\item \textbf{Embedded Euler--Heun AMR.}  Starting from $[0,1]$, the interval with largest pilot RMS Euler--Heun discrepancy is bisected until the requested number of Euler intervals is reached.
\item \textbf{Jo--Choi velocity variation.}  On a $128$-interval pilot trajectory we estimate
\[
\eta_i=\left(\mathbb E\|v_{t_{i+1}}(X_{t_{i+1}})-v_{t_i}(X_{t_i})\|^2\right)^{1/2}.
\]
The fixed-NFE grid is uniform in the cumulative length $\sum_i\eta_i^{1/2}$, matching the square-root resampling power used by the official N-step resampling construction of \citet{Jo2026SamplingDesign}.
\item \textbf{Particle dual.}  The residual identity~(\ref{eq:dual-particle-residual}) is optimized in a finite critic space consisting of $64$ Gaussian RBFs and low-order polynomial features.  This estimator uses particles and material acceleration, but no score and no explicit divergence.
\item \textbf{Directional-$K$ density.}  A directional tangent-amplification proxy is estimated by perturbing particles along $a_v$ and propagating both copies to $t=1$ with fine RK4.  Because raw amplification is heavy-tailed near responsibility boundaries, we take the median over three pilot seeds, apply a width-three median filter in log scale, and clip only to its empirical $5$--$95\%$ range.  This is explicitly a robust proxy for $K(1,t)$, not an operator-norm estimate.
\item \textbf{Coupling oracle.}  A differentiable grid minimizes samplewise squared deviation from a $256$-step RK4 reference on separate training particles.  It is a trajectory-coupling oracle and need not minimize MMD.
\item \textbf{Fair-NFE Heun.}  A uniform Heun solver uses $N/2$ intervals under an $N$-evaluation budget.
\end{enumerate}
\end{keybox}

\subsection{Unified results}
\begin{table}[!htbp]
\caption{Unified hard-GMM test results after validation-only hyperparameter selection.  The coupling oracle is optimized for trajectory MSE, not for either displayed distribution metric.}
\label{tab:unified-baselines-full}
\centering
  \begingroup
  \small
  \setlength{\tabcolsep}{4.2pt}%
  \renewcommand{\arraystretch}{0.96}%
  \begin{adjustbox}{max width=0.84\linewidth,center}
    \input{figures/unified_baselines_table.tex}  \end{adjustbox}%
  \endgroup

\end{table}

\begin{figure}[!htbp]
\centering
\includegraphics[width=0.73\textwidth]{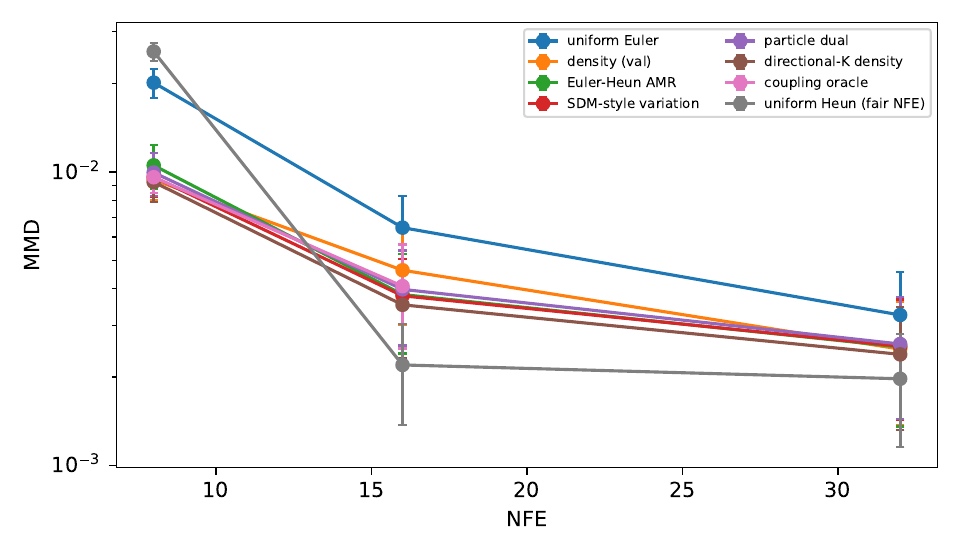}
\caption{MMD under the unified protocol.  Density-active schedules remain competitive with embedded Euler--Heun refinement and Jo--Choi velocity-variation resampling.  Fair-NFE Heun illustrates the complementary benefit of higher-order integration as the budget supports more second-order intervals.}
\label{fig:unified-baselines-mmd}
\end{figure}

The validation-selected density schedule reduces MMD from $0.02013\pm0.00232$ to $0.00939\pm0.00138$ at NFE $8$, and from $0.00646\pm0.00184$ to $0.00462\pm0.00161$ at NFE $16$.  The Euler--Heun AMR and Jo--Choi schedules are comparable: at NFE $8$ their MMD values are $0.01052\pm0.00179$ and $0.00960\pm0.00140$, respectively.  Thus the density-active signal is not merely beating a uniform grid; it lies on the same controlled Pareto scale as two solver-error or Wasserstein-variation alternatives.

The robust directional-$K$ weighting gives the lowest displayed Euler MMD at NFE $8$ and $16$, namely $0.00923\pm0.00134$ and $0.00353\pm0.00121$.  Its close agreement with the residual-only schedule supports propagation weighting as a refinement, while the robust median filtering stabilizes the directional sensitivity profile near responsibility boundaries.

The trajectory-coupling oracle reaches MMD $0.00959\pm0.00114$ at NFE $8$ and $0.00408\pm0.00158$ at NFE $16$.  It achieves comparable terminal MMD while minimizing a distinct paired-trajectory objective.  This makes the trajectory-versus-density distinction operational in a second way, complementary to the solenoidal distractor.

\subsection{Particle-only residual}
\begin{figure}[!htbp]
\centering
\includegraphics[width=0.73\textwidth]{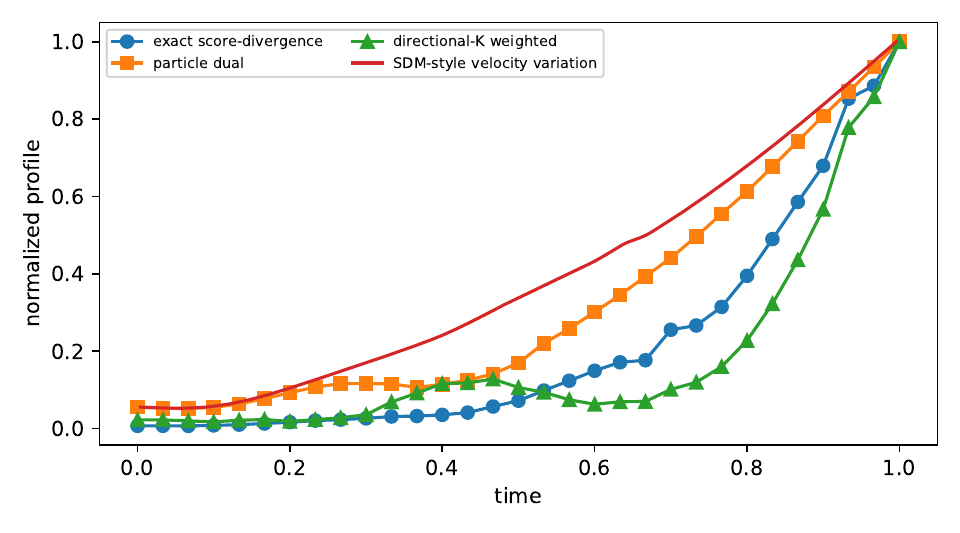}
\caption{Exact score--divergence, particle-only dual, robust directional-$K$-weighted, and Jo--Choi velocity-variation profiles, normalized independently.}
\label{fig:profile-estimators}
\end{figure}

Across the $31$ profile times, the finite RBF dual estimator has Pearson correlation $0.9760$ and Spearman correlation $0.9907$ with the exact score--divergence profile.  Its fixed-$\gamma=1/2$ schedule yields MMD $0.00995\pm0.00166$ at NFE $8$ and $0.00398\pm0.00142$ at NFE $16$.  Hence the unknown-density route in the dual residual identity~(\ref{eq:dual-particle-residual}) is not only formal: in this controlled experiment it recovers the ordering of difficult time regions and produces a competitive grid without access to $\nabla\log p_t$.

\paragraph{Profile cost and amortization.}
On the reported CPU environment, a $31\times1024$ exact score--divergence
profile takes $1.21$ seconds and the finite particle-dual profile takes $2.76$
seconds.  For a batch of $1536$ particles, uniform Euler at NFE $16$ takes
$0.0310$ seconds, while density-scheduled Euler at NFE $8$ takes $0.0159$
seconds.  Under this implementation, amortizing the exact profile requires
about $80$ batches ($1.22\times10^5$ generated samples), while the particle-dual
profile requires about $182$ batches ($2.80\times10^5$ samples).  These values
are implementation-specific and are included only to make the offline/online
cost separation explicit.

\begin{takeawaybox}
\textbf{Unified evidence.}  Hyperparameters are chosen on disjoint validation seeds; density scheduling is compared against embedded local-error refinement and an Jo--Choi Wasserstein-variation schedule; the particle-only dual residual is empirically validated; and profile cost is converted into a concrete amortization threshold.  The protocol therefore combines tuning, baseline comparison, score-free residual estimation, and explicit profile-cost amortization in one controlled evaluation.
\end{takeawaybox}

%% file: figures/unified_baselines_table.tex
\begin{tabular}{@{}clcc@{}}
\toprule
NFE & Method & MMD $\downarrow$ & SW $\downarrow$ \\
\midrule
8 & Uniform Euler & 0.0201 $\pm$ 0.002 & 0.7987 $\pm$ 0.017 \\
8 & Density (val.) & 0.0094 $\pm$ 0.001 & 0.5528 $\pm$ 0.016 \\
8 & Acceleration (val.) & 0.0096 $\pm$ 0.001 & 0.5883 $\pm$ 0.015 \\
8 & Speed (val.) & 0.0095 $\pm$ 0.001 & 0.5867 $\pm$ 0.015 \\
8 & Euler--Heun AMR & 0.0105 $\pm$ 0.002 & 0.5469 $\pm$ 0.019 \\
8 & SDM variation & 0.0096 $\pm$ 0.001 & 0.5811 $\pm$ 0.022 \\
8 & Particle dual & 0.0099 $\pm$ 0.002 & 0.5589 $\pm$ 0.020 \\
8 & Directional-$K$ & 0.0092 $\pm$ 0.001 & 0.5225 $\pm$ 0.024 \\
8 & Coupling oracle & 0.0096 $\pm$ 0.001 & 0.5523 $\pm$ 0.022 \\
8 & Uniform Heun & 0.0257 $\pm$ 0.002 & 0.8937 $\pm$ 0.015 \\
\midrule
16 & Uniform Euler & 0.0065 $\pm$ 0.002 & 0.4305 $\pm$ 0.012 \\
16 & Density (val.) & 0.0046 $\pm$ 0.002 & 0.3509 $\pm$ 0.011 \\
16 & Acceleration (val.) & 0.0036 $\pm$ 0.0009 & 0.3015 $\pm$ 0.017 \\
16 & Speed (val.) & 0.0037 $\pm$ 0.0010 & 0.3155 $\pm$ 0.015 \\
16 & Euler--Heun AMR & 0.0038 $\pm$ 0.001 & 0.3079 $\pm$ 0.016 \\
16 & SDM variation & 0.0038 $\pm$ 0.001 & 0.3130 $\pm$ 0.013 \\
16 & Particle dual & 0.0040 $\pm$ 0.001 & 0.3145 $\pm$ 0.014 \\
16 & Directional-$K$ & 0.0035 $\pm$ 0.001 & 0.2865 $\pm$ 0.015 \\
16 & Coupling oracle & 0.0041 $\pm$ 0.002 & 0.3119 $\pm$ 0.012 \\
16 & Uniform Heun & 0.0022 $\pm$ 0.0008 & 0.1651 $\pm$ 0.012 \\
\bottomrule
\end{tabular}

%% file: appendices/P_theorem_linked_experiments.tex
\section{Additional Theorem-Linked Quantitative Checks}
\label{app:m1-experiments}

This appendix collects compact numerical checks whose quantities are known
analytically or can be evaluated against an independently optimized oracle.
They support the local-order, propagation, and robustness statements without
serving as the main empirical benchmark; the pretrained CIFAR-10 result remains
the primary image-model validation.

\subsection{Analytic mixture path and objective specificity}
The two-dimensional path contains eight unequal-weight Gaussian components whose
centers rotate and separate while their common variance contracts.
Responsibilities give an analytic global velocity satisfying the continuity
equation; the normalized finite-difference continuity residual is below
$10^{-8}$ RMS.  At $N=8$, four independently constructed grids are compared on
held-out particles.

\begin{table}[!htbp]
\centering
\caption{Analytic GMM grid comparison at $N=8$.  MMD is mean $\pm$ standard
error over three evaluation seeds.  The density-quantile grid exactly minimizes
the local surrogate, while the terminal and trajectory oracles optimize
different objectives.}
\label{tab:m1-oracle-geometry}
\small
\setlength{\tabcolsep}{4.5pt}
\begin{tabular}{@{}lccc@{}}
\toprule
Grid & Test MMD & Local surrogate & $d(\pi,\pi_{\rm marg})$\\
\midrule
Uniform & $0.0400\pm0.0094$ & $1.184$ & $0.0858$\\
Density quantile & $0.0363\pm0.0175$ & $\mathbf{1.000}$ & $0.1412$\\
Trajectory oracle & $0.0357\pm0.0153$ & $1.040$ & $0.1272$\\
Marginal RFF oracle & $0.0381\pm0.0107$ & $2.377$ & $0$\\
\bottomrule
\end{tabular}
\normalsize
\end{table}

The terminal marginal oracle minimizes a fixed random-feature MMD on separate
optimization particles.  Its local-surrogate value is therefore not expected
to be optimal.  Conversely, the density grid is closer to the trajectory oracle
than to the terminal-feature oracle.  The result makes the paper's distinction
operational: local marginal error, paired-trajectory error, and a terminal
feature metric can induce different timestep geometries.

\subsection{Order, propagation, and profile robustness}
Table~\ref{tab:m1-theorem-checks} summarizes the theorem-linked checks in a compact numerical form.  Reporting the numerical quantities
directly avoids over-interpreting finite collections of points as performance
curves.

\begin{table}[!htbp]
\centering
\caption{Compact theorem-linked checks.  The propagation row reports terminal
$W_2$ at $N=16$; the robustness row compares the coarsest and finest grids.}
\label{tab:m1-theorem-checks}
\small
\setlength{\tabcolsep}{4.5pt}
\begin{tabular}{p{0.25\linewidth}p{0.28\linewidth}p{0.36\linewidth}}
\toprule
Claim & Prediction & Observed check\\
\midrule
Local marginal order & radial order $2$; cancelled even-dimensional skew order $3$; odd-dimensional obstruction order $2$
& fitted tail slopes $2.000$, $2.998$--$3.002$, and $2.000$--$2.00004$, respectively\\
Propagation weighting & $K(1,t)R(t)$ approaches the terminal-grid oracle
& at $N=16$: uniform $0.18544$, residual-only $0.19615$, $K$-weighted $0.18514$, oracle $0.18509$\\
Profile robustness & the continuous $\chi^2$ identity is recovered as the quantile grid is refined
& correlation increases from $0.891$ at $N=8$ to $0.995$ at $N=64$; median relative discrepancy decreases from $0.751$ to $0.125$\\
\bottomrule
\end{tabular}
\normalsize
\end{table}

The residual-only schedule can be worse on the anisotropic propagation example
because it ignores how a local perturbation is amplified before the terminal
time.  The $K$-weighted grid nearly matches the direct oracle, which motivates
the empirical terminal factor used in the CIFAR-10 experiment.  The robustness
check also supports the square-root allocation: under multiplicative profile
noise, it perturbs the grid less than the linear-power alternative.

%% file: appendices/H_applications_and_optimization_relevance.tex
\section{Applications and Optimization Relevance}
\label{app:applications}

The framework has three optimization-oriented applications.

\begin{takeawaybox}
\textbf{Adaptive low-NFE schedules.}  The most robust application is timestep optimization.  The density-active residual $R(t)$ is derived from a marginal backward-error identity, rather than from a purely trajectory-level local error.  Theorem~\ref{thm:propagated-defect} shows that the terminally relevant profile is $K(1,t)R_{\rm loc}(t)$, and Theorem~\ref{thm:schedule-opt} gives the corresponding constrained allocation.  The controlled residual-only ablations set $K\equiv1$, while the pretrained
EDM experiment estimates a reusable empirical propagation factor $K_{\rm emp}$.
\end{takeawaybox}

\paragraph{Solver diagnostics.}
The residual
\[
\nabla\cdot(p_ta_v)
=p_t\bigl(\nabla\cdot a_v+\langle\nabla\log p_t,a_v\rangle\bigr)
\]
identifies where a finite-step solver creates density-level bias.  This can be used independently of any velocity correction: high-residual intervals should receive more steps or a higher-order local solver.  Conversely, a high full material-acceleration interval should not automatically be treated as important unless its acceleration is density-active; the Gaussian curvature-distractor experiment gives an explicit counterexample.

\paragraph{Gauge-aware sampler design.}
The gauge theory complements the schedule layer by characterizing numerically favorable representatives.  It explains why marginal-preserving perturbations can be harmful after discretization and gives a trust-region formulation for safe corrections:
\[
\min_{u\in\cU_p}\cB(v+u)
\quad\text{s.t.}\quad
\frac{\int\E_{p_t}\|u_t\|^2dt}{\int\E_{p_t}\|v_t\|^2dt}\le \rho^2.
\]
The controlled experiments suggest that this layer should be conservative and combined with line search or validation, rather than used as an unrestricted learned correction.

\paragraph{Diversity with numerical-bias control.}
Divergence-free particle guidance can increase diversity, but it can also increase finite-step density bias.  The constrained formulation allows one to add diversity-promoting gauges under a side constraint on $\cB(v+u)$:
\[
\max_{u\in\cU_p}\operatorname{Diversity}(v+u)
\quad\text{s.t.}\quad
\cB(v+u)\le \cB(v)+\varepsilon .
\]
This separates continuum marginal preservation from numerical harmlessness.

%% file: appendices/N_local_optimal_transport_interpretation.tex
\section{Local Optimal-Transport Interpretation}
\label{app:local-ot}

This appendix proves the Euler specialization of Theorem~\ref{thm:local-w2-defect} and the isotropic Gaussian separation stated in Section~\ref{sec:gauge-consequences-main}; Appendix~\ref{app:solver-aware-limits} gives the order-$r$ argument.  We work on a compact boundaryless domain or on $\mathbb R^d$ with sufficient decay and finite second moments.  Assume that $p_t$ is smooth and strictly positive, $w_t$ has the derivatives required for the Taylor expansions below, and the involved density perturbations lie in a regular Wasserstein chart around $p_t$.  The standard tangent characterization of $W_2$ identifies a perturbation
\begin{equation}
\rho_\varepsilon=\rho-\varepsilon\nabla\cdot(\rho\xi)+o(\varepsilon)
\end{equation}
with tangent norm
\begin{equation}
W_2(\rho_\varepsilon,\rho)
=|\varepsilon|\,\|P_{\nabla,\rho}\xi\|_{L^2(\rho)}+o(|\varepsilon|),
\label{eq:w2-tangent-chart}
\end{equation}
consistent with the Otto--Benamou--Brenier geometry \citep{BenamouBrenier2000,Otto2001,AmbrosioGigliSavare2008}.

\subsection{Proof of the local defect formula}
Let $q_{t,h}^w=(\operatorname{Id}+hw_t)_\#p_t$.  For a smooth test function $\varphi$,
\begin{align}
\int\varphi\,dq_{t,h}^w
&=\int\varphi(x+hw_t(x))p_t(x)\,dx\\
&=\int\varphi p_t
+h\int\langle\nabla\varphi,w_t\rangle p_t
+\frac{h^2}{2}\int w_t^\top\nabla^2\varphi\,w_t\,p_t
+O(h^3).
\label{eq:euler-pushforward-test}
\end{align}
For $F(t)=\int\varphi p_t$, the continuity equation and the transport differentiation rule give
\begin{align}
F'(t)&=\int\langle\nabla\varphi,w_t\rangle p_t,\\
F''(t)&=\int\left(w_t^\top\nabla^2\varphi\,w_t+\langle\nabla\varphi,a_w\rangle\right)p_t.
\end{align}
Consequently,
\begin{equation}
\int\varphi\,(q_{t,h}^w-p_{t+h})
=-\frac{h^2}{2}\int\langle\nabla\varphi,a_w\rangle p_t+O(h^3),
\end{equation}
or, in distributions,
\begin{equation}
q_{t,h}^w-p_{t+h}
=\frac{h^2}{2}\nabla\cdot(p_ta_w)+O(h^3)
=-\frac{h^2}{2}\nabla\cdot(p_t(-a_w))+O(h^3).
\label{eq:euler-exact-density-difference}
\end{equation}
Applying \eqref{eq:w2-tangent-chart} with $\varepsilon=h^2/2$ and $\xi=-a_w$ yields
\begin{equation}
W_2(q_{t,h}^w,p_{t+h})
=\frac{h^2}{2}\|P_{\nabla,p_t}a_w\|_{L^2(p_t)}+o(h^2).
\end{equation}
Proposition~\ref{prop:hminus-main} gives the equivalent $\dot H^{-1}(p_t)$ formula.

Squaring the expansion gives
\begin{equation}
\frac{2}{h_k^3}W_2^2(q_{t_k,h_k}^w,p_{t_{k+1}})
=\frac{h_k}{2}\|P_{\nabla,p_{t_k}}a_w\|_{L^2(p_{t_k})}^2+o(h_k).
\end{equation}
Thus, for regular refining partitions $\pi=\{t_k\}$,
\begin{equation}
\cB(w)=\lim_{|\pi|\to0}\sum_k\frac{2}{h_k^3}
W_2^2\!\left((\operatorname{Id}+h_kw_{t_k})_\#p_{t_k},p_{t_{k+1}}\right).
\label{eq:bias-as-w2-action}
\end{equation}  Likewise,
\begin{equation}
\sum_kW_2(q_{t_k,h_k}^w,p_{t_{k+1}})
=\frac12\sum_k h_k^2R_{W_2}(t_k)+o\!\left(\sum_kh_k^2\right).
\end{equation}
Writing $h(t)=n(t)^{-1}$ converts the leading Riemann sum to
$\frac12\int R_{W_2}(t)n(t)^{-1}dt$, which is the Euler case of the
solver-order allocation rule \eqref{eq:order-r-allocation}.

\subsection{Isotropic Gaussian separation}
Let $p_t=\mathcal N(0,s(t)^2I_d)$.  For the gradient lift $v_t(x)=\beta(t)x$ with $\beta=\dot s/s$, Euler produces
\begin{equation}
(\operatorname{Id}+hv_t)_\#p_t
=\mathcal N\!\left(0,(s+h\dot s)^2I_d\right).
\end{equation}
For centered isotropic Gaussians, $W_2(\mathcal N(0,\sigma_1^2I_d),\mathcal N(0,\sigma_2^2I_d))=\sqrt d\,|\sigma_1-\sigma_2|$.  Since
\begin{equation}
s(t+h)=s+h\dot s+\frac{h^2}{2}\ddot s+O(h^3),
\end{equation}
we obtain the displayed $O(h^2)$ formula in Section~\ref{sec:gauge-consequences-main}.

Now let $B=\beta I_d+A$ with $A^\top=-A$ and $A^2=-\kappa I_d$, $\kappa=\ddot s/s>0$.  The Euler covariance is
\begin{align}
\Sigma^E_{t+h}
&=s^2(I+hB)(I+hB)^\top\\
&=s^2\left[I+2h\beta I+h^2(\beta^2I-A^2)\right]\\
&=s^2\left[I+2h\beta I+h^2(\beta^2+\kappa)I\right].
\end{align}
On the other hand,
\begin{equation}
s(t+h)^2
=s^2+2hs\dot s+h^2(\dot s^2+s\ddot s)+O(h^3)
=s^2\left[1+2h\beta+h^2(\beta^2+\kappa)\right]+O(h^3).
\end{equation}
The covariance, hence its positive square root, agrees through order $h^2$, which proves the displayed $O(h^3)$ formula in Section~\ref{sec:gauge-consequences-main}.  The field $Ax$ is nonzero and rotational, so the lift is not gradient and is not the Otto-horizontal minimum-kinetic-energy representative.  It is nevertheless Euler-optimal to the displayed order.

%% file: appendices/O_solver_aware_variational_limits.tex
\section{Solver-Aware Extensions and Variational Limits}
\label{app:solver-aware-limits}

\begin{takeawaybox}
\textbf{Variational takeaway.} This appendix collects the proof details linking local modified equations, propagated allocation, robustness to residual misspecification, and the Euler $\Gamma$-limit.  The assumptions are stated uniformly over admissible representatives so that the resulting objectives can be used for optimization, not only pointwise error expansion.
\end{takeawaybox}

This appendix proves the order-$r$ local result, the exact allocation and
robustness statements, the propagation separation, and the Euler
$\Gamma$-convergence theorem.  The assumptions are stated so that the
uniformity required for optimization is explicit.

\subsection{\texorpdfstring{Order-$r$}{Order-r} density and Wasserstein defects}
Assume that the numerical map $\Psi_h^w$ is the time-$h$ map of
\[
w_h=w+h^r\mathcal E_r[w]+h^{r+1}\rho_h[w],
\qquad
\sup_{w\in\mathcal A,h<h_0}\|\rho_h[w]\|_{C_x^1}<\infty .
\]
Subtracting the continuity equation for $w$ gives
\[
\partial_tp_t+\nabla\cdot(p_tw_h)
=h^r\nabla\cdot(p_t\mathcal E_r[w])
+h^{r+1}\nabla\cdot(p_t\rho_h[w]),
\]
which proves \eqref{eq:residual-general}.  Equivalently, linearize the
continuity equation around $w$ in the velocity perturbation
$h^r\mathcal E_r[w]+O(h^{r+1})$.  Over a time interval of length $h$, variation
of constants yields, for every smooth test function $\varphi$,
\begin{align}
\int\varphi\,d\!\left((\Psi_h^w)_\#p_t-p_{t+h}\right)
&=h^{r+1}\int p_t
\langle\mathcal E_r[w],\nabla\varphi\rangle\,dx
+o(h^{r+1})\\
&=\left\langle
-h^{r+1}\nabla\cdot(p_t\mathcal E_r[w]),\varphi
\right\rangle+o(h^{r+1}).
\end{align}
This proves \eqref{eq:one-step-density-general}.

In a regular Wasserstein chart, a density perturbation
$-\varepsilon\nabla\cdot(p_tz)+o(\varepsilon)$ has tangent norm
\[
W_2\!\left(p_t-\varepsilon\nabla\cdot(p_tz)+o(\varepsilon),p_t\right)
=|\varepsilon|\|P_{\nabla,p_t}z\|_{L^2(p_t)}+o(|\varepsilon|).
\]
Taking $\varepsilon=h^{r+1}$ and $z=\mathcal E_r[w]$ proves
Theorem~\ref{thm:local-w2-defect}.  Uniformity over $\mathcal A$ follows if
the modified-equation remainder and the Wasserstein-chart remainder are
uniform there.  For Euler, $\mathcal E_1=-a_w/2$, so the formula agrees with
the direct calculation in Appendix~\ref{app:local-ot}.

\subsection{Propagation and solver order}
The velocity-to-density derivative satisfies
\[
\eta_1=-\int_0^1U_w(1,t)\nabla\cdot(p_tb_t)\,dt
\]
as shown in Appendix~\ref{app:propagation-proof}.  A nonuniform order-$r$
modified field has leading perturbation
$b_t=h(t)^r\mathcal E_r[w](t)$ and a remainder
$O(\|h\|_\infty^{r+1})$, provided the step profile varies regularly.
Applying \eqref{eq:transport-stability} gives
\[
\|\eta_1\|_X
\le\int_0^1K(1,t)h(t)^r
\|\nabla\cdot(p_t\mathcal E_r[w])\|_Y\,dt,
\]
and the differentiability remainder proves
Theorem~\ref{thm:propagated-defect}.

\subsection{Exact continuous and discrete allocation}
For generic $q>0$, write
\[
\mathcal J_q^W[n]=\int_0^1W(t)n(t)^{-q}\,dt,
\qquad
\int_0^1n=N.
\]
The Lagrangian derivative is
$-qWn^{-q-1}+\lambda$.  Strict convexity gives the unique stationary point
\[
n^\star(t)=\frac{NW(t)^{1/(q+1)}}{\int_0^1W^{1/(q+1)}}.
\]

For the discrete statement, set
$\alpha=1/(q+1)$,
$a_k=\int_{t_k}^{t_{k+1}}W^\alpha$, and
$Z=\sum_ka_k$.  Jensen's inequality for the strictly convex map
$x\mapsto x^{q+1}$ gives
\[
\mathcal S_q(\pi)=\sum_ka_k^{q+1}
\ge N\left(\frac ZN\right)^{q+1}
=\frac{Z^{q+1}}{N^q}.
\]
Equality holds exactly when every $a_k=Z/N$, which is
\eqref{eq:exact-quantile-grid}.

For the quantile-mesh asymptotics, the mean-value theorem gives
\[
\frac ZN
=\int_{t_k}^{t_{k+1}}W(t)^\alpha\,dt
=W(\xi_k)^\alpha h_k
\]
for some $\xi_k$ in the interval.  Since $W\ge m>0$, the quantile density is
bounded below, so $h_k=O(N^{-1})$.  If $W\in C^1$, then
$W(\xi_k)^{-\alpha}=W(t_k)^{-\alpha}+O(h_k)$ uniformly; multiplication by
$N^{-1}$ yields the $O(N^{-2})$ endpoint approximation.

\subsection{R\'enyi robustness identity}
Let $\alpha=1/(q+1)$ and use \eqref{eq:schedule-densities}.  Since
$W=Z^{q+1}\rho^{q+1}$,
\begin{align}
\mathcal J_q^W[N\widehat\rho]
&=N^{-q}Z^{q+1}
\int_0^1\rho^{q+1}\widehat\rho^{-q}\,dt,\\
\mathcal J_q^W[N\rho]&=N^{-q}Z^{q+1}.
\end{align}
By the definition
\[
D_\beta(\rho\Vert\widehat\rho)
=\frac1{\beta-1}\log\int\rho^\beta
\widehat\rho^{1-\beta}\,dt,
\]
the ratio is $\exp(qD_{q+1})$.  When $q=1$,
\[
\int\frac{\rho^2}{\widehat\rho}
=1+\int\frac{(\rho-\widehat\rho)^2}{\widehat\rho},
\]
which proves \eqref{eq:chi2-robustness}.

If $|\log(\widehat W/W)|\le\varepsilon$, then
$e^{-\alpha\varepsilon}\le\widehat W^\alpha/W^\alpha
\le e^{\alpha\varepsilon}$ and the same bounds hold for $\widehat Z/Z$.
Thus
$e^{-2\alpha\varepsilon}\le\widehat\rho/\rho
\le e^{2\alpha\varepsilon}$, and
\[
\int\rho^{q+1}\widehat\rho^{-q}
=\int\rho\left(\frac{\rho}{\widehat\rho}\right)^q
\le e^{2q\varepsilon/(q+1)}.
\]

\subsection{Propagation separation}
Let $R\equiv1$ and let $K_M=M$ on a set $A_M$ of measure
$\delta_M=M^{-1/2}$ and $K_M=1$ elsewhere.  Residual-only scheduling is
uniform, $n_R=N$, and
\[
\mathcal J_1^{K_M}[n_R]
=\frac1N(1-\delta_M+M\delta_M).
\]
The optimal value is
\[
\mathcal J_1^{K_M}[n^\star]
=\frac1N\left(1-\delta_M+\sqrt M\,\delta_M\right)^2
=\frac1N(2-M^{-1/2})^2.
\]
Consequently,
\[
\frac{\mathcal J_1^{K_M}[n_R]}
{\mathcal J_1^{K_M}[n^\star]}
=\frac{1-M^{-1/2}+M^{1/2}}{(2-M^{-1/2})^2}
\sim\frac{\sqrt M}{4}.
\]
Smooth positive approximations of $K_M$ preserve the divergence of the ratio.
Such profiles are realizable as scalar linear amplification: for
$\dot x=\lambda(t)x$,
$K(1,t)=\exp(\int_t^1\lambda(s)\,ds)$, so a smooth target profile with
$K(1,1)=1$ is obtained from
$\lambda(t)=-\partial_t\log K(1,t)$.

\subsection{\texorpdfstring{$\Gamma$-convergence}{Gamma-convergence} of Euler gauge objectives}
\label{app:gamma-proof}
For $w\in\mathcal A$, write
$R_w(t)=\|P_{\nabla,p_t}a[w](t)\|_{L^2(p_t)}$.
The assumed uniform local expansion is
\[
W_2\!\left(
(\operatorname{Id}+hw_t)_\#p_t,p_{t+h}
\right)
=\frac{h^2}{2}R_w(t)+h^3e_h(w,t),
\qquad
\sup|e_h|<\infty .
\]
Compactness and continuity give $\sup_{\mathcal A,t}R_w(t)<\infty$.
After squaring,
\[
\frac{2}{h_k^3}W_2^2
=\frac{h_k}{2}R_w(t_k)^2+O(h_k^2)
\]
uniformly in $w$.  Since
$\sum_kh_k^2\le|\pi|\sum_kh_k=|\pi|$ and the squared residuals are uniformly
Lipschitz in time,
\[
\sup_{w\in\mathcal A}
\left|
\mathcal B_\pi(w)
-\frac12\int_0^1R_w(t)^2\,dt
\right|
\le C|\pi|.
\]
This proves \eqref{eq:uniform-gauge-convergence}.

Uniform convergence implies the $\Gamma$-liminf inequality: if
$w_\pi\to w$ in $\tau$, continuity of $\mathcal B$ and the uniform error give
$\liminf\mathcal B_\pi(w_\pi)\ge\mathcal B(w)$.  The constant recovery
sequence $w_\pi=w$ gives the matching $\Gamma$-limsup.  Compactness yields
convergent subsequences of minimizers, and the fundamental minimizer argument
shows that every limit minimizes $\mathcal B$.

On a noncompact admissible set, the same conclusion holds for
\[
\mathcal B_\pi(v+u)+\frac\lambda2\|u\|_{L^2(p_tdt)}^2
\]
whenever the regularized family is equicoercive and the local expansion is
uniform on its sublevel sets.  These hypotheses are essential: pointwise
local asymptotics alone do not imply convergence of minimizing sequences.

%% file: appendices/A_projected_gauge_optimization.tex
\section{Projected Gauge Optimization}
\label{app:projected-gauge}

We now derive a projected optimization step for the solenoidal-constrained representative-selection problem
\begin{equation}
    \min_{u\in\cU_p}\; \cB(v+u)+\frac{\lambda}{2}\int_0^1\mathbb E_{p_t}\|u_t\|^2\,dt.
    \label{eq:main-opt}
\end{equation}
Fix a feasible lift $v_t$ and consider a vertical perturbation $v^\alpha=v+\alpha u$, where $u\in\cU_p$.  Assume $u$ vanishes at the temporal endpoints, or equivalently restrict to compactly supported variations in time.

The acceleration expands as
\begin{equation}
    a_{v+\alpha u}
    =a_v+\alpha\mathcal L_vu+\alpha^2\nabla u\,u,
    \label{eq:acc-exp}
\end{equation}
with
\begin{equation}
    \mathcal L_vu=\partial_tu+\nabla u\,v+\nabla v\,u.
\end{equation}
Let
\begin{equation}
    b_v=P_{\nabla,p_t}a_v.
\end{equation}
Because $P_{\nabla,p_t}$ is self-adjoint and $b_v$ lies in its range,
\begin{equation}
    D\cB(v)[u]
    =\int_0^1\langle b_v,\mathcal L_vu\rangle_{p_t}\,\dd t.
    \label{eq:first-variation}
\end{equation}

Define the material derivative $D_t^v f=\partial_t f+(v\cdot\nabla)f$.  The continuity equation implies the transport integration-by-parts identity
\begin{equation}
    \int_0^1\langle b,D_t^v u\rangle_{p_t}\,dt
    =-\int_0^1\langle D_t^v b,u\rangle_{p_t}\,dt,
    \label{eq:transport-ibp}
\end{equation}
whenever the temporal boundary term $\langle b,u\rangle_{p_t}|_{0}^{1}$ vanishes.  Applying this to \eqref{eq:first-variation} gives
\begin{equation}
    D\cB(v)[u]
    =\int_0^1\langle z_v,u\rangle_{p_t}\,\dd t,
    \qquad
    z_v=-D_t^v b_v+(\nabla v)^\top b_v .
    \label{eq:zv}
\end{equation}

\begin{theorem}[Projected descent and stationarity]
\label{thm:projected-main}
Let $P_{\cU_p}$ be the orthogonal projection onto $\cU_p$ in $L^2(p_t\dd t)$.  The direction
\begin{equation}
    u^\star=-P_{\cU_p}z_v
    \label{eq:star-direction}
\end{equation}
solves the steepest linearized descent problem over unit-norm $p_t$-solenoidal directions.  Moreover,
\begin{equation}
    D\cB(v)[u^\star]
    =-\|P_{\cU_p}z_v\|_{L^2(p_t\dd t)}^2\le0.
    \label{eq:descent}
\end{equation}
Hence, if $P_{\cU_p}z_v\ne0$, then $\cB(v+\alpha u^\star)<\cB(v)$ for all sufficiently small $\alpha>0$.  A feasible lift is first-order stationary for \eqref{eq:bias-objective} inside its gauge class iff
\begin{equation}
    P_{\cU_p}z_v=0.
    \label{eq:stationarity}
\end{equation}
\end{theorem}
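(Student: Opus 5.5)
The plan is to read the statement as a Hilbert-space projection fact in $H=L^2(p_t\,\dd t)$, using the first-variation representation \eqref{eq:zv}. The map $u\mapsto D\cB(v)[u]=\langle z_v,u\rangle_H$ is a continuous linear functional on the closed subspace $\cU_p\subset H$. Closedness holds because each fiber $\cU_{p_t}$ is the orthogonal complement of $\mathcal G_{p_t}$ by \eqref{eq:helmholtz-closed}, and this passes to the time-integrated space. So $P_{\cU_p}$ is a well-defined self-adjoint idempotent. For any $u\in\cU_p$ it satisfies
\[
\langle z_v,u\rangle_H=\langle P_{\cU_p}z_v,u\rangle_H .
\]
Concretely, $P_{\cU_p}z=z-P_{\nabla,p_t}z$ pointwise in $t$, computed via the weighted Poisson problem \eqref{eq:poisson-weak}.

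\emph{Steepest descent.} I would minimize $\langle P_{\cU_p}z_v,u\rangle_H$ over $u\in\cU_p$ with $\|u\|_H=1$. By Cauchy--Schwarz the value is at least $-\|P_{\cU_p}z_v\|_H$, with equality exactly when $u=-P_{\cU_p}z_v/\|P_{\cU_p}z_v\|_H$ (whenever the projection is nonzero). So $u^\star$ in \eqref{eq:star-direction} is the steepest direction up to normalization. Substituting $u^\star$ gives $D\cB(v)[u^\star]=-\|P_{\cU_p}z_v\|_H^2$, which is \eqref{eq:descent}.

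\emph{Strict decrease.} I would use the polynomial structure \eqref{eq:acc-exp}. Setting $g(\alpha)=\cB(v+\alpha u^\star)$, we have $a_{v+\alpha u^\star}$ quadratic in $\alpha$, and $P_{\nabla,p_t}$ linear. Hence $g$ is a quartic polynomial in $\alpha$, with $g'(0)=D\cB(v)[u^\star]<0$, and so $g(\alpha)<g(0)$ for small $\alpha>0$. The only requirement is that the coefficients be finite, i.e.\ $\mathcal L_vu^\star$ and $\nabla u^\star u^\star$ lie in $L^2(p_t\,\dd t)$. This is where the regularity issue enters: $u^\star$ is defined only as an $L^2$ projection, so its smoothness and vanishing at the temporal endpoints are not automatic.

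\emph{Stationarity and the main obstacle.} The claim is that $v$ is first-order stationary, meaning $D\cB(v)[u]=0$ for all admissible $u\in\cU_p$, iff $P_{\cU_p}z_v=0$. The "if" direction is immediate from the display above. For "only if", I would take the admissible class to be smooth solenoidal fields compactly supported in time, which is the class on which \eqref{eq:transport-ibp} was justified. I expect the main obstacle to be showing that this class is dense in $\cU_p$, so that vanishing of $\langle P_{\cU_p}z_v,u\rangle$ on it forces $P_{\cU_p}z_v=0$. To get density, I would mollify in $t$ and in $x$, then re-project each fiber with $P_{\cU_{p_t}}=I-P_{\nabla,p_t}$, using elliptic regularity of \eqref{eq:poisson-grad} for smooth positive $p_t$. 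The same smoothing argument also supplies the regularity of $u^\star$ needed in the strict-decrease step (under the standing assumptions, or after replacing $u^\star$ by a smooth approximation that preserves $D\cB(v)[u]<0$).
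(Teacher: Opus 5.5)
Your argument follows the paper's proof: represent $D\cB(v)$ on the closed subspace $\cU_p$ by $P_{\cU_p}z_v$, apply Cauchy--Schwarz, read off the sign of the derivative along $u^\star$, and characterize stationarity by vanishing of the projected gradient. The quartic structure of $\alpha\mapsto\cB(v+\alpha u^\star)$ and your density argument are valid refinements of the paper's bare appeal to differentiability.

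The one loose end is your concern that $u^\star$ need not vanish at $t=0,1$. As written, your fallback of replacing $u^\star$ by a smooth, time-compactly-supported approximation only gives decrease along a nearby direction, not the stated decrease along $u^\star$ itself. The endpoint part of the concern is unfounded, however. The boundary term in \eqref{eq:transport-ibp} is $\langle b_v,u\rangle_{p_t}$. Since $b_v(t)=P_{\nabla,p_t}a_v\in\mathcal G_{p_t}$ and $u_t\in\cU_{p_t}$, this inner product vanishes for every $t$ by the orthogonal splitting \eqref{eq:helmholtz-closed}.

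Two consequences follow.
\begin{enumerate}
\item \eqref{eq:zv} holds for every sufficiently regular $u\in\cU_p$ with no endpoint condition. In particular it holds for $u^\star$, whose regularity comes from elliptic regularity of \eqref{eq:poisson-grad} under the standing assumptions. So your $g'(0)=-\|P_{\cU_p}z_v\|^2$ directly, with no approximation.
\item The ``only if'' direction follows by testing stationarity against $u^\star$ itself. Your density argument is then needed only if you want that direction under weaker regularity than admissibility of $u^\star$, and the dense class need not be compactly supported in time.
\end{enumerate}
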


\begin{proof}
For feasible directions $u\in\cU_p$, \eqref{eq:zv} gives
$D\cB(v)[u]=\langle P_{\cU_p}z_v,u\rangle_{L^2(p_tdt)}$, because $u$ is orthogonal to the gradient component of $z_v$.  Minimizing this linear functional over $\|u\|_{L^2(p_tdt)}\le1$ gives the normalized negative projected gradient; without normalization the steepest descent vector is $-P_{\cU_p}z_v$.  Substitution yields \eqref{eq:descent}.  If the projected gradient is nonzero, differentiability of $\cB$ gives strict decrease for sufficiently small positive steps.  The stationarity condition is exactly vanishing of the derivative against all feasible directions, equivalently $P_{\cU_p}z_v=0$.
\end{proof}

The projection onto $\cU_p$ is again a weighted Poisson solve:
\begin{equation}
    P_{\cU_p}z=z-\nabla\phi,
    \qquad
    \nabla\cdot(p_t\nabla\phi_t)=\nabla\cdot(p_tz_t).
    \label{eq:poisson-sol}
\end{equation}
Thus each projected step consists of evaluating the bias gradient and solving a scalar elliptic projection problem at each time.

\begin{proposition}[Weighted Poisson projection]
\label{prop:poisson-projection}
For any $z_t\in L^2(p_t)$, let $\phi_t$ be the zero-mean weak solution of
\begin{equation}
    \int p_t\langle\nabla\phi_t,\nabla\psi\rangle\,dx
    =\int p_t\langle z_t,\nabla\psi\rangle\,dx
    \qquad \forall \psi .
\end{equation}
Then $P_{\nabla,p_t}z_t=\nabla\phi_t$ and
\begin{equation}
    P_{\cU_p}z_t=z_t-\nabla\phi_t.
\end{equation}
In strong form, $\phi_t$ solves $\nabla\cdot(p_t\nabla\phi_t)=\nabla\cdot(p_tz_t)$ with the usual normalization modulo constants.
\end{proposition}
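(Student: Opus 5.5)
The plan is to prove this as the Lax--Milgram / Riesz representation statement underlying the orthogonal splitting \eqref{eq:helmholtz-closed}.

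\emph{Existence and uniqueness of $\phi_t$.} Work at fixed $t$ on the Hilbert space $H$, defined as the completion of $\mathcal T$ under the seminorm $\|\nabla\psi\|_{L^2(p_t)}$, modulo constants. This is a genuine norm on $H$: on the compact boundaryless domain $\mathcal T$ is already quotiented by constants, and in the Euclidean case compactly supported functions have no nonzero constant part. The map $\psi\mapsto\nabla\psi$ is then an isometry of $H$ onto $\mathcal G_{p_t}$. The right-hand side $\ell(\psi)=\int p_t\langle z_t,\nabla\psi\rangle\,dx$ is a bounded linear functional on $H$, since Cauchy--Schwarz in $L^2(p_t)$ gives $|\ell(\psi)|\le\|z_t\|_{L^2(p_t)}\|\nabla\psi\|_{L^2(p_t)}$. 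The left-hand side is exactly the inner product of $H$. The Riesz representation theorem therefore yields a unique $\phi_t\in H$ (zero-mean normalized) solving the weak equation for all $\psi\in\mathcal T$, and by density for all $\psi\in H$.

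\emph{Identification with the projection.} Rearranging the weak equation gives $\int p_t\langle z_t-\nabla\phi_t,\nabla\psi\rangle\,dx=0$ for all test $\psi$. By \eqref{eq:weak-solenoidal} this says exactly $z_t-\nabla\phi_t\in\cU_{p_t}$. Since $\nabla\phi_t\in\mathcal G_{p_t}$ and the decomposition \eqref{eq:helmholtz-closed} is orthogonal, the decomposition $z_t=\nabla\phi_t+(z_t-\nabla\phi_t)$ is the unique orthogonal one. Hence $P_{\nabla,p_t}z_t=\nabla\phi_t$ and, as the complementary projection, $P_{\cU_p}z_t=z_t-\nabla\phi_t$. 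This is consistent with the characterization \eqref{eq:poisson-weak}. If one prefers not to rely on \eqref{eq:helmholtz-closed} as given, it can be rederived here: $\cU_{p_t}$ is by definition $\mathcal G_{p_t}^\perp$, and $\mathcal G_{p_t}$ is closed.

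\emph{Strong form and the time-dependent statement.} For smooth $p_t$ and $z_t$, integrate the weak equation by parts using the no-boundary-term convention of Section~3. This gives $\langle-\nabla\cdot(p_t\nabla\phi_t)+\nabla\cdot(p_tz_t),\psi\rangle=0$ for all test $\psi$, i.e.\ the stated elliptic equation in distributions, with $\phi_t$ determined modulo constants. For the projection in $L^2(p_t\,dt)$ on space-time fields, apply the pointwise-in-$t$ construction. Membership of $u$ in $\cU_p$ is itself defined pointwise a.e.\ in $t$, so the space-time projection acts fiberwise.

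\emph{Main obstacle.} The delicate point is measurability in $t$ of $t\mapsto\nabla\phi_t$, so that the fiberwise projection really defines an element of $L^2(p_t\,dt)$. I would handle it by the bound $\|\nabla\phi_t\|_{L^2(p_t)}\le\|z_t\|_{L^2(p_t)}$ together with continuous dependence of the weak solution on $(p_t,z_t)$ under the standing $C^2$ regularity. That dependence makes $t\mapsto\nabla\phi_t$ measurable, and square-integrability then follows from the norm bound.
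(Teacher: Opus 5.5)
Your proposal is correct and follows the paper's argument. The paper treats the weak equation as the normal equation for the orthogonal projection onto the closed subspace $\mathcal G_{p_t}$, so that $z_t-\nabla\phi_t$ is orthogonal to every test gradient and hence lies in $\cU_{p_t}$ by the weak definition, and it obtains the strong form by integration by parts; your Riesz-representation existence step and your fiberwise measurability discussion for the space-time projection are extra details the paper leaves implicit (the statement is effectively at fixed $t$), not a different route.
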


\begin{proof}
The weak problem is the normal equation for the orthogonal projection of $z_t$ onto the closed gradient subspace $\mathcal G_{p_t}$.  Hence $z_t-\nabla\phi_t$ is orthogonal to all gradients, which is precisely the weak definition of $\cU_{p_t}$ in \eqref{eq:weak-solenoidal}.  The strong form follows by integration by parts when $z_t,p_t$ are smooth.
\end{proof}

\begin{corollary}[Closed-form regularized subproblem]
The linearized regularized problem
\begin{equation}
    \min_{u\in\cU_p}\; \langle z_v,u\rangle_{p}+\frac{\lambda}{2}\|u\|_{p}^2
\end{equation}
has the unique solution
\begin{equation}
    u_\lambda^\star=-\frac{1}{\lambda}P_{\cU_p}z_v.
\end{equation}
\end{corollary}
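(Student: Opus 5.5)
The plan is to reduce the linearized regularized problem to a strictly convex quadratic on the closed subspace $\cU_p\subset L^2(p_t\,\dd t)$ and then read off its minimizer from the orthogonal decomposition. Feasibility comes first. By \eqref{eq:helmholtz-closed} applied at almost every $t$, the space $\cU_p$ is a closed subspace of the Hilbert space $L^2(p_t\,\dd t)$, so the projection $P_{\cU_p}$ is well defined. Proposition~\ref{prop:poisson-projection} realizes this projection as $z-\nabla\phi$. The candidate $u_\lambda^\star=-\lambda^{-1}P_{\cU_p}z_v$ therefore lies in $\cU_p$.

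Next I rewrite the linear term. For every $u\in\cU_p$, the gradient component $z_v-P_{\cU_p}z_v$ is orthogonal to $u$. Hence $\langle z_v,u\rangle_p=\langle g,u\rangle_p$ with $g:=P_{\cU_p}z_v\in\cU_p$, which is the same observation used in the proof of Theorem~\ref{thm:projected-main}. The objective restricted to $\cU_p$ then becomes
\[
F(u)=\langle g,u\rangle_p+\frac\lambda2\|u\|_p^2
=\frac\lambda2\Bigl\|u+\frac1\lambda g\Bigr\|_p^2-\frac1{2\lambda}\|g\|_p^2 ,
\]
obtained by completing the square in the Hilbert norm. This holds for $\lambda>0$, which I take to be implicit in the regularized problem \eqref{eq:main-opt}.

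With this identity, $F(u)\ge-\|g\|_p^2/(2\lambda)$ for all $u$. Equality holds iff $u=-g/\lambda$, and this point is admissible because $g\in\cU_p$. This gives both existence and uniqueness, with optimal value $-\|P_{\cU_p}z_v\|_p^2/(2\lambda)$. As a cross-check one can use the first-order condition instead: $F$ is strictly convex, so $u$ is optimal iff $\langle z_v+\lambda u,\phi\rangle_p=0$ for all $\phi\in\cU_p$, i.e.\ iff $P_{\cU_p}(z_v+\lambda u)=0$. Since $P_{\cU_p}u=u$, this yields the same formula.

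The only delicate point is the functional-analytic one: $z_v$ must belong to $L^2(p_t\,\dd t)$ so that the pairing and the projection make sense. This follows from the standing smoothness assumptions on $v$, $p$ and $b_v$ together with \eqref{eq:zv}. I also need closedness of $\cU_p$ in the time-integrated space, which follows because $\cU_p$ is defined by a family of continuous linear constraints $\int p_t\langle u,\nabla\phi\rangle=0$. Apart from these checks, the argument is a completed square.
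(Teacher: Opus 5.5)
Your proposal is correct and matches the argument the paper intends. The paper states this corollary without a separate proof, as an immediate consequence of the identity $\langle z_v,u\rangle_p=\langle P_{\cU_p}z_v,u\rangle_p$ for feasible $u$ used in the proof of Theorem~\ref{thm:projected-main}; your completed square (or your first-order check, which mirrors Theorem~\ref{thm:kkt-main}) then gives the unique minimizer $-\lambda^{-1}P_{\cU_p}z_v$.

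Your remarks that $\lambda>0$ is required and that $\cU_p$ must be closed in $L^2(p_t\,\dd t)$ are worthwhile additions. For closedness, note that the pointwise-in-$t$ constraints are not themselves continuous on $L^2(p_t\,\dd t)$. You can fix this by integrating them against time test functions, or by passing to an a.e.-in-$t$ convergent subsequence.
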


\begin{theorem}[KKT system for regularized representative selection]
\label{thm:kkt-main}
Let $u$ be a smooth local minimizer of
\[
\min_{u\in\cU_p}\;\cB(v+u)+\frac\lambda2\|u\|_{L^2(p_tdt)}^2 .
\]
Set $w=v+u$, $b_w=P_{\nabla,p}a_w$, and $z_w=-D_t^w b_w+(\nabla w)^\top b_w$.  Then
\[
P_{\cU_p}(z_w+\lambda u)=0,
\qquad
\nabla\cdot(p_tu_t)=0.
\]
Equivalently, under the regularity assumptions ensuring that gradient elements admit potential representatives, there exists a scalar multiplier $\pi_t$ such that
\[
z_w+\lambda u=\nabla\pi_t,
\qquad
\nabla\cdot(p_tu_t)=0.
\]
\end{theorem}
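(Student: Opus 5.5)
The plan is to differentiate the regularized objective along feasible curves inside the gauge fiber and read off first-order optimality, reusing the first-variation computation that led to \eqref{eq:zv} and Theorem~\ref{thm:projected-main}. Let $\mathcal F(u)=\cB(v+u)+\frac\lambda2\|u\|_{L^2(p_t\dd t)}^2$. Since $\cU_p$ is a linear subspace, for any smooth direction $\eta\in\cU_p$ vanishing near $t=0,1$ the curve $u+\alpha\eta$ stays in $\cU_p$. Local minimality of $u$ then forces $\frac{d}{d\alpha}\mathcal F(u+\alpha\eta)|_{\alpha=0}=0$.

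The first step is to compute this derivative at the base point $w=v+u$ rather than $v$. Note that $w$ transports $p_t$ by the gauge-invariance proposition, so everything proved for $v$ applies verbatim to $w$. By \eqref{eq:acc-exp} with $v$ replaced by $w$, we have $a_{w+\alpha\eta}=a_w+\alpha\mathcal L_w\eta+O(\alpha^2)$. Self-adjointness and idempotence of $P_{\nabla,p_t}$ then give
\[
D\cB(w)[\eta]=\int_0^1\langle b_w,\mathcal L_w\eta\rangle_{p_t}\,\dd t .
\]
We split $\mathcal L_w\eta=D_t^w\eta+(\nabla w)\eta$ and apply the transport integration by parts \eqref{eq:transport-ibp}, which is valid because the continuity equation holds for $w$ and the boundary terms vanish for $\eta$ compactly supported in time. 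This yields $D\cB(w)[\eta]=\langle z_w,\eta\rangle_{L^2(p\,\dd t)}$. The regularizer contributes $\lambda\langle u,\eta\rangle$. Hence
\[
\langle z_w+\lambda u,\eta\rangle_{L^2(p\,\dd t)}=0
\quad\text{for all such }\eta .
\]

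The second step converts this into the projected condition. Smooth, temporally compactly supported solenoidal fields are dense in $\cU_p$ with respect to $L^2(p_t\dd t)$. I expect to justify this with a cutoff in time and a weighted Poisson correction via Proposition~\ref{prop:poisson-projection}: apply $P_{\cU_p}$ to a mollification, which preserves solenoidality and is $L^2$-continuous. Given density, $z_w+\lambda u$ is orthogonal to all of $\cU_p$, so $P_{\cU_p}(z_w+\lambda u)=0$. The condition $\nabla\cdot(p_tu_t)=0$ is simply feasibility.

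The third step gives the multiplier form. By the splitting \eqref{eq:helmholtz-closed}, applied for a.e.\ $t$, orthogonality to $\cU_{p_t}$ means $z_w+\lambda u\in\mathcal G_{p_t}$, the $L^2(p_t)$-closure of test gradients. Under the stated regularity assumption this closure element is realized by a potential $\nabla\pi_t$. One may take $\pi_t=\phi_t$ from the weak problem in Proposition~\ref{prop:poisson-projection} applied to $z_w+\lambda u$, for which $P_{\nabla,p_t}(z_w+\lambda u)=\nabla\phi_t$ equals the whole field.

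The main obstacle is the rigor of the first variation. The derivative of $\cB$ requires differentiating $\alpha\mapsto\|P_{\nabla,p_t}a_{w+\alpha\eta}\|^2$ under the time integral, and the integration by parts requires $b_w$ to be regular enough for $D_t^wb_w$ to make sense. Smoothness of $u$, together with the standing regularity of Section~3, should make $a_w$ and hence $b_w$ (through elliptic regularity of the weighted Poisson problem) smooth enough. I would state this as the hypothesis ``smooth local minimizer'' and not attempt to weaken it. The density argument in the second step is secondary; if it proves delicate, I would instead define stationarity against smooth compactly supported test directions, and the statement then holds in that weak sense.
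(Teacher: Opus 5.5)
Your proposal is correct and follows the paper's route. The paper takes feasible variations $\delta u\in\cU_p$ to obtain $\langle z_w+\lambda u,\delta u\rangle_{p}=0$, identifies the orthogonal complement of $\cU_p$ with the closed gradient subspace, and uses a potential representative for the multiplier form. You add the details it leaves implicit: the first variation recomputed at $w$ (valid since $w$ transports $p_t$ by gauge invariance), the transport integration by parts, and a density step from smooth test directions to all of $\cU_p$.

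One small simplification: the temporal cutoff in your density argument is unnecessary. Since $b_w(t)\in\mathcal G_{p_t}$ and $\eta_t\in\cU_{p_t}$, the boundary term $\langle b_w,\eta\rangle_{p_t}$ vanishes identically in $t$. You therefore need only smoothness of the test directions, not compact temporal support.
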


\begin{proof}
Taking variations $\delta u\in\cU_p$ gives $\langle z_w+\lambda u,\delta u\rangle_p=0$ for all feasible variations.  Thus $z_w+\lambda u$ lies in the weighted-orthogonal complement of $\cU_p$, which is the closed gradient subspace.  The multiplier form follows whenever this gradient element admits a potential representative; otherwise the equality is understood in the closure of gradients.
\end{proof}

\begin{theorem}[Linearized exact cancellation and minimal residual]
\label{thm:lin-cancel-main}
Let $\mathfrak R_v:\cU_p\to L^2(p_t)$ be the linear map
\begin{equation}
    \mathfrak R_vu=P_{\nabla,p_t}\bigl(\mathcal L_vu\bigr).
\end{equation}
The linearized problem of cancelling the leading Euler density bias,
\begin{equation}
    P_{\nabla,p_t}\bigl(a_v+\mathcal L_vu\bigr)=0,
    \qquad u\in\cU_p,
    \label{eq:lin-cancel}
\end{equation}
is exactly solvable if and only if
\begin{equation}
    -b_v\in \mathfrak R_v(\cU_p).
\end{equation}
Moreover, the minimal linearized residual is
\begin{equation}
    \inf_{u\in\cU_p}
    \|b_v+\mathfrak R_vu\|_{L^2(p_tdt)}^2
    =
    \operatorname{dist}^2\bigl(-b_v,\overline{\mathfrak R_v(\cU_p)}\bigr),
    \label{eq:min-residual}
\end{equation}
and the infimum is zero if and only if $-b_v\in\overline{\mathfrak R_v(\cU_p)}$.
\end{theorem}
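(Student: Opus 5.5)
This is a Hilbert-space least-squares problem in disguise, and I would treat it as one. Work in $H=L^2(p_t\,dt)$ (time-integrated), with $\cU_p$ the closed subspace from the weak definition. Let $\mathcal G$ be its orthogonal complement, the closure of gradients; for each $t$ this is \eqref{eq:helmholtz-closed}. The map $\mathfrak R_v$ is linear. It is the composition of the first-order operator $\mathcal L_v$ with the pointwise-in-time projection $P_{\nabla,p_t}$. I will therefore regard it as a linear operator from $\cU_p$ (or a dense smooth subspace of it) into $\mathcal G\subset H$. Since $P_{\nabla,p_t}$ is linear, the cancellation condition \eqref{eq:lin-cancel} reads $b_v+\mathfrak R_v u=0$, with $b_v=P_{\nabla,p_t}a_v$.

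The exact-solvability statement is immediate from this rewriting. Some $u\in\cU_p$ solves \eqref{eq:lin-cancel} if and only if $\mathfrak R_v u=-b_v$, that is, if and only if $-b_v$ lies in the range $\mathfrak R_v(\cU_p)$. No analytic input is needed beyond linearity of $P_{\nabla,p_t}$ and $\mathcal L_v$.

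For the minimal residual, write $\|b_v+\mathfrak R_v u\|=\|\mathfrak R_v u-(-b_v)\|$. The infimum over $u$ is then $\inf_{y\in\mathfrak R_v(\cU_p)}\|y-(-b_v)\|^2$. This equals the squared distance from $-b_v$ to the range. A distance to a set equals the distance to its closure, by approximating closure points with range elements and using continuity of the norm. The closure $\overline{\mathfrak R_v(\cU_p)}$ is a closed linear subspace of $H$, so the distance is attained by the orthogonal projection onto it. This gives \eqref{eq:min-residual}.

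The final clause follows because the distance to a closed set vanishes exactly when the point belongs to it. I will also remark that this zero infimum need not be attained when the range is not closed. That is why exact solvability uses the range itself and the infimum uses its closure.

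The main obstacle is domain bookkeeping rather than any inequality. The operator $\mathcal L_v u=\partial_t u+\nabla u\,v+\nabla v\,u$ is unbounded on $\cU_p$, so I must fix its domain: smooth solenoidal $u$ with enough regularity for $\mathcal L_v u\in H$. I must also check that $\mathfrak R_v u\in H$, which holds because $P_{\nabla,p_t}$ is a contraction in each $L^2(p_t)$. Measurability in $t$ should follow from the smoothness assumptions of Section~3. Once $\mathfrak R_v$ is a well-defined linear map into $H$, the rest is the projection theorem. Boundedness or closedness of $\mathfrak R_v$ is not needed, since only its range and the closure of that range enter.
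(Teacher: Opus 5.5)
Your proposal is correct and follows essentially the same route as the paper. Both arguments rewrite the cancellation condition as $\mathfrak R_v u=-b_v$ by linearity, read off exact solvability as membership in the range, and obtain the minimal residual and the zero-infimum criterion from the Hilbert-space projection identity, using that the distance to the range equals the distance to its closure. The paper cites the first-order expansion of $a_{v+u}$ only to motivate that this is the linearized problem. It does not discuss the domain of the unbounded operator $\mathcal L_v$, which your bookkeeping remark addresses.
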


\begin{proof}
The first-order expansion \eqref{eq:acc-exp} gives
$P_{\nabla,p}(a_{v+u})=b_v+\mathfrak R_vu+O(\|u\|^2)$.
Thus exact first-order cancellation is equivalent to solving $\mathfrak R_vu=-b_v$ over the feasible linear space $\cU_p$.  If the range is not closed, exact solvability requires membership in the actual range, while arbitrarily small residual requires membership in the range closure.  Formula \eqref{eq:min-residual} is the Hilbert-space projection identity for the distance to the attainable affine subspace.
\end{proof}

\begin{theorem}[Projected-gradient decrease after discretization]
Consider a finite-dimensional parameterization of feasible gauges $u_\theta\in\cU_p$ or a spatial-temporal discretization of \eqref{eq:main-opt}.  Let $F(\theta)=\cB(v+u_\theta)$ and assume $F$ has an $L$-Lipschitz gradient on a neighborhood of $\theta$.  Then the projected-gradient step
\begin{equation}
    \theta^+=\theta-\alpha \nabla F(\theta),
    \qquad 0<\alpha\le L^{-1},
\end{equation}
satisfies
\begin{equation}
    F(\theta^+)\le F(\theta)-\frac{\alpha}{2}\|\nabla F(\theta)\|^2.
\end{equation}
In the Hilbert-space formalism this corresponds to the infinitesimal direction
$-P_{\cU_p}z_v$ from \eqref{eq:star-direction}.
\end{theorem}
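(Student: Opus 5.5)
The plan is the standard descent lemma for functions with Lipschitz gradient; it needs nothing specific to the gauge problem beyond identifying the finite-dimensional gradient with the projected Hilbert gradient. First, since every $u_\theta$ lies in $\cU_p$ (or the discretization keeps iterates feasible), the update $\theta^+=\theta-\alpha\nabla F(\theta)$ moves only within the parameterized feasible set. No extra projection is needed in parameter space, because feasibility is built into the parameterization. Write $g=\nabla F(\theta)$ and use the fundamental theorem of calculus along the segment $\theta-sg$, $s\in[0,\alpha]$:
\[
F(\theta^+)-F(\theta)=-\int_0^\alpha\langle\nabla F(\theta-sg),g\rangle\,ds .
\]
This is legitimate provided the segment stays in the neighborhood on which $\nabla F$ is $L$-Lipschitz. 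I would state this as part of the hypothesis, or shrink $\alpha$ accordingly.

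Next, add and subtract $\langle g,g\rangle$ inside the integral. By Cauchy--Schwarz and the Lipschitz bound $\|\nabla F(\theta-sg)-g\|\le Ls\|g\|$,
\[
F(\theta^+)\le F(\theta)-\alpha\|g\|^2+\frac{L\alpha^2}{2}\|g\|^2
=F(\theta)-\alpha\Bigl(1-\frac{L\alpha}{2}\Bigr)\|g\|^2 .
\]
For $0<\alpha\le L^{-1}$ we have $1-L\alpha/2\ge 1/2$, which gives the claimed decrease $\frac{\alpha}{2}\|\nabla F(\theta)\|^2$.

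For the correspondence statement, compute $\partial_{\theta_i}F=D\cB(v+u_\theta)[\partial_{\theta_i}u_\theta]$ by the chain rule. Each $\partial_{\theta_i}u_\theta$ lies in $\cU_p$, so by \eqref{eq:zv} and the argument in the proof of Theorem~\ref{thm:projected-main}, $\partial_{\theta_i}F=\langle P_{\cU_p}z_w,\partial_{\theta_i}u_\theta\rangle_{L^2(p_t\dd t)}$ with $w=v+u_\theta$. The velocity-space increment $\sum_i(\theta^+-\theta)_i\partial_{\theta_i}u_\theta$ is then $-\alpha$ times the Galerkin (metric-weighted) representative of $P_{\cU_p}z_w$ in the tangent span. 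In the limit of a complete, orthonormal parameterization this becomes the direction $-P_{\cU_p}z_w$ of \eqref{eq:star-direction}.

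The main obstacle is interpretive rather than technical: the exact identification holds only when the Jacobian $\partial_\theta u_\theta$ is an isometry onto its range. In general one obtains the pulled-back gradient, which is the $L^2(p_t\dd t)$ projection of $-P_{\cU_p}z_w$ onto the tangent span only up to the Gram metric. I would state the correspondence at that level of precision, i.e. as a natural-gradient/Galerkin correspondence, and avoid claiming literal equality.
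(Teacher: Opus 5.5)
Your proposal is correct and follows the paper's route: the paper's proof simply invokes the standard descent lemma, with feasibility built into the parameterization or enforced by projection, and identifies the gradient through \eqref{eq:zv}, exactly as you do. Your explicit integral argument, your remark that the segment $[\theta,\theta^+]$ must stay inside the Lipschitz neighborhood, and your Gram-metric caveat on the correspondence with $-P_{\cU_p}z_v$ are accurate sharpenings of points the paper states only informally.
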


\begin{proof}
The result is the standard descent lemma for smooth unconstrained optimization after the solenoidal constraint has been built into the parameterization or after applying the orthogonal projection.  The identification of the projected gradient follows from \eqref{eq:zv}.
\end{proof}

\begin{algorithm}[H]
\caption{Projected modified-equation representative-selection step}
\label{alg:gauge}
\begin{algorithmic}[1]
\STATE Input feasible velocity $v_t$, density path $p_t$, step size $\alpha$.
\STATE Compute $a_v=\partial_t v+\nabla v\,v$.
\STATE Compute $b_v=P_{\nabla,p_t}a_v$ by solving \eqref{eq:poisson-grad}.
\STATE Compute $z_v=-D_t^v b_v+(\nabla v)^\top b_v$.
\STATE Compute $P_{\cU_p}z_v=z_v-\nabla\phi$ by solving \eqref{eq:poisson-sol}.
\STATE Return $v_t^{+}=v_t-\alpha P_{\cU_p}z_v$.
\end{algorithmic}
\end{algorithm}

%% file: appendices/B_exact_and_obstructed_gaussian_gauges.tex
\section{Exact and Obstructed Gaussian Gauges}
\label{app:gaussian-gauge}

\begin{takeawaybox}
\textbf{Exact Gaussian takeaway.} This appendix gives the explicit finite-dimensional model behind the main Gaussian claim.  The Lyapunov equation fixes the continuum density path, while the Riccati-type equation identifies when a linear gauge cancels the leading Euler covariance defect.  The isotropic case then reduces to a parity obstruction for skew-symmetric controls.
\end{takeawaybox}

We now show that the proposed optimization problem has nontrivial explicit solutions.  Let $p_t=\mathcal N(0,\Sigma_t)$ with $\Sigma_t\succ0$ and consider linear lifts
\begin{equation}
    w_t(x)=B_tx.
\end{equation}
The covariance evolves according to
\begin{equation}
    \dot\Sigma_t=B_t\Sigma_t+\Sigma_tB_t^\top.
    \label{eq:lyapunov}
\end{equation}
Thus the gauge class consists of all matrices $B_t$ solving the Lyapunov constraint \eqref{eq:lyapunov}.  The material acceleration is linear,
\begin{equation}
    a_w(x)=M_tx,
    \qquad
    M_t=\dot B_t+B_t^2.
\end{equation}
The acceleration is density-invisible iff
\begin{equation}
    M_t\Sigma_t+\Sigma_tM_t^\top=0.
    \label{eq:invisible-gauss}
\end{equation}
Combining \eqref{eq:lyapunov} with time differentiation gives the following finite-dimensional characterization.

\begin{theorem}[Gaussian Lyapunov--Riccati gauge]
\label{thm:gaussian-main}
A linear Gaussian lift $w_t(x)=B_tx$ transports $\mathcal N(0,\Sigma_t)$ and has zero leading Euler covariance bias within the linear Gaussian class iff
\begin{equation}
    B_t\Sigma_t+\Sigma_tB_t^\top=\dot\Sigma_t,
    \qquad
    2B_t\Sigma_tB_t^\top=\ddot\Sigma_t.
    \label{eq:gaussian-system}
\end{equation}
Equivalently, it transports the path and satisfies $\nabla\cdot(p_ta_w)=0$ for the linear acceleration $a_w(x)=(\dot B_t+B_t^2)x$.
\end{theorem}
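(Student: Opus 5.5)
The proof reduces both conditions to finite-dimensional matrix identities. The transport condition is the Lyapunov equation \eqref{eq:lyapunov}, and the bias condition is \eqref{eq:invisible-gauss}. Differentiating the first identity should turn the second into the stated Riccati-type condition.

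\textbf{Step 1 (transport).} For $w_t(x)=B_tx$, the pushforward of $\mathcal N(0,\Sigma_t)$ under the linear flow stays centered Gaussian. Its covariance $\Phi\Sigma_0\Phi^\top$ satisfies $\dot\Sigma=B\Sigma+\Sigma B^\top$. Conversely, if this holds, uniqueness for the linear ODE gives that the Gaussian path solves \eqref{eq:ce} with velocity $w$. Equivalently, one can substitute $p_t=\mathcal N(0,\Sigma_t)$ into the continuity equation directly, using $\nabla\log p_t=-\Sigma_t^{-1}x$. Either way, transport holds iff the first equation in \eqref{eq:gaussian-system} holds.

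\textbf{Step 2 (invisibility of the acceleration).} For a linear field $Mx$, compute
\[
\nabla\cdot(p_tMx)=p_t\bigl(\operatorname{tr}M-x^\top\Sigma^{-1}Mx\bigr).
\]
This vanishes identically iff the symmetric part of $\Sigma^{-1}M$ is zero, since the trace then also vanishes. Multiplying $\Sigma^{-1}M+M^\top\Sigma^{-1}=0$ on both sides by $\Sigma$ gives exactly \eqref{eq:invisible-gauss}, $M\Sigma+\Sigma M^\top=0$. By Proposition~\ref{prop:hminus-main}, this is the same as $P_{\nabla,p_t}a_w=0$, i.e. zero leading Euler covariance bias in the linear class. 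The phrase ``within the linear Gaussian class'' should be read as this condition. Equivalently, one can use Theorem~\ref{thm:density-bea}: the Euler pushforward is $\mathcal N(0,(I+hB)\Sigma(I+hB)^\top)$, and comparing its $h^2$ coefficient with that of $\Sigma_{t+h}$ gives the same criterion.

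\textbf{Step 3 (Riccati form).} Differentiate the Lyapunov identity:
\[
\ddot\Sigma=\dot B\Sigma+\Sigma\dot B^\top+B\dot\Sigma+\dot\Sigma B^\top.
\]
Substituting $\dot\Sigma=B\Sigma+\Sigma B^\top$ gives
\[
\ddot\Sigma=(\dot B+B^2)\Sigma+\Sigma(\dot B+B^2)^\top+2B\Sigma B^\top=M\Sigma+\Sigma M^\top+2B\Sigma B^\top.
\]
Hence, under transport, \eqref{eq:invisible-gauss} holds iff $2B\Sigma B^\top=\ddot\Sigma$. Conversely, the two equations of \eqref{eq:gaussian-system} together imply transport and invisibility.

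\textbf{Main obstacle.} The delicate point is Step 2: making the equivalence between ``zero leading covariance bias'' and $\nabla\cdot(p_ta_w)=0$ precise. In particular, one must check that for linear $M$ the gradient projection of $Mx$ is again linear, namely $\tfrac12(M\Sigma+\Sigma M^\top)\Sigma^{-1}x$. With that, vanishing of the projection is exactly the symmetric-part condition, and the Euler covariance comparison of Appendix~\ref{app:local-ot} confirms the same criterion. Smoothness of $B_t$ in $t$ is needed so that $\dot B$ and $\ddot\Sigma$ exist, and I would assume it.
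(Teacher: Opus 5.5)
Your proof is correct and follows the paper's route: both hinge on differentiating the Lyapunov identity to get $\ddot\Sigma_t=M_t\Sigma_t+\Sigma_tM_t^\top+2B_t\Sigma_tB_t^\top$, the paper reading $2B_t\Sigma_tB_t^\top=\ddot\Sigma_t$ directly off the $h^2$ coefficient of $(I+hB_t)\Sigma_t(I+hB_t)^\top$ (your ``equivalently'' remark) and then passing to invisibility, while you run the chain in the opposite direction and also verify the Gaussian identity $\nabla\cdot(p_tMx)=0\iff M\Sigma+\Sigma M^\top=0$ that the paper only asserts. One slip in your closing remark, on which nothing depends: $\tfrac12(M\Sigma+\Sigma M^\top)\Sigma^{-1}x$ has the correct weighted divergence but is a gradient only if that matrix is symmetric, i.e.\ if $M\Sigma+\Sigma M^\top$ commutes with $\Sigma$; the actual projection is $Gx$ with $G=G^\top$ solving $G\Sigma+\Sigma G=M\Sigma+\Sigma M^\top$, which still vanishes exactly when $M\Sigma+\Sigma M^\top=0$.
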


\begin{proof}
The first equation is the covariance continuity condition.  One Euler step gives
\[
\Sigma_{t+h}^{E}=(I+hB_t)\Sigma_t(I+hB_t)^\top
=\Sigma_t+h\dot\Sigma_t+h^2B_t\Sigma_tB_t^\top.
\]
The exact covariance path satisfies
\[
\Sigma_{t+h}=\Sigma_t+h\dot\Sigma_t+\frac{h^2}{2}\ddot\Sigma_t+O(h^3).
\]
The order-$h^2$ covariance bias vanishes iff the second equation in \eqref{eq:gaussian-system} holds.  Differentiating \eqref{eq:lyapunov} gives
\[
\ddot\Sigma_t=(\dot B_t+B_t^2)\Sigma_t+\Sigma_t(\dot B_t+B_t^2)^\top+2B_t\Sigma_tB_t^\top.
\]
Thus the second equation in \eqref{eq:gaussian-system} is equivalent to $M_t\Sigma_t+\Sigma_tM_t^\top=0$, i.e. \eqref{eq:invisible-gauss}.  For a zero-mean Gaussian and linear acceleration $M_tx$, this is exactly the condition $\nabla\cdot(p_tM_tx)=0$.
\end{proof}

\begin{corollary}[Covariance cone obstruction]
\label{cor:cov-cone}
For a Gaussian covariance path, exact one-step second-order Euler covariance matching by a linear lift requires
\[
\ddot\Sigma_t\succeq0.
\]
Indeed, the exact matching condition in Theorem~\ref{thm:gaussian-main} reads $\ddot\Sigma_t=2B_t\Sigma_tB_t^\top$, and the right-hand side is positive semidefinite.  Thus some covariance paths cannot be made second-order accurate by any linear marginal-preserving gauge.
\end{corollary}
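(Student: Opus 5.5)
The plan is to read everything off the second equation of Theorem~\ref{thm:gaussian-main}. By that theorem, any linear lift that transports the path and matches the Euler covariance through order $h^2$ satisfies
\[
\ddot\Sigma_t=2B_t\Sigma_tB_t^\top .
\]
So it suffices to show that the right-hand side is positive semidefinite for every real matrix $B_t$. Only the Lyapunov--Riccati system is needed; the Lyapunov constraint plays no role beyond admitting the lift into the gauge class.

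For the semidefiniteness, take an arbitrary vector $\xi\in\R^d$. Since $\Sigma_t\succ0$,
\[
\xi^\top B_t\Sigma_tB_t^\top\xi=(B_t^\top\xi)^\top\Sigma_t(B_t^\top\xi)\ge0 .
\]
Hence $2B_t\Sigma_tB_t^\top\succeq0$, and therefore $\ddot\Sigma_t\succeq0$ at every time $t$ where exact matching holds.

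For the concluding obstruction sentence, I will exhibit a covariance path with $\ddot\Sigma_t\not\succeq0$. One example is the isotropic path $\Sigma_t=s(t)^2I_d$ with $s$ concave and decreasing, say $s(t)=2-t^2$ on $[0,1]$. Then $\ddot\Sigma_t=2(\dot s^2+s\ddot s)I_d$, which equals $-4I_d$ at $t=0$. The identity $\ddot{(s^2)}=2(\dot s^2+s\ddot s)$ shows the condition fails exactly when $s\ddot s<-\dot s^2$. For any such path, no linear marginal-preserving gauge achieves second-order covariance matching.

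I see no real obstacle. The only subtlety is that the corollary needs only the necessity direction of Theorem~\ref{thm:gaussian-main} at each fixed $t$, not the reverse equivalence.
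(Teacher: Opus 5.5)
Your argument is correct and is the paper's own: read off $\ddot\Sigma_t=2B_t\Sigma_tB_t^\top$ from Theorem~\ref{thm:gaussian-main} and note that the right-hand side is positive semidefinite. Your explicit path usefully supplies the existence claim that the paper leaves implicit, and the feasible lift $B_t=(\dot s/s)I_d$ shows the gauge class there is nonempty, so the obstruction is genuine rather than vacuous. One arithmetic slip: for $s(t)=2-t^2$ you get $\ddot\Sigma_0=2(\dot s^2+s\ddot s)I_d=-8I_d$, not $-4I_d$, which does not affect the conclusion.
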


\subsection{Isotropic classification and parity obstruction}

Let $\Sigma_t=s(t)^2I_d$.  Then every feasible linear lift has the form
\begin{equation}
    B_t=\beta(t)I_d+A_t,
    \qquad
    \beta(t)=\frac{\dot s(t)}{s(t)},
    \qquad
    A_t^\top=-A_t.
\end{equation}
The exact second-order condition reduces to
\begin{equation}
    A_t^2=-\kappa(t)I_d,
    \qquad
    \kappa(t)=\frac{\ddot s(t)}{s(t)}.
    \label{eq:A-condition}
\end{equation}

\begin{theorem}[Parity obstruction]
\label{thm:parity}
Fix $t$ and set $\kappa=\ddot s/s$.  Then
\begin{equation}
\min_{A^\top=-A}\|A^2+\kappa I_d\|_F^2
=
\begin{cases}
0, & \kappa>0 \text{ and } d \text{ even},\\
\kappa^2, & \kappa>0 \text{ and } d \text{ odd},\\
d\kappa^2, & \kappa\le0.
\end{cases}
\label{eq:parity}
\end{equation}
When $d=2m$ and $\kappa>0$, the exact gauges are
\begin{equation}
    A=\sqrt{\kappa}\,J,
    \qquad
    J^\top=-J,
    \qquad
    J^2=-I_d.
\end{equation}
Their set is isomorphic to the homogeneous space $O(2m)/U(m)$.
\end{theorem}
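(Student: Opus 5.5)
The plan is to reduce the matrix minimization to a scalar problem on eigenvalues, using the real normal form of skew-symmetric matrices.

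\textbf{Spectral reduction.} Fix a skew $A$. By the real normal form there is $Q\in O(d)$ such that
\[
Q^\top AQ=\operatorname{diag}\!\left(\sigma_1J_2,\dots,\sigma_mJ_2,0_{d-2m}\right),
\qquad
J_2=\begin{pmatrix}0&-1\\1&0\end{pmatrix},\quad \sigma_j\ge0 .
\]
Then $A^2=-A^\top A$ is symmetric negative semidefinite, and $Q^\top A^2Q=\operatorname{diag}(-\sigma_1^2I_2,\dots,-\sigma_m^2I_2,0)$. Its nonzero eigenvalues therefore come in equal pairs. When $d$ is odd, the number of zero eigenvalues is odd, hence at least one.

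\textbf{Scalar problem.} The Frobenius norm is orthogonally invariant, so
\[
\|A^2+\kappa I_d\|_F^2=2\sum_{j=1}^m(\kappa-\sigma_j^2)^2+(d-2m)\kappa^2 .
\]
This must be minimized over $0\le m\le\lfloor d/2\rfloor$ and $\sigma_j\ge0$. Every such choice is realized by some skew $A$, so this scalar problem is equivalent to the matrix problem.

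\textbf{Case analysis.}
\begin{itemize}
\item If $\kappa\le0$, each term satisfies $(\kappa-\lambda)^2\ge\kappa^2$ for $\lambda\ge0$, so the sum is at least $d\kappa^2$. This is attained at $A=0$.
\item If $\kappa>0$ and $d=2m$ is even, take all $\sigma_j^2=\kappa$, which gives $0$.
\item If $\kappa>0$ and $d$ is odd, the forced zero eigenvalue contributes at least $\kappa^2$. Taking $m=(d-1)/2$ and $\sigma_j^2=\kappa$ attains exactly $\kappa^2$.
\end{itemize}
This proves \eqref{eq:parity}.

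\textbf{Exact gauges.} Assume $\kappa>0$ and $d=2m$. From the case analysis, equality $A^2=-\kappa I_d$ forces every $\sigma_j=\sqrt\kappa$ and no zero block. Hence $J:=A/\sqrt\kappa$ is skew with $J^2=-I_d$. Conversely, any such $J$ gives an exact gauge $A=\sqrt\kappa\,J$.

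\textbf{Homogeneous-space structure.} A skew $J$ with $J^2=-I$ satisfies $J^\top J=-J^2=I$, so $J$ is orthogonal. The normal form above shows every such $J$ equals $QJ_0Q^\top$ for the standard $J_0=\operatorname{diag}(J_2,\dots,J_2)$. So $O(2m)$ acts transitively by conjugation on this set. The stabilizer of $J_0$ consists of the orthogonal matrices commuting with $J_0$, that is, the complex-linear orthogonal maps of $\R^{2m}\cong\mathbb C^m$, which is $U(m)$. The orbit–stabilizer theorem then gives the identification with $O(2m)/U(m)$; since the set is a closed orbit of a compact group, it is homeomorphic to this quotient.

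The only step needing care is the pairing of eigenvalues in the normal form, which underlies the odd-dimensional lower bound. It is standard, and I would cite it rather than reprove it.
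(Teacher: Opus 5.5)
Your proof is correct and follows the same route as the paper: reduce via the real normal form of a skew-symmetric matrix to paired eigenvalues $-\sigma_j^2$ of $A^2$, minimize blockwise, and identify the exact gauges with orthogonal complex structures. You supply more detail than the paper, notably the explicit Frobenius-norm formula and the orbit--stabilizer argument for $O(2m)/U(m)$, which the paper only cites as the standard parametrization.
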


\begin{proof}
The eigenvalues of a real skew-symmetric matrix are pairs $\pm i\omega_j$ and, if $d$ is odd, one zero eigenvalue.  Hence the eigenvalues of $A^2$ are $-\omega_j^2$ with even multiplicity, plus possibly one zero.  The minimization of $\|A^2+\kappa I\|_F^2$ therefore separates over these blocks.  If $\kappa>0$, each two-dimensional block is minimized by $\omega_j^2=\kappa$.  In odd dimension one zero direction remains, giving residual $\kappa^2$.  If $\kappa\le0$, choosing $\omega_j\ne0$ only moves the eigenvalues further away from $-\kappa\ge0$, so $A=0$ is optimal.  The last claim is the standard parametrization of orthogonal complex structures.
\end{proof}

The theorem gives a sharp no-free-lunch result: even in the simplest Gaussian family, exact solver-bias cancellation can be impossible because of dimension and sign constraints.  Conversely, in even dimensions with $\ddot s/s>0$, the continuum path admits an entire manifold of exact discretization-cancelling gauges.

\begin{takeawaybox}
\textbf{Geometric meaning.}  In even dimension, exact isotropic gauges are not isolated.  They are parametrized by orthogonal complex structures $J$, hence by $O(2m)/U(m)$.  The rotational gauge is therefore not an arbitrary trick: it is the finite-dimensional shadow of choosing a numerically favorable horizontal representative inside a gauge fiber.
\end{takeawaybox}

\begin{proposition}[Finite-dimensional Gaussian gauge optimization]
For a fixed triple $(\Sigma,\dot\Sigma,\ddot\Sigma)$, the linear Gaussian representative-selection problem can be written as
\[
\min_{B}\left\|B\Sigma B^\top-\frac12\ddot\Sigma\right\|_F^2
\quad\text{s.t.}\quad
B\Sigma+\Sigma B^\top=\dot\Sigma .
\]
The objective is the order-$h^2$ Euler covariance bias.  Zero objective is attainable exactly when the Lyapunov--Riccati system \eqref{eq:gaussian-system} is solvable.
\end{proposition}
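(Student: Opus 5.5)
The proposition has three parts. First, the linear Gaussian representative-selection problem can be written as the displayed constrained least-squares problem. Second, the objective equals the order-$h^2$ Euler covariance bias. Third, the optimal value is zero exactly when the Lyapunov--Riccati system \eqref{eq:gaussian-system} is solvable. I will derive the first two from the one-step covariance expansion already used in the proof of Theorem~\ref{thm:gaussian-main}, and the third from nonnegativity of the Frobenius norm.

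The first step fixes $t$ and restricts to linear lifts $w(x)=Bx$. By \eqref{eq:lyapunov}, such a lift transports $\mathcal N(0,\Sigma_t)$ at time $t$ iff $B\Sigma+\Sigma B^\top=\dot\Sigma$. This is exactly the constraint in the display, so the feasible set of the problem is the linear gauge fiber at that time.

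The second step expands one Euler step:
\[
(I+hB)\Sigma(I+hB)^\top=\Sigma+h(B\Sigma+\Sigma B^\top)+h^2B\Sigma B^\top .
\]
On the feasible set the $O(h)$ term equals $\dot\Sigma$. Comparing with $\Sigma_{t+h}=\Sigma+h\dot\Sigma+\tfrac{h^2}{2}\ddot\Sigma+O(h^3)$ gives
\[
\Sigma^E_{t+h}-\Sigma_{t+h}=h^2\Bigl(B\Sigma B^\top-\tfrac12\ddot\Sigma\Bigr)+O(h^3).
\]
So the leading covariance defect is $h^2$ times the matrix inside the norm, and the squared Frobenius norm of that coefficient is the stated objective. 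Any other norm on symmetric matrices would be equally valid; I choose Frobenius for convenience.

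For the third step, the objective is nonnegative, so the minimum value is $0$ iff some feasible $B$ satisfies $B\Sigma B^\top=\tfrac12\ddot\Sigma$. Together with the constraint, this is precisely the pair of equations in \eqref{eq:gaussian-system}, which gives the equivalence. The one point needing care is that "attainable" should mean attained by a minimizer, not merely approached as an infimum. That follows from the same equivalence: a solution of \eqref{eq:gaussian-system} is a feasible point with objective $0$, and conversely a feasible point with objective $0$ solves the system.

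The isotropic case is a consistency check against Theorem~\ref{thm:parity}. Substituting $B=\beta I+A$ gives $B\Sigma B^\top=s^2(\beta^2I-A^2)$ and $\tfrac12\ddot\Sigma=s^2(\beta^2+\kappa)I$. The objective therefore reduces to $s^4\|A^2+\kappa I\|_F^2$, which matches \eqref{eq:parity}.

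I do not expect a real obstacle. The only subtlety is interpretive: the phrase "the representative-selection problem" is read as pointwise in time, for a fixed triple $(\Sigma,\dot\Sigma,\ddot\Sigma)$, rather than as the time-integrated functional $\mathcal B$.
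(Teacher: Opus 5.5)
Your proposal is correct and takes the same route the paper relies on (it gives no separate proof, leaning on the proof of Theorem~\ref{thm:gaussian-main}): the constraint is the Lyapunov transport condition \eqref{eq:lyapunov}, the exact expansion of $(I+hB)\Sigma(I+hB)^\top$ identifies $h^2(B\Sigma B^\top-\tfrac12\ddot\Sigma)$ as the leading covariance defect, and nonnegativity of the norm gives the equivalence with solvability of \eqref{eq:gaussian-system}. Your justification of attainment is circular as phrased, though the stronger reading (a zero infimum already forces solvability) also holds: writing the fiber as $B_0+A\Sigma^{-1}$ with $A^\top=-A$ makes the objective coercive, because the positive semidefinite term $A\Sigma^{-1}A^\top$ grows quadratically in $A$, so the minimum is always attained.
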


%% file: appendices/K_additional_projection_details.tex
\section{Additional Projection Details}
\label{app:proofs}

For completeness, we recall the projection argument.  A vector field $u$ is in $\cU_p$ iff $\nabla\cdot(pu)=0$.  For any smooth potential $\phi$,
\begin{equation}
    \int p\langle u,\nabla\phi\rangle\,\dd x
    =-\int \phi\nabla\cdot(pu)\,\dd x=0.
\end{equation}
Thus gradients are orthogonal to $\cU_p$.  To project $z$ onto $\cU_p$, write $P_{\cU_p}z=z-\nabla\phi$ and impose
\begin{equation}
    0=\nabla\cdot(p(z-\nabla\phi)),
\end{equation}
which yields
\begin{equation}
    \nabla\cdot(p\nabla\phi)=\nabla\cdot(pz).
\end{equation}

In $d$ dimensions, a sufficient way to enforce $\nabla\cdot(pu)=0$ is to introduce a skew-symmetric matrix potential $A_{ij}=-A_{ji}$ and set
\begin{equation}
    pu=m,
    \qquad
    m_i=\sum_{j=1}^d\partial_j A_{ij}.
\end{equation}
Then
\begin{equation}
    \nabla\cdot m=\sum_{i,j}\partial_i\partial_jA_{ij}=0
\end{equation}
by symmetry of second derivatives and skew-symmetry of $A$.  This is the higher-dimensional analogue of the stream-function representation in two dimensions.

%% file: appendices/L_propagated_error_and_cost_accounting.tex
\section{Propagation of the Density Defect and Cost Accounting}
\label{app:propagation-proof}

\subsection{Proof of Theorem~\ref{thm:propagated-defect}}
Let
\[
\mathcal F(p,w)=\partial_t p+\nabla\cdot(pw).
\]
By assumption, the solution map $w\mapsto p[w]$ is differentiable from a neighborhood of $v$ into the chosen trajectory space with terminal norm $X$.  Differentiating
\[
\mathcal F(p^\varepsilon,v+\varepsilon b)=0
\]
at $\varepsilon=0$ gives
\begin{equation}
\partial_t\eta_t+\nabla\cdot(v_t\eta_t)
=-\nabla\cdot(p_tb_t),
\qquad \eta_0=0.
\label{eq:linearized-density-app}
\end{equation}
Variation of constants for the evolution family $U_v(t,s)$ associated with the homogeneous linear transport equation yields
\begin{equation}
\eta_t=-\int_0^tU_v(t,s)\nabla\cdot(p_sb_s)\,ds.
\label{eq:duhamel-app}
\end{equation}
Setting $t=1$ gives the terminal first-variation formula used in
Theorem~\ref{thm:propagated-defect}.  The stability assumption
\eqref{eq:transport-stability}, the triangle inequality, and Bochner
integrability give
\[
\|\eta_1\|_X
\le \int_0^1
\|U_v(1,t)\nabla\cdot(p_tb_t)\|_X\,dt
\le \int_0^1K(1,t)\|\nabla\cdot(p_tb_t)\|_Y\,dt,
\]
which is the stated stability estimate.

For a smoothly varying nonuniform Euler grid, let $h(t)$ be a piecewise-smooth interpolation of the local step lengths.  Standard modified-equation interpolation gives, locally on each interval,
\begin{equation}
v_h(t,x)=v(t,x)-\frac{h(t)}2a_v(t,x)+r_h(t,x),
\qquad
\|r_h\|=O(\|h\|_\infty^2),
\label{eq:nonuniform-modified-field-app}
\end{equation}
under the same regularity assumptions as Theorem~\ref{thm:density-bea}, with additional control of the variation of $h$.  Applying the first variation with
$b_t=-\frac12h(t)a_{v_t}$ gives the leading term
\[
\frac12\int_0^1K(1,t)h(t)
\|\nabla\cdot(p_ta_{v_t})\|_Y\,dt.
\]
If the velocity-to-terminal-density map is twice Fr\'echet differentiable and the remainder in \eqref{eq:nonuniform-modified-field-app} is uniformly controlled, the omitted terms are $O(\|h\|_\infty^2)$ in $X$.  This proves the Euler specialization of \eqref{eq:order-r-final-bound}.

\begin{remark}[Choice of terminal and source norms]
The theorem is deliberately norm-agnostic.  A natural source norm is $Y=\dot H^{-1}(p_t)$ because Proposition~\ref{prop:hminus-main} identifies it with the weighted-gradient component of the acceleration.  The terminal norm $X$ may be another negative Sobolev norm, a dual-Lipschitz norm, or a problem-specific weak metric, provided the transport evolution admits \eqref{eq:transport-stability}.  The propagation factor $K(1,t)$ depends on this choice.
\end{remark}

\begin{remark}[Residual-only schedule]
Setting $K(1,t)=1$ gives the local marginal-defect schedule and is exact when the chosen transport norm is nonexpansive along the reference flow.  Directional amplification in Appendix~\ref{app:unified-new-experiments} provides a propagation-aware extension.
\end{remark}

\subsection{Offline profile cost and amortization}
Let $M$ be the number of profile times, $S$ the number of profile particles, and $H$ the number of trace probes.  A schematic model-equivalent profile cost is
\begin{equation}
C_{\rm prof}
\approx MS\bigl(c_{\rm acc}+Hc_{\rm tr}\bigr),
\label{eq:profile-cost}
\end{equation}
where $c_{\rm acc}$ accounts for evaluating or differentiating the material acceleration and $c_{\rm tr}$ accounts for one divergence probe.  The constants depend on whether automatic differentiation, finite differences, JVPs, or VJPs are used, so particle--probe counts should not be conflated with exact wall-clock NFE.

If one offline profile is reused for $B$ generated samples, then a sampler with $N$ field evaluations per sample has amortized model-equivalent cost
\begin{equation}
C_{\rm amortized}(B)=N+\frac{C_{\rm prof}}{B}.
\label{eq:amortized-cost}
\end{equation}
Consequently, a schedule can be attractive for large batches even when profiling is expensive.  Appendix~\ref{app:unified-new-experiments} reports implementation-specific CPU wall-clock measurements and the resulting break-even batch sizes; they complement sampler NFE but do not define hardware-independent model-equivalent costs.

\subsection{A particle-only dual residual estimator}
The negative norm admits a variational form that avoids an explicit score or divergence.  For
$f_t=\nabla\cdot(p_ta_{v_t})$,
\begin{align}
\|f_t\|_{\dot H^{-1}(p_t)}^2
&=\sup_{\phi:\,\mathbb E_{p_t}\phi=0}
\left\{2\langle f_t,\phi\rangle
-\mathbb E_{p_t}\|\nabla\phi\|^2\right\} \\
&=\sup_{\phi}
\left\{-2\mathbb E_{p_t}\langle a_{v_t},\nabla\phi\rangle
-\mathbb E_{p_t}\|\nabla\phi\|^2\right\}.
\label{eq:dual-particle-residual}
\end{align}
The second line follows by integration by parts.  Consequently, samples $x_i\sim p_t$ and evaluations of $a_{v_t}(x_i)$ are sufficient to fit a critic $\phi_\omega$ and estimate the geometric residual without separately estimating $\nabla\log p_t$ or $\nabla\cdot a_v$.  At the population optimum, $\nabla\phi_\star=-P_{\nabla,p_t}a_{v_t}$ and the objective equals $\|P_{\nabla,p_t}a_{v_t}\|_{L^2(p_t)}^2$.  Appendix~\ref{app:unified-new-experiments} validates a finite RBF version of this estimator on the hard-GMM path.  The controlled result supplies direct evidence that the variational route recovers a useful schedule without access to the score.

%% file: appendices/C_mixtures_and_practical_parameterizations.tex
\section{Mixtures and Practical Parameterizations}
\label{app:mixtures}

The Gaussian examples show exact solvability.  For multimodal densities, exact cancellation is harder, but solenoidal feasible directions remain explicit.  Let
\begin{equation}
    p_t(x)=\sum_{m=1}^M\pi_m p_{m,t}(x),
    \qquad
    r_m(x,t)=\frac{\pi_m p_{m,t}(x)}{p_t(x)}.
\end{equation}
If $\nabla\cdot(p_{m,t}u_{m,t})=0$ for every component, then
\begin{equation}
    u_t(x)=\sum_{m=1}^M r_m(x,t)u_{m,t}(x)
    \label{eq:mixture-gauge}
\end{equation}
obeys $\nabla\cdot(p_tu_t)=0$, because
\[
    p_tu_t=\sum_m\pi_m p_{m,t}u_{m,t}.
\]
For Gaussian mixtures in two dimensions, a concrete choice is
\begin{equation}
    u_{m,t}(x)=\omega_m(t)R(x-\mu_m(t)),
    \qquad
    R=\begin{pmatrix}0&-1\\1&0\end{pmatrix}.
\end{equation}
This yields a multimodal gauge that rotates locally around each component center.  It is exactly marginal-preserving in the continuum and can be optimized through the projected objective \eqref{eq:bias-objective}.

For general two-dimensional densities, the constraint can be enforced through a stream function:
\begin{equation}
    p_tu_t=\nabla^\perp\psi_t=(-\partial_2\psi_t,\partial_1\psi_t),
    \qquad
    u_t=\frac{\nabla^\perp\psi_t}{p_t}.
    \label{eq:stream}
\end{equation}
In computations one can use $\psi_t=p_t\chi_t$ or a regularized denominator $p_t+\varepsilon$ to avoid instabilities in low-density regions.  In higher dimensions, one may parameterize $p_tu_t$ as the divergence of a skew-symmetric matrix potential.

%% file: appendices/I_additional_explicit_examples.tex
\section{Additional Explicit Examples}
\label{app:examples}

This appendix records explicit examples used to test and stress the theory.  The goal is to separate five phenomena: disappearance of gauge freedom, exact cancellation, non-Gaussian cancellation, multimodal feasibility, and harmful pure gauges.

\subsection{One-dimensional obstruction}

Let $M=\R$, $p_t(x)>0$, and assume $p_t(x)u_t(x)\to0$ as $|x|\to\infty$.  The solenoidal constraint becomes
\begin{equation}
    \partial_x(p_tu_t)=0 .
\end{equation}
Thus $p_tu_t=c(t)$, and the decay condition forces $c(t)=0$.  Hence $u_t\equiv0$.

\begin{proposition}[No nontrivial decaying gauges in one dimension]
Under the assumptions above, $\cU_p=\{0\}$.  Therefore the optimization problem \eqref{eq:main-opt} is intrinsically multidimensional.
\end{proposition}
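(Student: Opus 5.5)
The plan has two steps: reduce the weak solenoidal constraint to a statement that $p_tu_t$ has zero distributional derivative, then kill the resulting constant with the decay hypothesis. Fix a time $t$ and take $u_t\in\cU_{p_t}$. By the definition in \eqref{eq:weak-solenoidal} with $\mathcal T=C_c^\infty(\R)$, we have $\int p_tu_t\,\phi'\,dx=0$ for every $\phi\in C_c^\infty(\R)$. Set $g=p_tu_t$. It lies in $L^1_{\rm loc}$, because $u_t\in L^2(p_t)$ and Cauchy--Schwarz give $\int_K|g|\le(\int_K p_t)^{1/2}(\int p_t|u_t|^2)^{1/2}$. The identity above says exactly that the distributional derivative $g'$ vanishes.

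For the first step I will invoke the standard lemma that an $L^1_{\rm loc}$ function on a connected interval with zero distributional derivative is a.e.\ equal to a constant $c(t)$. The argument is short: fix $\theta\in C_c^\infty$ with $\int\theta=1$. For any $\chi\in C_c^\infty$, write $\chi-(\int\chi)\theta=\phi'$, where $\phi(x)=\int_{-\infty}^x(\chi-(\int\chi)\theta)$ is compactly supported. This yields $\int g\chi=(\int g\theta)\int\chi$, so $g=c(t)$ a.e. In the smooth setting this collapses to the elementary computation already displayed before the proposition.

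For the second step I use decay. The hypothesis $p_tu_t\to0$ as $|x|\to\infty$, read for the a.e.\ representative, forces $c(t)=0$. Since $p_t>0$ everywhere, we get $u_t=0$ $p_t$-a.e., that is, $u_t=0$ in $L^2(p_t)$. This holds for a.e.\ $t$, so $\cU_p=\{0\}$. The feasible set of \eqref{eq:main-opt} is then the single point $u=0$, and the objective there is just $\cB(v)$. Hence nontrivial representative selection requires $d\ge2$.

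The only delicate point I expect is what "decay" means for a merely weak $L^2(p_t)$ field. If $g$ is only defined a.e., a pointwise limit is ambiguous. I would handle this by noting that $g$ is a.e.\ constant, so any reasonable decay notion forces the constant to vanish. This covers essential limits, and also integrability of $g$: a nonzero constant is not in $L^1(\R)$, and $\int|g|\le\|u_t\|_{L^2(p_t)}$ by Cauchy--Schwarz with $\int p_t=1$. In fact this last observation makes the decay assumption automatic for $u_t\in L^2(p_t)$, which I would remark on.
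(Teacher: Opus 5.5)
Your argument is correct and follows the paper's proof: the constraint reduces to $\partial_x(p_tu_t)=0$, so $p_tu_t=c(t)$, the constant is forced to vanish, and $p_t>0$ gives $u_t=0$. Your additions are valid refinements of the paper's smooth, decay-based argument: the du Bois-Reymond step handles the weak formulation, and the Cauchy--Schwarz bound $\int|p_tu_t|\le\|u_t\|_{L^2(p_t)}$ shows the decay hypothesis is automatic for $u_t\in L^2(p_t)$ on $\R$.
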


\subsection{Conformal anisotropic Gaussian path}

Let
\begin{equation}
    p_t=\mathcal N(0,s(t)^2D),
    \qquad
    D=\diag(a^2,b^2),
    \qquad a,b>0,
\end{equation}
and consider the scaling lift
\begin{equation}
    v_t(x)=\beta(t)x,
    \qquad
    \beta(t)=\frac{\dot s(t)}{s(t)} .
\end{equation}
Let
\begin{equation}
    J=\begin{pmatrix}0&-1\\1&0\end{pmatrix},
    \qquad
    A_t=c(t)JD^{-1},
    \qquad
    u_t(x)=A_tx .
\end{equation}
Since $\Sigma_t=s(t)^2D$, we have
\begin{equation}
    A_t\Sigma_t+\Sigma_tA_t^\top
    =c(t)s(t)^2J-c(t)s(t)^2J=0,
\end{equation}
so $u_t$ is $p_t$-solenoidal.  The corrected lift is $w_t=(\beta I+A_t)x$.  Its acceleration matrix is
\begin{equation}
    M_t=\dot B_t+B_t^2,
    \qquad B_t=\beta I+A_t.
\end{equation}
Using $A_t^2=-c(t)^2(a b)^{-2}I$, we obtain
\begin{equation}
    M_t=
    \left(\frac{\ddot s(t)}{s(t)}-\frac{c(t)^2}{a^2b^2}\right)I
    +(\dot c(t)+2\beta(t)c(t))JD^{-1}.
\end{equation}
The second term is weighted-solenoidal for the conformal Gaussian.  Hence the density-active acceleration vanishes whenever
\begin{equation}
    c(t)^2=a^2b^2\frac{\ddot s(t)}{s(t)} .
\end{equation}

\begin{proposition}[Exact anisotropic cancellation]
If $\ddot s(t)/s(t)>0$, then the conformal anisotropic Gaussian path admits a nonzero linear solenoidal gauge that cancels the leading Euler density-level bias.  The gauge is given by
\begin{equation}
    u_t(x)=ab\sqrt{\frac{\ddot s(t)}{s(t)}}\,JD^{-1}x .
\end{equation}
\end{proposition}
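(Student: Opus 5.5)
The plan is to reduce everything to the finite-dimensional Gaussian criterion of Theorem~\ref{thm:gaussian-main}. For a linear lift $w_t(x)=B_tx$ with $p_t=\mathcal N(0,\Sigma_t)$, we need two conditions. First, $B_t$ must satisfy the Lyapunov constraint \eqref{eq:lyapunov}. Second, the leading Euler density bias must vanish, and by that theorem this is equivalent to $M_t\Sigma_t+\Sigma_tM_t^\top=0$ for $M_t=\dot B_t+B_t^2$, i.e.\ \eqref{eq:invisible-gauss}. We take $\Sigma_t=s^2D$ and $B_t=\beta I+A_t$ with $A_t=c(t)JD^{-1}$ and $c(t)=ab\sqrt{\ddot s/s}$. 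We assume $s\in C^3$ and $\ddot s/s>0$ on the time interval under consideration, so that $c$ is $C^1$ and $\dot B_t$ exists.

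\textbf{Step 1 (feasibility and nontriviality).} The scaling part satisfies $\beta\Sigma+\Sigma\beta=2\beta s^2D=\dot\Sigma_t$. For the skew part, $A_t\Sigma_t=cs^2J$ and $\Sigma_tA_t^\top=cs^2DD^{-1}J^\top=-cs^2J$, so these cancel. Hence $u_t=A_tx$ is $p_t$-solenoidal and $w_t$ transports the path. Since $c>0$ under the hypothesis, $u_t\neq0$.

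\textbf{Step 2 (acceleration).} First check directly that $(JD^{-1})^2=-(ab)^{-2}I$, so $A_t^2=-c^2(ab)^{-2}I$. Then expand
\[
B_t^2=\beta^2I+2\beta A_t+A_t^2,\qquad \dot B_t=\dot\beta I+\dot c\,JD^{-1}.
\]
Using $\dot\beta+\beta^2=\ddot s/s$, this gives
\[
M_t=\Bigl(\tfrac{\ddot s}{s}-\tfrac{c^2}{a^2b^2}\Bigr)I+(\dot c+2\beta c)JD^{-1}.
\]

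\textbf{Step 3 (density-active part).} Insert $M_t$ into $M_t\Sigma_t+\Sigma_tM_t^\top$. The $JD^{-1}$ term contributes $(\dot c+2\beta c)s^2(J+J^\top)=0$ by the same computation as in Step~1, so it is weighted-solenoidal. The scalar term contributes $2\bigl(\tfrac{\ddot s}{s}-\tfrac{c^2}{a^2b^2}\bigr)s^2D$. This vanishes exactly when $c^2=a^2b^2\,\ddot s/s$, which our choice of $c$ enforces. Theorem~\ref{thm:gaussian-main} then gives $\nabla\cdot(p_ta_w)=0$, i.e.\ cancellation of the leading Euler density bias, and by \eqref{eq:local-w2-euler} the local $W_2$ defect is $o(h^2)$.

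\textbf{Main obstacle.} The algebra is routine; the delicate point is regularity. The construction needs $\dot c$, which requires $\ddot s/s$ to stay positive on an open time interval and $s\in C^3$. I would state the proposition on such an interval, or pointwise in $t$ when only the instantaneous coefficient is needed. I would also note that the skew term's contribution to $M_t$ never matters, because it is killed by the weighted projection.
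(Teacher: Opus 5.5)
Your proposal is correct and follows the paper's own argument essentially step for step: you check $A_t\Sigma_t+\Sigma_tA_t^\top=0$, compute $M_t$ using $A_t^2=-c^2(ab)^{-2}I$, note that the $JD^{-1}$ term is weighted-solenoidal, and read off the cancellation condition $c^2=a^2b^2\,\ddot s/s$. Your regularity caveat ($s\in C^3$ and $\ddot s/s>0$ on an open interval, so that $\dot c$ and hence $M_t$ exist) is a sensible sharpening of a hypothesis the paper leaves implicit.
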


This example is not rotationally isotropic in Euclidean coordinates; the correct rotation is skew only in the covariance-weighted geometry.

\subsection{Radial densities and centrifugal compensation}

Let $p_t(x)=\rho_t(r)$ on $\R^2$, where $r=\|x\|$.  Consider a radial feasible lift
\begin{equation}
    v_t=b_t(r)e_r,
\end{equation}
where $b_t$ satisfies the radial continuity equation
\begin{equation}
    \partial_t\rho_t+\frac1r\partial_r(r\rho_tb_t)=0 .
\end{equation}
Any angular field
\begin{equation}
    u_t=c_t(r)e_\theta
\end{equation}
is $p_t$-solenoidal because
\begin{equation}
    \nabla\cdot(\rho_t c_t(r)e_\theta)=\frac1r\partial_\theta(\rho_t c_t(r))=0 .
\end{equation}
For $w_t=b_t(r)e_r+c_t(r)e_\theta$, the polar acceleration components are
\begin{equation}
    a_r=\partial_tb+b\partial_rb-\frac{c^2}{r},
    \qquad
    a_\theta=\partial_tc+b\partial_rc+\frac{bc}{r} .
\end{equation}
The angular component is density-invisible for radial $p_t$, whereas the radial component is density-active.  Thus the choice
\begin{equation}
    c_t(r)^2=r\left(\partial_tb_t(r)+b_t(r)\partial_rb_t(r)\right)
    \label{eq:radial-c}
\end{equation}
whenever the right-hand side is nonnegative, cancels the density-active acceleration.

\begin{proposition}[Centrifugal cancellation]
For radial density paths in two dimensions, angular solenoidal gauges can cancel the leading Euler density bias whenever \eqref{eq:radial-c} is nonnegative.  The mechanism is centrifugal: the term $-c^2/r$ offsets the material radial acceleration.
\end{proposition}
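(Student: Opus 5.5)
The plan is to reduce the statement to the pointwise algebra already isolated in \eqref{eq:radial-c}, and to use the radial symmetry of $p_t$ to separate a density-invisible component of $a_w$ from a density-active one. There are three ingredients, taken in order.

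First, I record feasibility. The angular field $u_t=c_t(r)e_\theta$ lies in $\cU_{p_t}$ by the computation displayed above. A point to verify is that $\nabla\cdot(\rho_t c_t e_\theta)=r^{-1}\partial_\theta(\rho_t c_t)=0$ holds in the weak sense of \eqref{eq:weak-solenoidal}. This follows by integrating $\langle \rho_t c_t e_\theta,\nabla\phi\rangle$ in polar coordinates, where only $r^{-1}\partial_\theta\phi$ appears and integrates to zero over each circle. To avoid the origin singularity, I assume $c_t(r)=O(r)$ near $0$, which is satisfied by the choice \eqref{eq:radial-c} whenever the material radial acceleration is bounded. By the gauge-invariance proposition, $w_t=b_te_r+c_te_\theta$ then transports $p_t$.

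Second, I compute the material acceleration $a_w=\partial_tw+\nabla w\,w$ in polar frame. Using $\partial_\theta e_r=e_\theta$ and $\partial_\theta e_\theta=-e_r$, this gives the stated $a_r$ and $a_\theta$; this step is a routine calculation. I then split
\[
a_w=a_r(r)e_r+a_\theta(r)e_\theta .
\]
The term $a_\theta(r)e_\theta$ is again a radial function times $e_\theta$, so by the same argument as in the first step it lies in $\cU_{p_t}$. The term $a_r(r)e_r=\nabla\Phi$ with $\Phi(r)=\int_0^r a_r$ is a gradient. By the orthogonal splitting \eqref{eq:helmholtz-closed}, this gives
\[
P_{\nabla,p_t}a_w=a_r(r)e_r .
\]
Proposition~\ref{prop:hminus-main} then gives $\nabla\cdot(p_ta_w)=\nabla\cdot(p_t a_re_r)$.

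Third, I set $c_t(r)=\sqrt{r(\partial_tb+b\partial_rb)}$ whenever this quantity is nonnegative. This makes $a_r\equiv0$, hence $P_{\nabla,p_t}a_w=0$, and by Theorem~\ref{thm:local-w2-defect} (Euler case) and \eqref{eq:residual} the leading $h^2$ density bias vanishes. The centrifugal interpretation is read off from the $-c^2/r$ term in $a_r$.

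I expect the main obstacle to be regularity rather than algebra. Membership of $\Phi$ in the gradient closure $\mathcal G_{p_t}$ and smoothness of $c_t$ require some care at points where $\partial_tb+b\partial_rb$ touches zero, since the square root can fail to be $C^1$ there. Behaviour at $r=0$ also needs attention. I would handle both by assuming the nonnegativity condition holds strictly, or with square-root-smooth structure, on the time interval considered. I would check integrability against $\rho_t$ via the standing decay assumptions, and read the statement at the level of the weak identities used in Section~\ref{sec:density-active-geometry}.
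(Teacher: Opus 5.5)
Your proposal is correct and follows the paper's own argument: compute $a_w$ in the polar frame, note that $a_\theta(r)e_\theta$ is $p_t$-solenoidal while $a_r(r)e_r$ carries the density-active part, and choose $c_t$ by \eqref{eq:radial-c} so that $a_r\equiv0$; your explicit appeal to the splitting \eqref{eq:helmholtz-closed} simply makes precise the paper's ``density-invisible/density-active'' remark. One small slip: bounded radial acceleration only yields $c_t=O(\sqrt r)$ from \eqref{eq:radial-c}. The $O(r)$ behaviour you want follows instead because $\partial_tb+b\,\partial_rb=O(r)$ for any smooth radial field, and it matters for smoothness of $w_t$ at the origin, not for weak solenoidality, where the polar Jacobian already absorbs the $1/r$. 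Note also that once $a_r\equiv0$ you have $a_w=a_\theta e_\theta\in\mathcal U_{p_t}$ directly, so the gradient-closure question for $\Phi$ never arises for the cancellation claim.
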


\subsection{Mixture superposition}

Let
\begin{equation}
    p_t(x)=\sum_{m=1}^M\pi_m p_{m,t}(x),
    \qquad
    r_m(x,t)=\frac{\pi_mp_{m,t}(x)}{p_t(x)}.
\end{equation}
Suppose $u_{m,t}$ is solenoidal with respect to the $m$-th component:
\begin{equation}
    \nabla\cdot(p_{m,t}u_{m,t})=0 .
\end{equation}
Define
\begin{equation}
    u_t(x)=\sum_{m=1}^M r_m(x,t)u_{m,t}(x).
\end{equation}
Then
\begin{equation}
    p_tu_t=\sum_m\pi_mp_{m,t}u_{m,t},
\end{equation}
and therefore
\begin{equation}
    \nabla\cdot(p_tu_t)=0 .
\end{equation}

\begin{proposition}[Mixture of componentwise gauges]
A responsibility-weighted mixture of componentwise solenoidal gauges is exactly solenoidal with respect to the mixture density.  For a two-dimensional Gaussian mixture, a concrete choice is
\begin{equation}
    u_{m,t}(x)=\omega_m(t)R(x-\mu_m(t)),
    \qquad
    R=\begin{pmatrix}0&-1\\1&0\end{pmatrix}.
\end{equation}
\end{proposition}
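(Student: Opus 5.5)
The argument has two parts: first an algebraic identity showing that the responsibility-weighted field reproduces the componentwise fluxes, then a direct check that the proposed Gaussian rotation is componentwise solenoidal. Linearity of the divergence does the rest.

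\emph{Flux identity.} By definition of the responsibilities, $r_m(x,t)\,p_t(x)=\pi_m p_{m,t}(x)$ wherever $p_t>0$, which holds everywhere under the standing positivity assumption of Section~3. Multiplying the definition of $u_t$ by $p_t$ therefore gives
\[
p_tu_t=\sum_m\pi_m p_{m,t}u_{m,t}.
\]
Taking the divergence term by term and applying the componentwise hypothesis $\nabla\cdot(p_{m,t}u_{m,t})=0$ yields $\nabla\cdot(p_tu_t)=0$.

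\emph{Weak formulation.} To match the weak definition \eqref{eq:weak-solenoidal}, I would also check the claim against test gradients. For $\phi\in\mathcal T$,
\[
\int p_t\langle u_t,\nabla\phi\rangle\,dx=\sum_m\pi_m\int p_{m,t}\langle u_{m,t},\nabla\phi\rangle\,dx=0.
\]
This requires each $u_{m,t}\in L^2(p_{m,t})$, which in turn gives $u_t\in L^2(p_t)$ by Jensen's inequality applied to the convex combination $\sum_m r_m u_{m,t}$.

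\emph{Gaussian example.} Take $p_{m,t}=\mathcal N(\mu_m,\sigma_m^2 I)$. Its gradient is $\nabla p_{m,t}=-p_{m,t}(x-\mu_m)/\sigma_m^2$. For $u_{m,t}(x)=\omega_m R(x-\mu_m)$ we compute
\[
\nabla\cdot(p_{m,t}u_{m,t})=p_{m,t}\operatorname{tr}(\omega_m R)+\langle\nabla p_{m,t},\omega_m R(x-\mu_m)\rangle.
\]
The first term vanishes because $R$ is skew, so $\operatorname{tr}R=0$. The second term is proportional to $\langle x-\mu_m,R(x-\mu_m)\rangle$, which vanishes for the same reason.

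\emph{Main obstacle.} The only delicate point is anisotropic components. For $p_{m,t}=\mathcal N(\mu_m,\Sigma_m)$, the rotation $R$ must be replaced by a matrix of the form $J\Sigma_m^{-1}$, as in the conformal anisotropic example above; this keeps the component skew in the covariance-weighted geometry. The proposition as stated implicitly assumes isotropic components.
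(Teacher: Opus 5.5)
Your proof is correct and follows the paper's argument: the flux identity $p_tu_t=\sum_m\pi_m p_{m,t}u_{m,t}$, followed by linearity of the divergence. Your weak-form and $L^2(p_t)$ checks, your explicit verification that $\omega_mR(x-\mu_m)$ is solenoidal for $\mathcal N(\mu_m,\sigma_m^2I)$, and your remark that the concrete choice requires isotropic components (otherwise $R$ must be replaced by $J\Sigma_m^{-1}$) are all correct, and they fill in details the paper leaves implicit.
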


This gives a feasible multimodal family.  Inside each component, the construction behaves like Gaussian rotation; near mode boundaries, the responsibilities create nontrivial interactions.

\subsection{A harmful pure gauge on the torus}

Let $M=\mathbb T^2$ and $p\equiv1$.  Every divergence-free field preserves the uniform density in the continuum.  Take the stream function
\begin{equation}
    \psi(x,y)=\sin x\sin y,
\end{equation}
and
\begin{equation}
    u=\nabla^\perp\psi=(-\sin x\cos y,\cos x\sin y).
\end{equation}
Then $\nabla\cdot u=0$, so the continuum density is stationary.  However, the Euler density-level defect is governed by $\nabla\cdot a_u$, where $a_u=(u\cdot\nabla)u$.  A direct computation gives
\begin{equation}
    \nabla\cdot a_u = \cos(2x)+\cos(2y),
\end{equation}
In particular, this quantity is not identically zero.

\begin{proposition}[Continuum-invisible does not mean numerically harmless]
There exist divergence-free fields on $\mathbb T^2$ that preserve the uniform density exactly in the continuum but produce nonzero leading Euler marginal bias.  Therefore marginal-preserving gauges must be selected by a solver-aware criterion, not arbitrarily.
\end{proposition}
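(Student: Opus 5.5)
The plan is a direct computation followed by an appeal to the Euler specialization of Theorem~\ref{thm:density-bea} and to Theorem~\ref{thm:local-w2-defect}. Since $p\equiv1$ on $\mathbb T^2$ and $u=\nabla^\perp\psi$ with $\psi=\sin x\sin y$, we get $\nabla\cdot(pu)=\nabla\cdot u=\partial_x(-\partial_y\psi)+\partial_y(\partial_x\psi)=0$. So $u\in\mathcal U_p$, and by the gauge-invariance proposition the field $u$, viewed as a velocity, transports the stationary path $p_t\equiv1$. (Equivalently, $w=0+u$ lies in the fiber of the trivial representative $v=0$.) This handles the "continuum-invisible" half.

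Next I compute the Euler acceleration. Because $u$ is time-independent, $a_u=\partial_tu+\nabla u\,u=(u\cdot\nabla)u$. Write $u_1=-\sin x\cos y$ and $u_2=\cos x\sin y$. Then
\[
a_1=u_1\partial_xu_1+u_2\partial_yu_1=\sin x\cos x(\cos^2y+\sin^2y)=\tfrac12\sin 2x,
\qquad
a_2=\tfrac12\sin 2y .
\]
Hence $\nabla\cdot a_u=\cos 2x+\cos 2y\not\equiv0$, which confirms the displayed identity. Equation~\eqref{eq:residual} then gives a continuity residual $-\tfrac h2\nabla\cdot a_u+O(h^2)$. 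Equation~\eqref{eq:euler-exact-density-difference} gives a one-step defect $(\operatorname{Id}+hu)_\#p-p=\tfrac{h^2}{2}(\cos2x+\cos2y)+O(h^3)$, which is a nonzero density perturbation at order $h^2$.

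To turn this into a statement about marginal error rather than only a distributional identity, I use Proposition~\ref{prop:hminus-main}. The observation that makes this step easy is that $a_u=\nabla\bigl(-\tfrac14(\cos2x+\cos2y)\bigr)$ is itself a gradient. Therefore $P_{\nabla,p}a_u=a_u$, and with normalized Lebesgue measure
\[
\|P_{\nabla,p}a_u\|_{L^2(p)}^2=\tfrac14\int(\sin^22x+\sin^22y)=\tfrac14 .
\]
Theorem~\ref{thm:local-w2-defect} then yields $W_2\bigl((\operatorname{Id}+hu)_\#p,p\bigr)=\tfrac{h^2}{4}+o(h^2)$. This value is strictly positive, whereas the trivial representative $v=0$ has zero defect. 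The leading Euler marginal bias is thus nonzero, and the two continuum-equivalent representatives differ at leading order, so the selection among them must use the solver-aware functional $\mathcal B$.

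The only point I expect to need care with is the regularity hypothesis behind Theorem~\ref{thm:local-w2-defect}, namely the regular Wasserstein chart around the uniform density. On the flat torus with smooth $u$ and $p\equiv1$ bounded away from zero, this is standard. If one wants to avoid the chart argument, the nonzero distributional defect from Theorem~\ref{thm:density-bea} already proves the proposition as stated.
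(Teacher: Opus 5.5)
Your proposal is correct and follows the paper's argument. You use the same stream function $\psi=\sin x\sin y$, the same computation $\nabla\cdot a_u=\cos 2x+\cos 2y\not\equiv 0$, and the same appeal to the Euler density-level defect; your further observation that $a_u=\nabla\bigl(-\tfrac14(\cos 2x+\cos 2y)\bigr)$ is an exact gradient, so that $P_{\nabla,p}a_u=a_u$ and $W_2\bigl((\operatorname{Id}+hu)_\#p,p\bigr)=\tfrac{h^2}{4}+o(h^2)$, is a correct quantitative sharpening the paper does not state.
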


\subsection{Curvature-distractor Gaussian gauge}
\label{ex:curvature-distractor}

This example makes precise why the full material-acceleration norm is not the same object as marginal density bias.  Let $d=4$ and
\[
    p_t=\mathcal N(0,s(t)^2I_4),\qquad s(t)=1+t^2 .
\]
Let $J_0=\operatorname{diag}(J_2,J_2)$, where
\[
    J_2=\begin{pmatrix}0&-1\\1&0\end{pmatrix},
\]
and let $Q(t)\in O(4)$ rotate the two complex blocks.  Define
\[
    J(t)=Q(t)J_0Q(t)^\top,
    \qquad
    B_t=\beta(t)I+\sqrt{\kappa(t)}J(t),
    \qquad
    \beta(t)=\frac{\dot s(t)}{s(t)},
    \quad
    \kappa(t)=\frac{\ddot s(t)}{s(t)}.
\]
Since $J(t)^\top=-J(t)$ and $J(t)^2=-I$, the skew part $A_t=\sqrt{\kappa(t)}J(t)$ is $p_t$-solenoidal and satisfies
\[
    A_t^2=-\kappa(t)I.
\]
Therefore the Gaussian second-order matching condition from Theorem~\ref{thm:gaussian-main} holds:
\[
    2B_t\Sigma_tB_t^\top=\ddot\Sigma_t,
    \qquad \Sigma_t=s(t)^2I.
\]
Thus the leading Euler covariance bias vanishes, equivalently the density-active residual is zero.  However,
\[
    a_w(x,t)=(\dot B_t+B_t^2)x
\]
contains the term $\sqrt{\kappa(t)}\dot J(t)x$.  By choosing $Q(t)$ to vary rapidly on a short interval, $\|a_w\|_{L^2(p_t)}$ and full material-acceleration proxies can be made arbitrarily large while the leading density-active residual remains zero.

\begin{proposition}[Curvature can be marginal-invisible]
There exist smooth Gaussian lifts $w_t(x)=B_tx$ transporting the same isotropic covariance path such that the leading Euler marginal covariance bias is zero for all $t$, while the full material-acceleration energy $\mathbb E_{p_t}\|a_w\|^2$ is arbitrarily large on a prescribed small time interval.
\end{proposition}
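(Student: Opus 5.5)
The proof is a direct construction inside the isotropic Gaussian family of Theorem~\ref{thm:gaussian-main}. Fix $s(t)=1+t^2$, so $\kappa=\ddot s/s=2/(1+t^2)>0$ on $[0,1]$, and work in $d=4$. Use the lifts $B_t=\beta(t)I+\sqrt{\kappa(t)}\,J(t)$ with $J(t)=Q(t)J_0Q(t)^\top$, as in the preceding construction. Choose $Q(t)=\exp(\theta(t)S)$ with $S^\top=-S$ fixed. Take $S$ so that it does \emph{not} commute with $J_0$; for instance $S$ can be the generator mixing coordinates $1$ and $3$. Take $\theta$ smooth, constant outside a prescribed interval $I_\delta$, and with $\|\dot\theta\|_{L^\infty(I_\delta)}=\Lambda$ large. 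Then $\dot J=\dot\theta\,Q[S,J_0]Q^\top$. Since $[S,J_0]\neq0$, this gives $\|\dot J(t)\|_F=|\dot\theta(t)|\,c_0$ for a fixed $c_0>0$.

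\textbf{Step 1 (feasibility and zero bias for all $t$).} $J(t)$ is skew and satisfies $J(t)^2=-I$, so $A_t=\sqrt{\kappa}J(t)$ obeys $A_t\Sigma_t+\Sigma_tA_t^\top=s^2(A_t+A_t^\top)=0$. Hence $B_t$ satisfies the Lyapunov constraint \eqref{eq:lyapunov} and $w_t$ transports $\mathcal N(0,s^2I_4)$. Moreover $2B_t\Sigma_tB_t^\top=2s^2(\beta^2 I-A_t^2)=2s^2(\beta^2+\kappa)I=2(\dot s^2+s\ddot s)I=\ddot\Sigma_t$. By Theorem~\ref{thm:gaussian-main} the leading Euler covariance bias, equivalently the density-active acceleration, vanishes identically in $t$, regardless of $\theta$.

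\textbf{Step 2 (large acceleration).} Compute $M_t=\dot B_t+B_t^2=(\dot\beta+\beta^2-\kappa)I+\bigl(\tfrac{d}{dt}\sqrt\kappa+2\beta\sqrt\kappa\bigr)J+\sqrt\kappa\,\dot J$. Then $\mathbb E_{p_t}\|a_w\|^2=s^2\|M_t\|_F^2$. The first two terms are bounded by a constant $C$ independent of $\Lambda$, because $J$ has $\|J\|_F=2$ and all scalar coefficients depend only on $s$. The last term has Frobenius norm $\sqrt\kappa\,c_0|\dot\theta|$. By the reverse triangle inequality, $\|M_t\|_F\ge\sqrt{\kappa_{\min}}\,c_0|\dot\theta(t)|-C$. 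On any subinterval where $|\dot\theta|=\Lambda$, the energy is therefore at least of order $\Lambda^2$, which can be made arbitrarily large. Smoothness holds because $\kappa>0$, so $\sqrt\kappa$ is smooth.

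\textbf{Main obstacle.} The delicate step is ensuring $\dot J\neq0$: a rotation $Q$ commuting with $J_0$ would leave $J$ fixed. This is why I pick $S$ with $[S,J_0]\ne0$ explicitly and verify it. The $1$--$3$ rotation generator against $J_0=\operatorname{diag}(J_2,J_2)$ gives a nonzero commutator, which a quick $4\times4$ check confirms. The rest is bookkeeping.
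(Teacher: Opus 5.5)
Your proposal is correct and follows the paper's own construction: $d=4$, $B_t=\beta I+\sqrt{\kappa}\,Q(t)J_0Q(t)^\top$ with $Q$ varying rapidly on a short interval, so the $\sqrt{\kappa}\,\dot J$ term in $a_w$ blows up while the matching condition of Theorem~\ref{thm:gaussian-main} holds for all $t$. You make two points more precise than the paper does: the explicit generator with $[S,J_0]\neq0$ (genuinely needed, since rotations within the two complex planes commute with $J_0$ and leave $J$ constant) and the formula $\mathbb E_{p_t}\|a_w\|^2=s^2\|M_t\|_F^2$; the one fix is to take $\dot\theta=\Lambda\chi$ with $\chi$ a smooth cutoff equal to $1$ on the prescribed interval, so that $|\dot\theta|=\Lambda$ on a set of positive length and your lower bound holds on that whole interval.
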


This construction is the analytical basis of the curvature-distractor check summarized in Appendix~\ref{app:synthetic-experiments}.  It shows that scheduling by full material acceleration can chase motion inside a gauge fiber rather than motion of the marginal density itself.

%% file: appendices/J_connection_viewpoint.tex
\section{Connection Viewpoint}
\label{app:connection}

Although the main paper is phrased as constrained optimization, the same construction has a geometric interpretation.  Let $\operatorname{Diff}(M)$ act on densities by push-forward and define
\[
\pi:\operatorname{Diff}(M)\to\mathcal P(M),
\qquad
\pi(\varphi)=\varphi_\#p_0.
\]
The fiber over $p$ consists of all particle maps producing the same density.  The vertical tangent space at $p$ is precisely
\[
\mathcal V_p=\{u:\nabla\cdot(pu)=0\}.
\]
Classical optimal transport chooses a horizontal lift by minimizing kinetic energy, yielding gradient velocities.  In contrast, the present paper chooses a horizontal representative by minimizing modified-equation density bias.

\begin{proposition}[Numerical horizontal condition]
A feasible lift $v$ is stationary for the modified-equation bias functional inside its gauge fiber iff
\[
P_{\mathcal V_p}\left[-D_t^v b_v+(\nabla v)^\top b_v\right]=0.
\]
Thus the solver induces a horizontal distribution distinct from the Otto/optimal-transport horizontal distribution.
\end{proposition}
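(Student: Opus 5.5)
The statement is the numerical horizontal condition: a feasible lift $v$ is stationary for $\cB$ inside its fiber iff $P_{\mathcal V_p}[-D_t^v b_v+(\nabla v)^\top b_v]=0$. I would reduce it to the stationarity part of Theorem~\ref{thm:projected-main}, after identifying the vertical tangent space with the gauge class. The identification comes first. The fiber $\pi^{-1}(p_t)$ consists of particle maps whose push-forward is $p_t$. A curve $\varphi_\varepsilon$ in this fiber with $\partial_\varepsilon\varphi_\varepsilon|_{0}=u\circ\varphi$ satisfies
\[
\frac{d}{d\varepsilon}(\varphi_\varepsilon)_\#p_0=-\nabla\cdot(p_t u)=0 .
\]
So the vertical space is $\mathcal V_{p_t}=\{u:\nabla\cdot(p_tu)=0\}$. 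In the weak formulation of Definition~1 this is exactly $\cU_{p_t}$, and $P_{\mathcal V_p}=P_{\cU_p}$ is the orthogonal complement of the closed gradient space $\mathcal G_{p_t}$ in the splitting \eqref{eq:helmholtz-closed}.

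Next I would recall the first variation. For $u\in\cU_p$ vanishing at the temporal endpoints, \eqref{eq:first-variation} and the transport integration by parts \eqref{eq:transport-ibp} give
\[
D\cB(v)[u]=\int_0^1\langle z_v,u\rangle_{p_t}\,dt,\qquad z_v=-D_t^vb_v+(\nabla v)^\top b_v .
\]
I would check two points here. The first is self-adjointness and idempotence of $P_{\nabla,p_t}$, which lets the derivative of $\tfrac12\|P_\nabla a\|^2$ be written as $\langle b_v,\mathcal L_vu\rangle$. The second concerns the term $\nabla u\,v=(v\cdot\nabla)u$: it combines with $\partial_tu$ into $D_t^vu$, and only that combination is integrated by parts. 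The remaining term $\nabla v\,u$ is transposed pointwise to give $(\nabla v)^\top b_v$.

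Finally, stationarity inside the fiber means $D\cB(v)[u]=0$ for all admissible $u\in\cU_p$. Splitting $z_v$ orthogonally, its gradient part pairs to zero with every $u\in\cU_p$, so the condition becomes $\langle P_{\cU_p}z_v,u\rangle_{L^2(p_tdt)}=0$ for all such $u$. Taking $u=P_{\cU_p}z_v$ forces the projection to vanish, and the converse is immediate.

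The main obstacle is this last substitution. The test class consists of fields vanishing at the temporal endpoints, and these must be dense in $\cU_p$ within $L^2(p_tdt)$ so that $u=P_{\cU_p}z_v$, or an approximating sequence, is admissible. I would first multiply by smooth temporal cutoffs $\chi_n(t)$ with $0\le\chi_n\le 1$ and $\chi_n\to1$. Since $\nabla\cdot(p_t\chi_nu_t)=\chi_n\nabla\cdot(p_tu_t)=0$, the cutoffs preserve the solenoidal constraint, and dominated convergence concludes. The "distinct from Otto" sentence then follows from the Gaussian example of Section~\ref{sec:gauge-consequences-main}: the radial gradient lift is not Euler-optimal, whereas the skew lift is, so the two horizontality notions differ.
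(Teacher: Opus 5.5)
Your proposal is correct and follows the paper's route. The paper gives no separate proof: it identifies $\mathcal V_p$ with $\cU_p$ and invokes the stationarity clause of Theorem~\ref{thm:projected-main}, which rests on the same first variation $D\cB(v)[u]=\int_0^1\langle z_v,u\rangle_{p_t}\,dt$ that you use. Your temporal cutoff is harmless but unnecessary: $b_v\in\mathcal G_{p_t}\perp\cU_{p_t}$ makes the boundary term $\langle b_v,u_t\rangle_{p_t}$ vanish at every $t$.

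One correction to your last step. The radial lift also satisfies the numerical condition. With $\kappa=\ddot s/s$ one gets $a_v=b_v=\kappa x$ and $z_v=-\dot\kappa x$, which is a gradient. This is consistent with $\cB(w^\alpha)\propto(1-\alpha^2)^2$ being critical at $\alpha=0$ for the family in \eqref{eq:alpha-gauge}. So the radial lift's non-optimality does not separate the two notions. The right witness is the skew lift: it satisfies $P_{\mathcal V_p}z_w=0$ trivially because $b_w\equiv0$, yet it is not a gradient field.
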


For a two-parameter density surface $p(s,t)$ with numerical horizontal lifts $v_s,v_t$, the curvature of this solver-induced connection is the vertical component of the commutator
\[
\Omega_p(\partial_sp,\partial_tp)
=
P_{\mathcal V_p}\left(\partial_sv_t-\partial_tv_s+[v_s,v_t]\right).
\]
Nonzero curvature means that two density paths with the same endpoints may induce different particle rearrangements inside the final density fiber.  This offers a geometric interpretation of hidden diversity and holonomy effects, but it is not needed for the projected optimization algorithm.

\paragraph{Relation to Wasserstein splines.}
Second-order optimal-transport models and Wasserstein splines minimize an acceleration action while selecting the marginal curve subject to interpolation constraints \citep{BenamouGallouetVialard2019}.  Here the entire curve $p_t$ is fixed.  The optimization acts vertically, over the family of Lagrangian lifts that realize that curve, and the action is induced by the leading defect of a specified numerical solver.  Thus $\cB$ is not the intrinsic spline energy of the marginal curve; it is a solver-induced second-order transport action on its gauge fiber.